\documentclass[twoside]{article}
\usepackage[accepted]{aistats2025}

\usepackage{tabularx}
\usepackage[utf8]{inputenc} 
\usepackage[T1]{fontenc}    
\usepackage{hyperref}       
\usepackage{url}            
\usepackage{booktabs}       
\usepackage{amsfonts}       
\usepackage{nicefrac}       
\usepackage{microtype}      
\usepackage{xcolor}         

\usepackage{graphicx} 
\usepackage{xspace} 
\usepackage{amsmath}
\usepackage{amsthm}
\usepackage{amssymb}
\usepackage{mathtools}
\usepackage{hyperref}
\usepackage{bm}
\usepackage{setspace} 
\usepackage{cleveref}
\usepackage{algorithmic}
\usepackage{algorithm}
\usepackage[justification=centering]{subfig}
\usepackage{multirow}
\usepackage{array}
\usepackage{xcolor}
\usepackage{cases}
\usepackage{soul}
\usepackage{bbm}

\usepackage{tabularx}
\usepackage{arydshln} 



\graphicspath{ {./images/} }
\newcommand{\mf}[1]{\mathbf{#1}} 
\newcommand{\mc}[1]{\mathcal{#1}} 
\newcommand{\lw}[2]{\mathbf{x}_{#1}^{#2}} 
\newcommand{\gw}[1]{\mathbf{x}_{#1}} 
\newcommand{\h}[2]{h_{#1}(\mathbf{#2})}
\newcommand{\alp}[1]{\alpha_{#1}} 
\newcommand{\bet}[1]{\beta_{#1}} 
\newcommand{\lmh}[1][\tilde]{#1{h}_{i}} 
\newcommand{\mh}[1][\tilde]{#1{h}} 
\newcommand{\grad}[2]{\nabla {#1}(\gw{#2})} 
\newcommand{\mgrad}[4]{\nabla {#1}_{#2}^{\dagger}(\lw{#3}{#4})} 
\newcommand{\dgrad}[4]{\nabla {#1}_{#2}^{\prime}(\lw{#3}{#4})} 
\newcommand{\lgrad}[3]{\nabla {#1_{#3}}{(\lw{#2}{#3})} } 

\newcommand{\lmhgrad}[2]{\nabla {\lmh}^{\prime}(\lw{#1}{#2})}
\newcommand{\bias}[1][t]{B(#1)}
\newcommand{\error}[2]{\mathbf{e}_{#1}^{#2}} 
\newcommand{\avgerror}[1]{\bar{\mathbf{e}}_{#1}}

\newcommand\norm[1]{\left\lVert#1\right\rVert} 
\newcommand{\ub}[2]{{\underbrace{#1}_{{#2}}}} 
\newcommand{\sumk}[2]{\sum\limits_{#1}^{#2}} 
\newcommand{\tsumk}[2]{{\textstyle\sum}_{#1}^{#2}} 

\newcommand{\newstuff}[1]{{\color{blue}#1}}

\DeclarePairedDelimiter\inner{\langle}{\rangle} 
\DeclareMathOperator*{\argmax}{argmax}

\newtheorem{theorem}{Theorem}
\newtheorem{lemma}[theorem]{Lemma}
\newtheorem{assump}{Assumption}

\newtheorem{corollary}[theorem]{Corollary}

%
%


\usepackage[round]{natbib}


\begin{document}

%

%

\twocolumn[

\aistatstitle{On the Convergence of Continual Federated Learning Using Incrementally Aggregated Gradients }

\aistatsauthor{ Satish Kumar Keshri \And Nazreen Shah \And  Ranjitha Prasad }

\aistatsaddress{ IIIT Delhi \And  IIIT Delhi \And IIIT Delhi } 
]
\begin{abstract}
  The holy grail of machine learning is to enable  Continual Federated Learning (CFL) to enhance the efficiency, privacy, and scalability of AI systems while learning from streaming data. The primary challenge of a CFL system is to overcome global catastrophic forgetting, wherein the accuracy of the global model trained on new tasks declines on the old tasks. In this work, we propose \emph{Continual Federated Learning with Aggregated Gradients} (C-FLAG), a novel replay-memory based federated strategy consisting of edge-based gradient updates on memory and aggregated gradients on the current data. We provide convergence analysis of the C-FLAG approach, which addresses forgetting and bias while converging at a rate of $\mathcal{O}(1/\sqrt{T})$ over $T$ communication rounds. We formulate an optimization sub-problem that minimizes catastrophic forgetting, translating CFL into an iterative algorithm with adaptive learning rates that ensure seamless learning across tasks. We empirically show that C-FLAG outperforms several state-of-the-art baselines on both task and class-incremental settings with respect to metrics such as accuracy and forgetting.
\end{abstract}
\section{INTRODUCTION}
The concept of lifelong learning in AI is inspired by the basic human nature of learning and adapting to new experiences and knowledge continuously throughout one's life. Continual learning (CL) is an important aspect of lifelong learning, where the key is to gain knowledge of new tasks without forgetting the previously gained expertise. Centralized lifelong learners are well-known \cite{10444954}. However,  increasing privacy concerns, the volume and complexity of data generated by various sources such as sensors, IoT devices, online platforms, and communication bottlenecks have led to the advent of continual federated learning (CFL) mechanisms. 

A popular use-case of CFL is edge streaming analytics, where a stream of private data is analyzed continuously at the edge-user \cite{georgiou2018streamsight,9732409}, enabling organizations to extract insights for data-driven decisions without transmitting the data to a centralized server. Edge streaming analytics is well-suited for autonomous decision-making memory-constrained applications such as industrial IoT \cite{sen2023replay, husom2023reptile}, smart cities \cite{ul2021incremental}, autonomous systems \cite{shaheen2022continual} and remote monitoring \cite{doshi2020continual}. Conventional ML techniques necessitate retraining in order to adapt to the non-stationary streaming data \cite{zenke2017continual} while computational and memory constraints restrict the simultaneous processing of previous and current datasets, rendering retraining impossible. Further, edge-based analytics without federation results in models that can only learn from its \emph{direct experience} \cite{Fedweit}. A privacy-preserving strategy that allows continuous learning at the global level while circumventing all the above-mentioned issues is Continual Federated Learning (CFL) \cite{yang2024federated}.

In CFL, clients train on private data streams and communicate their local parameters to a server, which subsequently shares the global model with the clients. Several issues in CFL, such as inter-client interference \cite{Fedweit}, dynamic arrival of new classes into FL training \cite{FCIL}, and local and global catastrophic forgetting \cite{FederatedOrthoTraining}, have been studied. Memory-based replay techniques \cite{dupuy2023quantifying,good2023coordinated,li2024towards} offer a direct mechanism to revisit past experiences without requiring access to the historical data of other clients. In particular, episodic replay, wherein a small, fixed-size replay buffer of past data is stored along with new data at each client, has proven to be effective in reducing forgetting \cite{dupuy2023quantifying}. However, the replay buffer permits limited access to the dataset from past tasks, resulting in sampling bias~\cite{Chrysakis2023OnlineBC}. Therefore, it is essential to jointly manage the bias and federation when developing replay-based CFL methods. 

Consider a federated real-time surveillance use-case as depicted in Fig.~\ref{fig:diminishing_rate} (right), where the edge analytics task is the continuous monitoring and analysis of data streams to respond to events as they occur. Let $\mathcal{P}^i$ comprise data from all the previous tasks at the $i$-th client until a given observation point $t = 0$. The $i$-th client samples data from $\mathcal{P}^i$ and stores it in the buffer $\mathcal{M}^i \subset \mathcal{P}^i$ as depicted in Fig.~\ref{fig:diminishing_rate}. Gradient updates on the data in $\mathcal{M}^i$ lead to biased gradients since $\mathcal{M}^i$ consists of a subset of data of all the previous tasks. At $t = 0$, the server transitions from the previous to the current task, and the FL model starts to learn from the current non-stationary dataset $\mathcal{C}^i$. The goal of the CFL framework is to learn from $\mathcal{C}^i, \forall i$, while mitigating the effect of catastrophic forgetting on $\mathcal{P}^i$.

\textbf{Contributions:} We propose the novel \emph{Continual Federated Learning with Aggregated Gradients} (C-FLAG) technique, which is a replay-based CFL strategy consisting of local learning steps and global aggregation at the server. We consider the incremental aggregated gradient (IAG) approach, which significantly reduces computation costs and comes with the benefits of variance reduction techniques \cite{fedtrack}. To jointly mitigate the issues of client drift, bias, and forgetting, an \emph{effective gradient}, which is a combination of a single gradient on the memory buffer and multiple gradients on the current task, is proposed, as depicted in Fig.~\ref{fig:diminishing_rate} (left). Our contributions are as follows:

- We formulate the CFL problem as a smooth, non-convex finite-sum optimization problem and theoretically demonstrate that the proposed C-FLAG approach converges to a stationary point at a rate of $\mc{O}(\frac{1}{\sqrt{T}})$ over $T$ communication rounds.\\
- We formulate an optimization sub-problem parameterized by the learning rate to minimize catastrophic forgetting, allowing the translation of C-FLAG into an iterative algorithm with adaptive learning rates for seamless learning across tasks.

\begin{figure*}
    \includegraphics[scale=0.25]{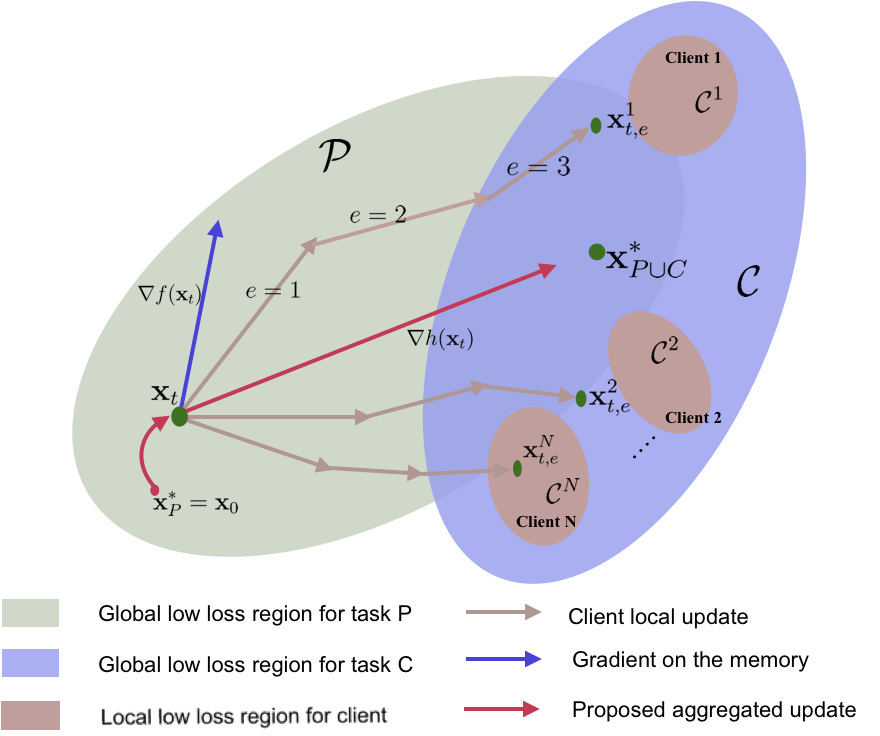} \hspace{5mm}
    \includegraphics[scale=0.25]{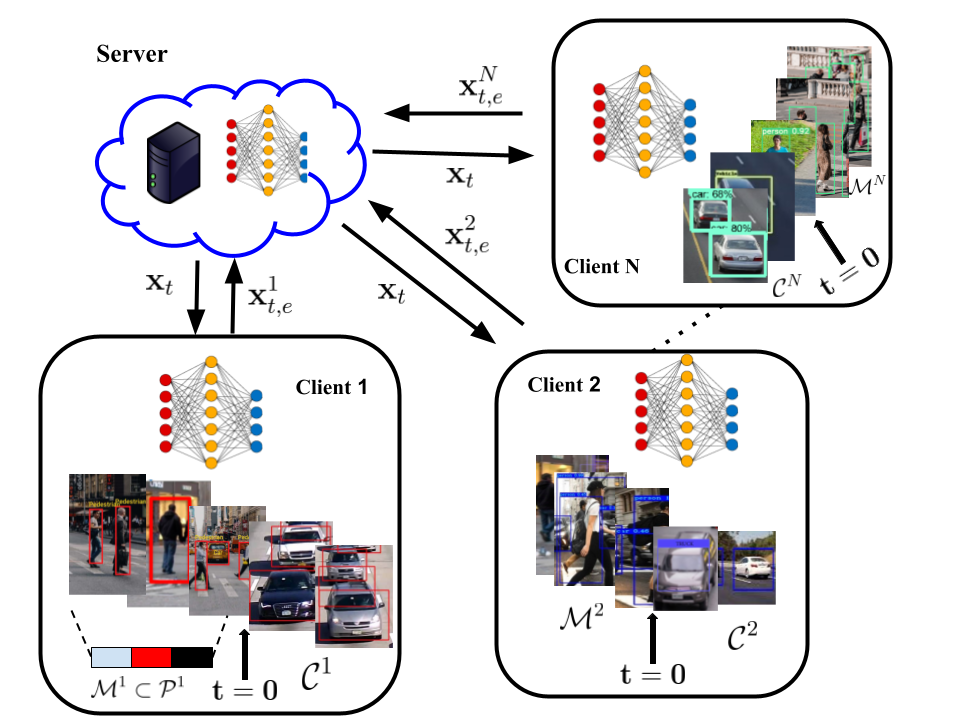}
    \caption{(Left) Illustration of C-FLAG: Initialised at the optimal point of the previous tasks $\mathbf{x}^*_\mathcal{P} = \mathbf{x}_0$, at the $t$-th iteration, $i$-th client takes $E$ local steps towards its local optimal regions (pink regions). To balance learning and forgetting, C-FLAG takes a single step towards local memory and $E$ steps on the local current data. The global aggregated model moves towards a common global minima $\mathbf{x}^*_{\mathcal{P}\cup\mathcal{C}}$. (Right) Real-time surveillance where a subset of previous tasks are stored in memory until $T=0$. Data arriving thereafter is the current task $\mathcal{C}^i$.
    }
    \label{fig:diminishing_rate} 
\end{figure*}
We evaluate C-FLAG on task-incremental FL setups, where it consistently outperforms baseline methods in terms of both average accuracy and forgetting. We also perform ablation studies on data heterogeneity, varying number of clients, and the size/type of replay buffer. The results show that     C-FLAG outperforms well-known and state-of-the-art baselines in mitigating forgetting and enhancing overall model performance. 

To the best of our knowledge, this work is the first of its kind to propose a replay-based CFL framework with convergence guarantees. The crux of the theoretical analysis deals with the handling of bias due to memory constraints and characterizing catastrophic forgetting. While prior FL works typically rely on convex or strongly convex assumptions, our analysis extends to the more general non-convex setting. 

\section{PROBLEM FORMULATION}
\label{sec:problemStatement}

In a typical FL setting, $N$ edge devices collaboratively solve the finite-sum optimization problem given as:
\begin{align}
    \min_{\mathbf{x} \in \mathbb{R}^d} \phi(\mathbf{x}) = \min_{\mathbf{x} \in \mathbb{R}^d} {\textstyle\sum}_{i = 1}^N p_i \phi_i(\mathbf{x})
\end{align}
where $\phi_i(\mf{x})$ is the local loss function at the $i$-th client, $\phi(\mf{x})$ is the global loss function and $p_i$ is the weight assigned to $i$-th client. Each client independently computes gradients on a local private dataset, and the central server receives and aggregates these updates using a predefined strategy \cite{FedAvg}. In contrast, the edge devices in a CFL framework observe private streaming data, and the goal is to adapt the models as the data arrives, without forgetting the knowledge gained from past experiences. Given the global memory and current datasets $\mathcal{C} = \{\mathcal{C}^1, \mathcal{C}^2, \ldots,  \mathcal{C}^N\}$ and $\mathcal{P} = \{\mathcal{P}^1, \mathcal{P}^2, \ldots,  \mathcal{P}^N \}$, where $\mathcal{P}^i$ and $\mathcal{C}^i$ represents the past and the current task at the $i$-th client, the continual learning problem is defined as a smooth finite-sum optimization problem given as
\begin{flalign}
    \min\limits_{\mf{x}\in{\mathbb{R}^d}} \h{}{x} := {\textstyle\sum}_{i=1}^{N} p_i \h{i}{x},
\end{flalign}
where $h(\cdot)$ is a smooth, non-convex function which decomposes into $f(\mf{x})$ and $g(\mf{x})$ as
\begin{flalign}
    \h{}{x} = \frac{|\mc{P}|}{|\mc{P}| + |\mc{C}|} f(\mf{x}) + \frac{|\mc{C}|}{|\mc{P}| + |\mc{C}|} g(\mf{x}).
\end{flalign}
Here, $f(\mf{x})$ and $g(\mf{x})$ represents the restriction of $\h{}{\mf{x}}$ on the datasets $\mathcal{P} = \{\mc{P}^1, \mc{P}^2, \hdots,\mc{P}^N \}$ and $\mathcal{C} = \{\mc{C}^1, \mc{C}^2, \hdots,\mc{C}^N \}$, respectively, and $ |\mc{P}| = {\textstyle\sum}_{i=1}^{N} |\mc{P}^i|$, $ |\mc{C}| = {\textstyle\sum}_{i=1}^{N} |\mc{C}^i|$. Additionally, we also define the global functions $f(\mf{x})$ and $g(\mf{x})$ in terms of the local optimization objectives as $f(\mf{x}) := \h{}{x}|_\mc{P} :={\textstyle\sum}_{i=1}^{N}p_i f_i(\mf{x})$ and $g(\mf{x}) := \h{}{x}|_\mc{C} := {\textstyle\sum}_{i=1}^{N}p_i g_i(\mf{x})$. Here, $f_i(\mf{x})$ and $g_i(\mf{x})$ are the restrictions on the local previous and current datasets, respectively. Episodic memory $\mc{M}^i$ consisting of a fixed-size buffer of size at most $m_0$ stores a subset of the data that arrives prior to the start of the current task ($t=0$) at the $i$-th client \cite{dupuy2023quantifying}. After sampling, it remains fixed until it trains on the current task over $T$ communication rounds, i.e., $\mc{M}^i = \mc{M}_t^i \; \forall t \in \{0,1, \cdots, T-1 \} \text{ and } i \in [N]$. We define the global memory dataset as $\mc{M} := \{\mc{M}^1,\mc{M}^2, \ldots, \mc{M}^N\}$. Since the replay buffer at the edge, $\mc{M}^i$, permits limited access to the dataset from past tasks, gradient-based approaches result in bias ~\cite{Chrysakis2023OnlineBC}.

Streaming data tends to be non-stationary, while the conventional FL framework always assumes stationary data. Consequently, the convergence of any CFL framework is non-intuitive, and hence, it is essential to theoretically prove that the proposed strategy converges for the previous as well as on the new task. 

\section{C-FLAG ALGORITHM}
\label{sec:algorithm}
C-FLAG consists of local learning steps, where an effective gradient is computed at each client, and a global aggregation step is performed at the server. At the $i$-th client, C-FLAG computes an effective gradient which is a combination of a gradient on the local memory buffer $\mc{M}^i$, and $E$ gradient steps on the dataset of the current task $\mc{C}_i$. The $E$ gradient steps is obtained using the update rule as follows:
\begin{flalign}
    \lw{t,k+1}{i} = \lw{t,k}{i} - \bet{t} \left( \grad{g}{t} - \grad{g_i}{t} + \dgrad{g}{i}{t,k}{i} \right) \label{eq: LUR},
\end{flalign}
where $(\grad{g}{t} - \grad{g_i}{t})$ represents the one-time computation of the difference between the global and the local full gradient of the loss function, respectively. Essentially, $\grad{g}{t}$ provides an insight into the global descent direction at the beginning of round $t$. This descent direction is computed at a stale iterate $\mathbf{x}^i_{t,0} = \mathbf{x}_t$, and not at the current iterate. To account for this deviation, \cite{fedtrack} proposes IAG as an approximation to $\nabla g_i(\mathbf{x}^i_{t,k})$, which we represent as  $\dgrad{g}{i}{t,k}{i}$ defined as follows:
\begin{flalign}
    \dgrad{g}{i}{t,k}{i} := \tfrac{1}{|\mc{C}^i|} {\textstyle\sum}_{j\in \mc{C}^i}{} \nabla g_{i,j}(\mf{x}_{t,\tau_{k,j}^i}^{i}).
    \label{eq: def-memory-g-grad}
\end{flalign} 
At each local step $k$, the $i$-th client computes the gradient of a single, say the $j$-th component function of $g_i(\mathbf{x}^i_{t,k})$ given by $g_{i,j}(.)$ for $j \in \mc{C}^i$. For the remaining $r \in \mathcal{C}^i, r \backslash j$ component functions, the most recently computed gradients are retained. In order to track the component function that is updated, we use an index $\tau_{k,j}^i$ which denotes the most recent local step. At the beginning of the first local epoch ($k=0$), the $i$-th client computes the full gradient and sets $\tau_{0,j}^i = 0 \; \forall j \in \mc{C}^i $. 

The evolution of the index $\tau_{k,j}^i$ can be understood using the following toy example. Suppose, the current dataset $\mc{C}^i$ has $3$ data points, i.e., $\mc{C}^i = \{c_1, c_2, c_3 \}$. At the beginning of the $t$-th round $\dgrad{g}{i}{t,0}{i}$ is computed as $\dgrad{g}{i}{t,0}{i} = \frac{1}{3}(\grad{g_{i,c_1}}{0} + \grad{g_{i,c_2}}{0} + \grad{g_{i,c_3}}{0}) = \grad{g_{i}}{0}$. Hence, $\tau_{t,j}^i = 0$ for 
 all $j$. At $k=1$, the client randomly samples $c_1$, computes $\nabla{g_{i,c_1}}(\lw{1}{i})$ and updates $\tau_{t,d_1}^i = 1$. For the other components the previously computed gradients are used, and hence, $\dgrad{g}{i}{t,1}{i} = \frac{1}{3}( \nabla{g_{i,c_1}}(\lw{t,1}{i}) + \nabla{g_{i,c_2}}(\lw{t,0}{i}) +\nabla{g_{i,c_3}}(\lw{t,0}{i}) )$. At $k=2$, suppose the client randomly samples$c_3$, computes the gradient on $c_3$ and updates $\tau_{t,c_3}^i = 2$, which leads to $\dgrad{g}{i}{t,1}{i} = \frac{1}{3}( \nabla{g_{i,c_1}}(\lw{t,1}{i}) + \nabla{g_{i,c_2}}(\lw{t,0}{i}) +\nabla{g_{i,c_3}}(\lw{t,2}{i}))$. This process continues until $k= E-1$. Using delayed gradients at each local epoch, the gradient computation cost drastically decreases as only one component is updated at a time.

To mitigate catastrophic forgetting while transitioning from one task to another, we update the model on both, the memory buffer data and the current data. However, frequent training on the memory data may lead to overfitting and impede learning from the current dataset. Hence, we propose to take one step towards the memory data for every $E$ local step on current data utilizing the memory as a guide for future learning. Accordingly, for each client $i \in [N]$, we obtain the biased gradient on the memory buffer $\mc{M}^i$ as
\begin{flalign}
    \mgrad{f}{i}{}{} = \grad{f_i}{} + {b_i}(\mf{x}),
    \label{eq:biasedGrad}
\end{flalign}
where $b_i(\cdot)$ quantifies the bias introduced due to the sampling of the memory data $\forall i \in [N]$. Additionally, we define the average bias as ${{b}}(\gw{}) = \tsumk{i=1}{N}p_i {b_i}(\mf{x})$.
After $E$ local epochs, the $i$-th client communicates $\Delta \lw{t}{i} =  \lw{t,E}{i} - \alp{t}\mgrad{f}{}{t}{}$, where $\lw{t,E}{i}$ is obtained using \eqref{eq: LUR}. The server updates the global model as; 
\begin{flalign}
\gw{t+1} = \gw{t} - \alp{t} {\textstyle\sum}_{i=1}^{N}p_i \mgrad{f}{}{t}{}  -  \bet{t}{\textstyle\sum}_{i=1}^{N}p_i \lw{t,E}{i},\label{eq:GUR} 
\end{flalign}
where, $\alpha_t$ and $\beta_t$ are the learning rates on the memory and current data, respectively. The steps of the algorithm are presented in Alg.~\ref{alg: A}, and the pseudocode for \texttt{AdapLR} is provided in the supplementary.

\begin{algorithm}[]
	\caption{C-FLAG: Continual Federated Learning with Aggregated Gradients}
	\label{alg: A}
	\begin{algorithmic}[1]
		\REQUIRE Step-size $\alp{}$, $\bet{}$, initial model $\gw{0}$, $Adap Flag$
		\ENSURE $\gw{t}$ for $t = 1,\hdots, T$
		\FOR{$t = 0, \ldots, T-1$}
            \STATE For client $i = 1, \hdots, N$, compute $\grad{g_i}{t}$ and $\mgrad{f}{i}{t}{}$, and transmit to the server. 
            \STATE Server computes $\grad{g}{t}$ and $\mgrad{f}{}{t}{}$, and broadcasts to each client.
            \FOR{client $i = 1, 2, \ldots N $ in parallel}
                
                \STATE Set $\lw{t,0}{i} = \gw{t} $
                \FOR{$k = 0, \hdots, E-1$}
                    \STATE Compute $\dgrad{g}{i}{t,k}{i}$ using \eqref{eq: def-memory-g-grad} and $\lw{t,k+1}{i}$ using \eqref{eq: LUR}.
                \ENDFOR
                \STATE $\Delta \lw{t}{i} = \texttt{AdapLR}(\lw{t,E}{i},\mgrad{f}{}{t}{}, \alp{},\bet{}, AdapFlag ) $.
                \STATE Transmit $\Delta \lw{t}{i}$ to the server.
            \ENDFOR
            \STATE Server computes and broadcasts $\gw{t+1}$ using $\gw{t+1} = \gw{t}- \sumk{i=1}{N}p_i\Delta \lw{t}{i}$.
        \ENDFOR
	\end{algorithmic}
\end{algorithm}
\section{CONVERGENCE ANALYSIS}
\label{sec:convergenceAnalysis}
In this section, we present a theoretical convergence analysis of the proposed memory-based continual learning framework in a non-convex setting. For purposes of brevity, proofs have been delegated to the supplementary. The assumptions are as follows:

\begin{assump} \label{asp: L-smoothness}
(L-smoothness). For all $i \in [N]$, $f_i, g_i, h_i$ are L-smooth. 
\end{assump}
\begin{assump} \label{asp: bounded-bias}
    (Bounded Bias). There exists constants $0 \leq m_i < 1 $ for all $\mf{x} \in \mathbb{R}^d$ such that $\norm{{b_i}(\mf{x})}^{2} \leq m_i\norm{\grad{f_i}{}}^2, \quad \forall i \in [N]$.
\end{assump}
\begin{assump} (Bounded memory gradient).
    There exists $r_{i} \in \mathbb{R}^{+}$, such that $\lVert{\mgrad{f}{i}{t}{}}\rVert \leq r_i \norm{\grad{g}{t}}$ for all $i \in [N]$. \label{Asp:bounded-memory}
\end{assump}
Since the biased gradient, $\mgrad{f}{i}{t}{}$, on the memory data is correlated with the true gradients, $\grad{f_i}{t}$, Assumption~\ref{asp: bounded-bias} is similar to Assumption~4 in \cite{ajalloeian2020convergence}. We also define the expectation over memory datasets till $t$-th global iteration as $\mathbb{E}_{\mc{M}_t} = \mathbb{E}_{[{\mc{M}_0}:{\mc{M}_t}]}$. Further we denote  $\mathbb{E}$ as $\mathbb{E}_{[{\mc{M}_0}:{\mc{M}_T}]}$ over $T$ global iterations. 

As the iterations progress according to algorithm~\ref{alg: A}, two additional quantities, namely an overfitting term, and a catastrophic forgetting term, vary with time. In the following lemma and theorem, we formally introduce these terms and provide a convergence guarantee of C-FLAG on the previous task $\mc{P}$.

\begin{lemma}
\label{thm:grad-f}
    Suppose that the assumptions \ref{asp: L-smoothness}, \ref{asp: bounded-bias} hold,  $\alp{t} < \frac{2}{L(1+m)}$ and $m\in\mathbb{R}^{+}$. For the sequence $\{\gw{t}\}_{t=1}^{T}$ generated by algorithm \ref{alg: A}, we have
    \begin{flalign}
        &\norm{\grad{f}{t}}^2  \leq \frac{1}{\alp{t}[1-\frac{L}{2}\alp{t}(1+m)]} \big( f(\gw{t}) - f(\gw{t+1}) \nonumber\\
        &+ \bias{} +\Gamma(t) \big),
    \end{flalign} 
where $B(t)$ is the overfitting term defined as 
\begin{flalign}
    B(t) &= (L\alp{t}^2 - \alp{t})\langle{\grad{f}{t}, {b}(\gw{t})}\rangle \nonumber\\
    &+ \bet{t}\langle{{b}(\gw{t}), {\textstyle\sum}_{i=1}^{N}{\textstyle\sum}_{k=1}^{E-1}p_i\dgrad{g}{i}{t,k}{i}}\rangle.
\end{flalign}
Further, $\Gamma(t)$ is the forgetting term defined as
\begin{flalign}
   &\Gamma(t) = L\bet{t}^2\lVert{\textstyle\sum}_{i=1}^{N}{\textstyle\sum}_{k=1}^{E-1}p_i\dgrad{g}{i}{t,k}{i}\rVert^2 - \bet{t}(1-L\alp{t}) \nonumber\\&\langle{\mgrad{f}{}{t}{}, {\textstyle\sum}_{i=1}^{N}{\textstyle\sum}_{k=1}^{E-1}p_i\dgrad{g}{i}{t,k}{i}}\rangle. 
   \label{eq:Lem1ForgettingTerm}
\end{flalign}
\end{lemma}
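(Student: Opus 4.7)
The plan is to apply the descent lemma for $L$-smooth $f$ to the one-step update, substitute the biased-gradient decomposition $\mgrad{f}{}{t}{}=\grad{f}{t}+b(\gw{t})$ from \eqref{eq:biasedGrad}, use Assumption \ref{asp: bounded-bias} to bound $\norm{b(\gw{t})}^2$, and regroup the surviving terms so that every bias-bearing inner product lands in $\bias{}$ and every current-task-direction inner product lands in $\Gamma(t)$.

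First I would compactify the one-step update. Telescoping \eqref{eq: LUR} over $k=0,\ldots,E-1$ and aggregating through the server rule \eqref{eq:GUR}, the drift-correction pieces $\bet{t}\sum_i p_i(\grad{g}{t}-\grad{g_i}{t})$ cancel because $\sum_i p_i\grad{g_i}{t}=\grad{g}{t}$. The one-round step therefore takes the form
\[
\gw{t+1}-\gw{t}=-\alp{t}\,\mgrad{f}{}{t}{}-\bet{t}\,\mf{d}_t,\qquad \mf{d}_t:=\tsumk{i=1}{N}\tsumk{k=1}{E-1}p_i\dgrad{g}{i}{t,k}{i},
\]
so $\mf{d}_t$ is exactly the object appearing in both $\bias{}$ and $\Gamma(t)$.

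Next I would invoke $L$-smoothness, $f(\gw{t+1})\le f(\gw{t})+\langle\grad{f}{t},\gw{t+1}-\gw{t}\rangle+\tfrac{L}{2}\norm{\gw{t+1}-\gw{t}}^2$, substitute the compact step, and split $\mgrad{f}{}{t}{}=\grad{f}{t}+b(\gw{t})$. Assumption \ref{asp: bounded-bias} is used in the form $\norm{\mgrad{f}{}{t}{}}^2\le(1+m)\norm{\grad{f}{t}}^2+2\langle\grad{f}{t},b(\gw{t})\rangle$, which is how the $(1+m)$ factor enters the final denominator. The coefficient of $\norm{\grad{f}{t}}^2$ then consolidates to $-\alp{t}[1-\tfrac{L}{2}\alp{t}(1+m)]$, strictly negative under the hypothesis $\alp{t}<\tfrac{2}{L(1+m)}$, which is what allows $\norm{\grad{f}{t}}^2$ to be moved to the left-hand side and the inequality divided through to match the stated bound.

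The main obstacle is the regrouping. The bias $b(\gw{t})$ enters the descent bound through three distinct channels, namely the linear coupling with $\grad{f}{t}$, the quadratic $\norm{\mgrad{f}{}{t}{}}^2$, and the quadratic cross-term with $\mf{d}_t$, and it must be threaded into $\bias{}$ without contaminating $\Gamma(t)$. The algebraic trick that enables a clean separation is rewriting $\langle\grad{f}{t},\mf{d}_t\rangle=\langle\mgrad{f}{}{t}{},\mf{d}_t\rangle-\langle b(\gw{t}),\mf{d}_t\rangle$: the second piece merges with the quadratic $\langle b(\gw{t}),\mf{d}_t\rangle$ contribution to give the second summand of $\bias{}$, while the first piece merges with the quadratic $\langle\mgrad{f}{}{t}{},\mf{d}_t\rangle$ contribution to form the inner product $-\bet{t}(1-L\alp{t})\langle\mgrad{f}{}{t}{},\mf{d}_t\rangle$ inside $\Gamma(t)$. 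The $\norm{\mf{d}_t}^2$ contribution from the quadratic then completes $\Gamma(t)$, with its coefficient coming from the appropriate Young-type split of $\norm{\alp{t}\mgrad{f}{}{t}{}+\bet{t}\mf{d}_t}^2$ so as to reach the stated $L\bet{t}^2$ factor while preserving the cross-term needed above.
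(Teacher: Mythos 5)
Your proposal is correct and follows essentially the same route as the paper's proof: the descent lemma applied to the aggregated server update, the decomposition $\mgrad{f}{}{t}{}=\grad{f}{t}+b(\gw{t})$, the key rewriting $\langle\grad{f}{t},\mf{d}_t\rangle=\langle\mgrad{f}{}{t}{},\mf{d}_t\rangle-\langle b(\gw{t}),\mf{d}_t\rangle$ to separate $\bias{}$ from $\Gamma(t)$, and the bounded-bias assumption to absorb $\norm{b(\gw{t})}^2$ into the $(1+m)$ factor. The only cosmetic remark is that the direct expansion of $\frac{L}{2}\norm{\gw{t+1}-\gw{t}}^2$ already produces the cross term together with a $\frac{L}{2}\bet{t}^2\norm{\mf{d}_t}^2$ coefficient (which is exactly what the paper's own derivation obtains and then upper-bounds by the $L\bet{t}^2$ appearing in the statement), so no additional Young-type split is required.
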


As the quantities $B(t)$ and $\Gamma(t)$ accumulate over time, they significantly degrade the performance of the continual learning framework \cite{nccl}. In particular, the term $\langle{\mgrad{f}{}{t}{}, {\textstyle\sum}_{i=1}^{N}{\textstyle\sum}_{k=1}^{E-1}p_i\dgrad{g}{i}{t,k}{i}}\rangle$ is a key factor that determines interference and transference \cite{a-gem, nccl} between locally aggregated gradients and the gradient on the memory. We provide a detailed discussion on the aspects of interference and transference as related to deriving adaptive learning rates in Sec.~\ref{sec:adapLR}.

To derive the convergence of Alg.~\ref{alg: A} we telescope over the training iterations for the current dataset, and using Lem.~\ref{thm:grad-f} we obtain the following theorem.
 
\begin{theorem}\label{lemma:min-grad-f}
 Suppose that the assumptions \ref{asp: L-smoothness}, \ref{asp: bounded-bias} hold. Given $F = f(\gw{0}) - f(\gw{T})$, the sequence $\{\gw{t}\}_{t=1}^{T}$ generated by algorithm \ref{alg: A} with $\alp{t} = \alp{} = \frac{1}{L(m+1)} \; \forall t \in \{0,1, \cdots, T-1 \} $, and $m\in\mathbb{R}^{+}$, satisfies 
\begin{flalign*}
    \min\limits_{t}  \mathbb{E} \left[\norm{\grad{f}{t}}^2 \right] \leq \frac{2L(1+m)}{T} \bigg(F + \sumk{t=0}{T-1}\mathbb{E}[\Gamma(t)] \bigg).
\end{flalign*}
\end{theorem}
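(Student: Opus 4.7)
The plan is to derive Theorem~\ref{lemma:min-grad-f} as a direct telescoping-sum consequence of Lemma~\ref{thm:grad-f}. First I would substitute the chosen constant step size $\alp{} = 1/[L(m+1)]$ into the per-round coefficient of Lemma~\ref{thm:grad-f}. A direct computation gives
\[
\alp{}\Big[1 - \tfrac{L}{2}\alp{}(1+m)\Big] \;=\; \tfrac{1}{L(1+m)} \cdot \tfrac{1}{2} \;=\; \tfrac{1}{2L(1+m)},
\]
whose reciprocal $2L(1+m)$ is exactly the constant appearing in the theorem. Hence the per-round inequality reads
\[
\norm{\grad{f}{t}}^2 \;\leq\; 2L(1+m)\Big( f(\gw{t}) - f(\gw{t+1}) + B(t) + \Gamma(t) \Big).
\]

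Second, I would sum over $t = 0, 1, \ldots, T-1$. The function-value differences telescope to $f(\gw{0}) - f(\gw{T}) = F$, yielding
\[
\sumk{t=0}{T-1}\norm{\grad{f}{t}}^2 \;\leq\; 2L(1+m)\Big( F + \sumk{t=0}{T-1}B(t) + \sumk{t=0}{T-1}\Gamma(t) \Big).
\]
Third, I would take expectation over the memory sampling $\mc{M}_t$ and argue that $\mathbb{E}[B(t)] = 0$. Both summands defining $B(t)$ are inner products that contain the aggregated bias $\mathbf{b}(\gw{t}) = \sumk{i=1}{N} p_i b_i(\gw{t})$; under a uniform memory-sampling model one has $\mathbb{E}[\mgrad{f}{i}{t}{} \mid \gw{t}] = \grad{f_i}{t}$, hence $\mathbb{E}[b_i(\gw{t})\mid \gw{t}] = 0$. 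A tower-rule argument then collapses the two inner products, leaving only $F + \sum_t \mathbb{E}[\Gamma(t)]$ on the right. Finally, using $\min_t \mathbb{E}\norm{\grad{f}{t}}^2 \leq \tfrac{1}{T}\sum_t \mathbb{E}\norm{\grad{f}{t}}^2$ and dividing by $T$ yields the stated bound.

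The main obstacle is the third step, i.e., justifying the vanishing of $\mathbb{E}[B(t)]$. Because $\mc{M}^i$ is drawn once at $t=0$ and then frozen ($\mc{M}_t^i = \mc{M}_0^i$ for every $t$), every iterate $\gw{t}$ with $t > 0$ is measurable with respect to the initial memory draw, so $\mathbf{b}(\gw{t})$ is not independent of $\grad{f}{t}$ nor of the inner IAG sum $\sumk{i=1}{N}\sumk{k=1}{E-1} p_i \dgrad{g}{i}{t,k}{i}$. Making $\mathbb{E}[B(t)] = 0$ rigorous thus requires either pinning down the probability model on $\mc{M}_0$ so that the conditional-mean-zero property of $b_i$ transfers through the correlated inner products, or absorbing $B(t)$ into the leading term and $\Gamma(t)$ via Cauchy--Schwarz together with Assumption~\ref{asp: bounded-bias} and a Young-inequality split (using that $L\alp{}^2 - \alp{} = -m/[L(1+m)^2] < 0$ so that the first inner product already has a favorable sign). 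The telescoping and $\min\leq\text{average}$ steps themselves are routine once this probabilistic issue is handled.
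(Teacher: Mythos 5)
Your proposal follows essentially the same route as the paper's proof: substitute $\alp{}=\tfrac{1}{L(m+1)}$ into the per-round coefficient of Lemma~\ref{thm:grad-f} to get the constant $2L(1+m)$, telescope the function values, take expectation so that $\mathbb{E}[B(t)]=0$, and finish with $\min_t \le$ average. The paper disposes of the overfitting term exactly via your first suggested route (its Lemma~\ref{lemma:zero-bias}, asserting $\mathbb{E}_{\mc{M}_t}[b(\mf{x})]=0$ under uniform sampling), and the measurability subtlety you flag --- that $\gw{t}$ and the IAG sums are themselves functions of the frozen memory draw $\mc{M}_0$, so the inner products in $B(t)$ do not obviously have zero mean --- is real and is glossed over in the paper's argument rather than resolved.
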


We observe that the expected convergence of $f(\cdot)$ depends on the initialization and the forgetting term. The overfitting term tends to zero in expectation, as shown in the supplementary material. On the other hand, the cumulative forgetting term accumulates with each server iteration. Hence, a tight upper bound on the convergence of the previous task necessitates an upper bound on $\Gamma(t)$, which we do in the sequel.

It is also essential to obtain the convergence guarantee on the current task if the updates are obtained using Alg.~\ref{alg: A}, where both the memory and the current dataset are used together. This leads to deriving the convergence rate of the global loss function, $h(\cdot)$,  which learns jointly with respect to the current task and the replay memory at each client.

\begin{lemma} \label{lemma:h_rate}
Given assumption \ref{asp: L-smoothness}, the sequence $\{\gw{t}\}_{t=1}^{T}$ generated with learning rates $ \alp{t} = \bet{t} = \alp{} = \frac{1}{30LE}$, for the restriction of $h(\cdot)$ on $\mc{M}\cup \mc{C}$ given as $\mh(\gw{t}) = h_{|_{\mc{M}\cup \mc{C}}}(\gw{t})$ we have
\begin{flalign}
    \min\limits_{t} \lVert{\grad{\mh}{t}}\rVert^2 \leq \tfrac{60L}{T}\left(\mh(\gw{0}) - \mh(\gw{T})\right).
\end{flalign}
\end{lemma}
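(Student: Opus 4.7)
The plan is to prove a one-step descent inequality for $\mh$ and telescope in $t$. Since $\mh$ is the restriction of $h$ to $\mc{M}\cup\mc{C}$, its gradient is (up to the $|\mc{M}|,|\mc{C}|$ aggregation weights) a convex combination of the memory gradient $\mgrad{f}{}{t}{}$ and the current-data gradient $\grad{g}{t}$. The server update \eqref{eq:GUR} can be rewritten as $\gw{t+1}-\gw{t} = -\alp{}\mgrad{f}{}{t}{} - \bet{}\sum_{i,k} p_i \dgrad{g}{i}{t,k}{i}$ after the full-gradient correction cancels on aggregation, $\sum_i p_i(\grad{g}{t}-\grad{g_i}{t})=0$. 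Since Assumption~\ref{asp: L-smoothness} makes $\mh$ itself $L$-smooth, the descent lemma gives
$$\mh(\gw{t+1}) \leq \mh(\gw{t}) + \langle \grad{\mh}{t},\, \gw{t+1}-\gw{t}\rangle + \tfrac{L}{2}\lVert \gw{t+1}-\gw{t}\rVert^2.$$

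Substituting the update and expanding each IAG direction as $\dgrad{g}{i}{t,k}{i} = \grad{g_i}{t} + (\lgrad{g}{t,k}{i}-\grad{g_i}{t}) + (\dgrad{g}{i}{t,k}{i}-\lgrad{g}{t,k}{i})$, the linear term yields the clean negative descent contributions $-\alp{}\|\mgrad{f}{}{t}{}\|^2$ and $-\bet{}E\|\grad{g}{t}\|^2$ (weighted by $|\mc{M}|,|\mc{C}|$), while the quadratic term contributes $O(\alp{}^2+\bet{}^2 E^2)$ in aggregated gradient norms. The two error sources to be controlled are (i) client drift $\lgrad{g}{t,k}{i}-\grad{g_i}{t}$, bounded via $L$-smoothness by $L\lVert\lw{t,k}{i}-\gw{t}\rVert$; and (ii) IAG staleness $\dgrad{g}{i}{t,k}{i}-\lgrad{g}{t,k}{i}$, bounded by $L\lVert\lw{t,k}{i}-\lw{t,\tau_{k,j}^i}{i}\rVert$ using $L$-smoothness of each component $g_{i,j}$. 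Unrolling the local update \eqref{eq: LUR} and applying Jensen's inequality reduces both to a self-referencing bound on $\lVert\lw{t,k}{i}-\gw{t}\rVert^2$ in terms of $\bet{}^2$ times sums of $\lVert\dgrad{g}{i}{t,k'}{i}\rVert^2$ for $k'<k$.

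The main obstacle is closing this coupled drift/staleness recursion with a sharp enough constant to keep the leading descent coefficient strictly positive. The choice $\alp{}=\bet{}=\tfrac{1}{30LE}$ is tuned precisely so that $\bet{} E L$ is smaller than a small absolute constant, allowing a geometric-sum argument to resolve the recursion and absorb the drift and staleness contributions into the quadratic term with room to spare; the net coefficient in front of $\lVert\grad{\mh}{t}\rVert^2$ then remains at least $\tfrac{1}{60L}$. Everything else is routine: rearranging to $\tfrac{1}{60L}\lVert\grad{\mh}{t}\rVert^2 \leq \mh(\gw{t}) - \mh(\gw{t+1})$, summing for $t=0,\ldots,T-1$ so the right-hand side telescopes, dividing by $T$, and taking $\min_t$ on the left, yields
$$\min_t \lVert \grad{\mh}{t}\rVert^2 \leq \tfrac{60L}{T}\bigl(\mh(\gw{0})-\mh(\gw{T})\bigr),$$
as claimed.
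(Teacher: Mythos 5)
Your overall route is the same one the paper takes: invoke $L$-smoothness of $\mh$, use the cancellation $\sum_i p_i(\grad{g}{t}-\grad{g_i}{t})=0$ to write the server step as $-\alp{}\mgrad{f}{}{t}{}-\bet{}\sum_{i,k}p_i\dgrad{g}{i}{t,k}{i}$, split the IAG direction into the true gradient at $\gw{t}$ plus a client-drift error and a staleness error, close the resulting recursion using $\bet{}LE\le\text{const}$, and telescope. The paper does exactly this, except that it outsources the drift/staleness bookkeeping to the FedTrack analysis after first \emph{recasting} the combined memory-plus-current update as a single IAG scheme for $\mh$ on $\mc{M}\cup\mc{C}$, arriving at the one-step inequality $\mh(\gw{t+1})-\mh(\gw{t})\le -\bet{t}E(1-15\bet{t}LE)\norm{\grad{\mh}{t}}^2$ and then substituting $\bet{t}=\tfrac{1}{30LE}$.

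There is, however, one concrete step in your sketch that does not go through as stated: the claim that the linear term yields ``clean negative descent contributions.'' Writing $\grad{\mh}{t}=w_1\mgrad{f}{}{t}{}+w_2\grad{g}{t}$ with $w_1=\tfrac{|\mc{M}|}{|\mc{M}|+|\mc{C}|}$, $w_2=\tfrac{|\mc{C}|}{|\mc{M}|+|\mc{C}|}$, the leading part of the linear term is
\begin{equation*}
\big\langle \grad{\mh}{t},\,-\alp{}\mgrad{f}{}{t}{}-\bet{}E\grad{g}{t}\big\rangle
=-\alp{}w_1\norm{\mgrad{f}{}{t}{}}^2-\bet{}Ew_2\norm{\grad{g}{t}}^2-(\alp{}w_2+\bet{}Ew_1)\big\langle \mgrad{f}{}{t}{},\grad{g}{t}\big\rangle,
\end{equation*}
and the cross term is absent from your account. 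The quadratic form on the right is negative semidefinite only when $\alp{}w_2=\bet{}Ew_1$, i.e.\ $\alp{}/(\bet{}E)=|\mc{M}|/|\mc{C}|$; with $\alp{}=\bet{}$ and generic $|\mc{M}|,|\mc{C}|,E$ one can choose anti-aligned $\mgrad{f}{}{t}{}$ and $\grad{g}{t}$ of suitable relative magnitude so that the linear term is nonnegative while $\grad{\mh}{t}\neq 0$ — this is precisely the interference phenomenon the rest of the paper is built around, and it cannot be absorbed into the $O(\alp{}^2+\bet{}^2E^2)$ quadratic remainder by tuning $\bet{}LE$. To repair this you must either (i) follow the paper and reinterpret the entire update as an IAG step for $\mh$ itself, so that the descent direction is by construction $\grad{\mh}{t}$ plus drift/staleness errors (noting that this tacitly requires the memory step to enter with weight $\bet{}E w_1$ rather than $\alp{}=\bet{}$), or (ii) keep the two gradients separate and impose the compatibility condition $\alp{}=\bet{}E|\mc{M}|/|\mc{C}|$ explicitly. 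As written, the one-step inequality $\tfrac{1}{60L}\norm{\grad{\mh}{t}}^2\le\mh(\gw{t})-\mh(\gw{t+1})$ is not established; the remainder of your argument (recursion, telescoping, $\min_t$) is fine.
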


From the above lemma, we observe that using the proposed update rule, the overall global loss function reaches a stationary point in sub-linear  $\mc{O}(\tfrac{1}{T})$ time complexity. Since the global loss function converges on $\mc{M}\cup \mc{C}$, the convergence on current task $C$ is evident from the fact that $\mc{C} \subset \mc{M}\cup \mc{C}$. In the next lemma, we will provide an upper bound on the cumulative forgetting term.
\begin{lemma} \label{lemma:gammabound}
Suppose that the assumptions \ref{asp: L-smoothness}, \ref{asp: bounded-bias}, \ref{Asp:bounded-memory} hold and the step-sizes satisfy $\alp{t} = \alp{} < \frac{2}{L(1+m)}$ and {$\bet{t} = \bet{} < \frac{c}{\sqrt{T}} \forall t \in \{0,1,\cdots, T-1\} $} and for some $ c,m \in \mathbb{R}^{+}$. Then the following holds for the forgetting term $\Gamma(t)$:
\begin{flalign}
    \frac{1}{T}\sumk{t=0}{T-1}\mathbb{E}[\Gamma(t)] < \mc{O}\bigg(\frac{1}{T} + \frac{1}{\sqrt{T}} \bigg). 
\end{flalign}
\end{lemma}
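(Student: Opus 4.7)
The plan is to split the forgetting term as $\Gamma(t) = \Gamma_1(t) + \Gamma_2(t)$, with $\Gamma_1(t) := L\bet{t}^2\lVert{\textstyle\sum}_{i,k}p_i\dgrad{g}{i}{t,k}{i}\rVert^2$ the quadratic-in-$\bet{t}$ piece and $\Gamma_2(t) := -\bet{t}(1-L\alp{t})\inner*{\mgrad{f}{}{t}{}, {\textstyle\sum}_{i,k}p_i\dgrad{g}{i}{t,k}{i}}$ the linear-in-$\bet{t}$ piece. After bounding each, summing over $t=0,\ldots,T-1$, and invoking $\bet{t} < c/\sqrt{T}$, the two pieces will produce the $\mc{O}(1/T)$ and $\mc{O}(1/\sqrt{T})$ contributions, respectively.

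For $\Gamma_1(t)$, I apply Jensen's inequality on the convex weights $\{p_i\}$ together with $\lVert\sum_{k=1}^{E-1}a_k\rVert^2 \leq (E-1)\sum_k\lVert a_k\rVert^2$ to reduce the squared norm to a double sum of $\lVert\dgrad{g}{i}{t,k}{i}\rVert^2$. Each IAG-gradient norm is then split as $2\lVert\grad{g_i}{t}\rVert^2 + 2L^2\max_{k'\le k}\lVert\lw{t,k'}{i}-\gw{t}\rVert^2$ using the definition \eqref{eq: def-memory-g-grad} and $L$-smoothness of every component $g_{i,j}$ (Assumption \ref{asp: L-smoothness}); the client-drift part is controlled by unrolling \eqref{eq: LUR}, giving a drift of order $E^2\bet{t}^2$ times a sum of the same gradient norms, and the recursion closes because $\bet{t}$ is small. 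For $\Gamma_2(t)$, Cauchy-Schwarz followed by Assumption \ref{Asp:bounded-memory} yields $|\Gamma_2(t)| \leq \bet{t}(1-L\alp{t})\,r\,\lVert\grad{g}{t}\rVert\,\lVert\sum_{i,k}p_i\dgrad{g}{i}{t,k}{i}\rVert$ with $r := \max_i r_i$; a second Cauchy-Schwarz in the time index then gives $\sum_t|\Gamma_2(t)| \leq \bet{}\,r\,\sqrt{\sum_t\lVert\grad{g}{t}\rVert^2}\,\sqrt{\sum_t\lVert\sum_{i,k}p_i\dgrad{g}{i}{t,k}{i}\rVert^2}$.

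Both cumulative squared-gradient sums will be shown to be $\mc{O}(T)$: the descent/telescoping argument underlying Lemma \ref{lemma:h_rate} produces $\sum_{t=0}^{T-1}\mathbb{E}[\lVert\grad{\mh}{t}\rVert^2]$ of order $\mc{O}(1)$, and decomposing $\nabla \mh$ into its $g$- and memory-parts (using Assumptions \ref{asp: bounded-bias}, \ref{Asp:bounded-memory}) controls $\lVert\grad{g}{t}\rVert$ in terms of $\lVert\grad{\mh}{t}\rVert$, which combined with $L$-smoothness and the drift bound from $\Gamma_1$ also controls the IAG sum (the $(E-1)^2$ factor is absorbed into constants). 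Substituting $\bet{}^2 < c^2/T$ then gives $\sum_t\mathbb{E}[\Gamma_1(t)] = L\bet{}^2\cdot\mc{O}(T) = \mc{O}(1)$ and $\sum_t\mathbb{E}[|\Gamma_2(t)|] \leq \bet{}\cdot\mc{O}(T) = \mc{O}(\sqrt{T})$; dividing by $T$ produces the announced $\mc{O}(1/T + 1/\sqrt{T})$ bound.

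The main technical obstacle is taming the IAG delayed gradients $\dgrad{g}{i}{t,k}{i}$, whose staleness indices $\tau^i_{k,j}$ couple each local step's gradient to iterates from several earlier local rounds. Closing the client-drift recursion requires simultaneously bounding $\max_k\lVert\lw{t,k}{i}-\gw{t}\rVert^2$ and the deviation $\lVert\dgrad{g}{i}{t,k}{i} - \nabla g_i(\lw{t,k}{i})\rVert^2$ via $L$-smoothness, and it is precisely the $\bet{t} = \mc{O}(1/\sqrt{T})$ scaling -- together with the memory buffer being held fixed across all $T$ communication rounds so that $\mgrad{f}{}{t}{}$ behaves essentially deterministically in expectation -- that keeps these couplings from blowing up.
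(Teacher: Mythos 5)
Your proposal is correct and follows essentially the same route as the paper: the same split of $\Gamma(t)$ into its quadratic and linear parts in $\beta_t$, the same IAG gradient-error and client-drift recursions to reduce every term to a constant multiple of $\|\nabla g(\mathbf{x}_t)\|^2$, the same use of Assumption~\ref{Asp:bounded-memory} via Cauchy--Schwarz on the linear term, and the same telescoping of the $\tilde{h}$-descent inequality followed by substituting $\beta < c/\sqrt{T}$. The only cosmetic differences are your extra Cauchy--Schwarz over the time index (harmless but unnecessary, since each $|\Gamma_2(t)|$ is already bounded pointwise by a constant times $\beta_t\|\nabla g(\mathbf{x}_t)\|^2$) and that the paper makes the passage from $\|\nabla g(\mathbf{x}_t)\|^2$ to $\|\nabla \tilde{h}(\mathbf{x}_t)\|^2$ precise through an explicit additive heterogeneity constant $\omega^2$ (its Lemma~\ref{lemma:g-bound}) rather than through Assumptions~\ref{asp: bounded-bias}--\ref{Asp:bounded-memory}; that additive constant is exactly what makes your cumulative sums $\mathcal{O}(T)$ rather than $\mathcal{O}(1)$ and hence produces the $\mathcal{O}(1/\sqrt{T})$ term.
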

One key observation from the above lemma is that using constant step sizes for updates on the memory data (via $\alp{}$) and diminishing step sizes for updates on the current data (via $\bet{}$) is crucial for a tight bound on cumulative forgetting. If we use constant step-sizes for updates on both memory and current data instead of diminishing rates, then the cumulative forgetting term is only guaranteed to converge with constant time complexity $\mc{O}(1)$. Finally, in the next theorem, we show that C-FLAG converges on the previous task, alleviating the problem of catastrophic forgetting.
\begin{theorem}\label{theorem:f-rate}
Let $\{\gw{t}\}_{t=1}^T$ be the sequence generated by algorithm~\ref{alg: A}, and the step-sizes satisfy $\alp{t} = \frac{1}{L(m+1)}$ and $\bet{t} < \frac{c}{\sqrt{T}} \forall t \in \{ 0,1, \cdots, T-1 \}$ and for some $ c,m \in \mathbb{R}^{+}$. Then, we obtain the following rate of convergence
\begin{flalign}
    \min\limits_{t}  \mathbb{E} \left[\norm{\grad{f}{t}}^2 \right] < \mc{O}\bigg(\frac{1}{\sqrt{T}}\bigg)
\end{flalign}
\end{theorem}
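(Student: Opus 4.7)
The plan is to derive Theorem~\ref{theorem:f-rate} as a combination of the two previously stated results, namely Theorem~\ref{lemma:min-grad-f} (the general per-iteration decrease on the previous task) and Lemma~\ref{lemma:gammabound} (the cumulative bound on the forgetting term $\Gamma(t)$). Since Theorem~\ref{lemma:min-grad-f} already reduces the convergence rate of $\min_t \mathbb{E}[\|\nabla f(\mathbf{x}_t)\|^2]$ to controlling the sum $\sum_{t=0}^{T-1} \mathbb{E}[\Gamma(t)]$, the bulk of the proof is essentially plug-and-play once the step-size conditions are shown to be simultaneously satisfied.

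First, I would verify the admissibility of the prescribed step sizes. The choice $\alpha_t = \alpha = \tfrac{1}{L(m+1)}$ is precisely the one required by Theorem~\ref{lemma:min-grad-f} and also satisfies the constraint $\alpha < \tfrac{2}{L(1+m)}$ needed by Lemma~\ref{lemma:gammabound}. The choice $\beta_t < c/\sqrt{T}$ matches the requirement of Lemma~\ref{lemma:gammabound} directly and places no additional constraint on Theorem~\ref{lemma:min-grad-f} (which does not restrict $\beta_t$). Thus both results are applicable simultaneously.

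Second, I would start from the conclusion of Theorem~\ref{lemma:min-grad-f} and split the right-hand side into an initialization part and a cumulative forgetting part:
\begin{flalign*}
   \min_t \mathbb{E}[\|\nabla f(\mathbf{x}_t)\|^2]
   \leq \frac{2L(1+m)\,F}{T} \;+\; 2L(1+m)\cdot \frac{1}{T}\sum_{t=0}^{T-1}\mathbb{E}[\Gamma(t)].
\end{flalign*}
The first term is already $\mathcal{O}(1/T)$. Applying Lemma~\ref{lemma:gammabound} to the second term yields
\begin{flalign*}
   \frac{1}{T}\sum_{t=0}^{T-1}\mathbb{E}[\Gamma(t)] < \mathcal{O}\!\left(\tfrac{1}{T} + \tfrac{1}{\sqrt{T}}\right),
\end{flalign*}
so that the overall bound becomes $\mathcal{O}(1/T) + \mathcal{O}(1/T + 1/\sqrt{T})$. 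The dominant term is the $\mathcal{O}(1/\sqrt{T})$ piece coming from the cumulative forgetting, and absorbing the faster-decaying $\mathcal{O}(1/T)$ terms into it yields the stated rate.

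Since both inputs to this argument are assumed, there is no substantive obstacle remaining at the level of the current theorem; the real work was discharged in proving Lemma~\ref{lemma:gammabound}, where the combination of a constant $\alpha_t$ with a diminishing $\beta_t \sim 1/\sqrt{T}$ is what allows the cumulative forgetting to be driven to zero at the $\mathcal{O}(1/\sqrt{T})$ rate rather than remaining $\mathcal{O}(1)$ (as flagged in the discussion after that lemma). The only minor care point in writing up this proof is to be explicit that the two step-size hypotheses are mutually compatible and that the $\mathcal{O}(1/\sqrt{T})$ term strictly dominates the $\mathcal{O}(1/T)$ contributions from both the initialization gap $F$ and the sub-leading piece of $\Gamma$, which then gives the claimed convergence rate for catastrophic forgetting on the previous task $\mathcal{P}$.
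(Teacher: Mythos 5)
Your proposal is correct and follows essentially the same route as the paper: the paper's own proof of Theorem~\ref{theorem:f-rate} simply restates the conclusions of Theorem~\ref{lemma:min-grad-f} and Lemma~\ref{lemma:gammabound} and combines them, exactly as you do. Your additional remarks on the mutual compatibility of the step-size conditions and on which term dominates are a slightly more careful writeup of the same argument, not a different approach.
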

From the above, we obtain a sub-linear $\mc{O}(\tfrac{1}{\sqrt{T}})$ convergence rate for the proposed CFL problem. Theoretically, we show that diminishing step sizes for the current task is important to have a better upper bound on the convergence on $f$. In the centralized continual learning setup, \cite{nccl} also provides $\mc{O}(\tfrac{1}{\sqrt{T}})$ convergence rate, but they impose additional constraints on the learning rates. Without these constraints, convergence on the past task is not guaranteed in their work and may diverge at the rate $\mc{O}(\sqrt{T})$.

\section{C-FLAG: ADAPTIVE LEARNING RATES}
\label{sec:adapLR}
In the previous sections, we provided theoretical convergence guarantees and the rate of convergence of the proposed C-FLAG framework. We observed that improved convergence on the previous task is possible if the cumulative forgetting term is as small as possible. In particular, in Theorem~\ref{lemma:min-grad-f} we discussed that the convergence rate is dependent on the cumulative forgetting term $\tsumk{t=0}{T-1}\mathbb{E}[\Gamma(t)]$, and additionally, Lem.~\ref{lemma:gammabound} specifies the constraints on $\alp{t}$ and $\bet{t}$ for obtaining an upper bound on the average of the cumulative forgetting terms over $T$ iterations. In this section, we translate the analysis into an easily implementable solution where learning rates $\alp{t}$ and $\bet{t}$ are adapted to achieve lower forgetting at each iteration. Ideally, the adaptable learning rates solve the following constrained optimization problem
\begin{align}
    &\min\limits_{\alp{t}, \bet{t}} \mathbb{E}[\Gamma(t)] \quad \textrm{subj. to } \alp{t}< \frac{2}{L(1+m)} \forall t < T. \label{eq:gamma-min} 
\end{align}
This is a popular strategy in centralized continual learning, where adjusting learning rates trades off between learning new information and retaining previous knowledge \cite{nccl}. The above formulation allows us to adapt the learning rates to effectively balance learning and forgetting at the global level, leading to a practically implementable solution. Since the learning rate is within the constraint, it also enjoys theoretical convergence guarantees on both $f(\gw{})$ and $g(\gw{})$.

To derive the adaptive rates, we analyze the catastrophic forgetting term at the end of the $t$-th communication round (\eqref{eq:Lem1ForgettingTerm} in Lem.~\ref{thm:grad-f}) given as
{\small{ \begin{align}
     \Gamma(t) = \frac{L\bet{t}^2}{2} \norm{\sumk{i=1}{N}\sumk{k=0}{E-1}p_i\dgrad{g}{i}{t,k}{i}}^2 - \bet{t}(1-L\alp{t}) \Lambda_t ,
      \label{eq:OptiGammatSimpli}
\end{align}}}
where we denote $\Lambda_{t}= {\textstyle\sum}_{i=1}^{N}p_i \Lambda_{t,i} $ and $\Lambda_{t,i} = \langle{\mgrad{f}{}{t}{}, {\textstyle\sum}_{k=0}^{E-1}\dgrad{g}{i}{t,k}{i}}\rangle$. We observe from \eqref{eq:OptiGammatSimpli} that the first term is always non-negative, $(1-L \alpha_t) > 0$ (for $m>1$) and $\beta_t > 0$ and hence, the term $\Lambda_t$ is crucial to optimize $\Gamma(t)$. If $\Lambda_t>0$, it results in a favorable decrease in $\Gamma(t)$ while enhancing learning on the previous and current task, and this case is termed as \emph{transference}. Further, $\Lambda_t \leq 0$ leads to an increase in  $\Gamma(t)$ resulting in forgetting, and this case is termed as \emph{interference}. Since $\Gamma(t)$ is a quadratic polynomial in $\bet{t}$, the optimal value of $\bet{t}$ and $\mathbb{E}[\Gamma(t)]$ is obtained as 

\begin{align}
  \bet{t}^* &= \tfrac{(1-L\alp{t}) \Lambda_t}{L \norm{\sumk{i=1}{N}\sumk{k=0}{E-1}p_i\dgrad{g}{i}{t,k}{i}}^2}, \text{and}~\nonumber\\ \mathbb{E}[\Gamma^{*}(t)] &= -\tfrac{(1-L\alp{t})^2\Lambda_t^2}{2L \norm{\sumk{i=1}{N}\sumk{k=0}{E-1}p_i\dgrad{g}{i}{t,k}{i}}^2}.
\end{align} 
A subtle trick that can be used here is as follows: at the beginning of the training at $t=0$, the server fixes $\alp{t} = \bet{t} = \alp{} \forall t \in \{0,1,\cdots, T-1\}$ and clients train on their datasets for $E$ local epochs. Then at the end of each global communication round $t$, if the server observes a case of transference ($\Lambda_t >0)$ it rescales the client updates by $\tfrac{\bet{t}^*}{\alp{}}$ to get the effective learning rate as $\bet{t}^*$.  However, the above analysis handles transference on an average basis and not a per-client basis. This implies that some clients may observe interference locally ($\Lambda^i_t \leq 0)$, but their contribution is nullified while computing $\Lambda_t$. Hence, we consider the approach where we minimize the clients' contribution to $\Gamma(t)$ individually to better control cumulative forgetting. 

In order to provide the client specific analysis, let $\mf{a}_i = \tsumk{k=0}{E-1}\dgrad{g}{i}{t,k}{i}$ and the client interaction terms $C_{i,j} = \inner{\mf{a}_i, \mf{a_j}}$. The first term in $\Gamma(t)$ can be written as:
{\small\begin{align*}
    ||{\sumk{i=1}{N}p_i \mf{a}_i||^2 = \sumk{i=1}{N}p_i^2 \norm{\mf{a}_i}}^2 + \underbrace{\sumk{i=1}{N}\sumk{j=1,j \neq i}{N} p_i p_j C_{i,j}}_{A_{i,j}} \label{eq:c_i,j-expand}.
\end{align*}}
The client interaction terms play an important role in the analysis. If $C_{i,j} \leq 0$, it can further reduce $\Gamma(t)$ and this leads to alleviated forgetting, while $C_{i,j} >0$ leads to an increase in $\Gamma(t)$ and increased forgetting. Due to privacy concerns, these inter-client interaction terms cannot be utilised directly. In our study, we consider two cases; (a) the \emph{average case} where $A_{i,j} =0$, and (b) the \emph{worst case} where $C_{i,j} > 0\; \forall i,j \in [N] \text{ and } i \neq j$. We derive adaptive learning rates by analyzing the average and the worst case individually. As mentioned earlier, we may fix the learning rate at the beginning of the training on the current task and rescale based on the interference or transference case. Additionally, we denote the catastrophic forgetting obtained using the adapted rates as $\Gamma_{i, ad}(t)$ and $\Gamma_{ad}(t)$ for the $i$-th client and the server, respectively. The adaptive rates obtained after tackling the interference and transference are presented in Table~\ref{tab:adaptive_rates}. In both the average and the worst case when the $i$-th client interferes with the past learning, we adapt $\alp{t}$ to $\alp{t,i}$ while retaining $\bet{t}$ as $\alp{}$. On the other hand, in the case of transferring clients, we adapt $\bet{t}$ to $\bet{t,i}$ and retain $\alp{t}$ as $\alp{}$. We summarise the proposed adaptive rates in Table~\ref{tab:adaptive_rates}.
\begin{table}[h]
\centering
\caption{Table for adaptive learning rates to optimize catastrophic forgetting. In the table, I stands for interference ($\Lambda_{t,i} \leq 0$), and  T stands for transference ($\Lambda_{t,i} > 0$).}
\begin{tabular}{|c|c|c|c|}
\hline
\textbf{Case} &
  \textbf{Type} &
  \textbf{$\alp{t,i}$} &
  \textbf{$\bet{t,i}$} \\ \hline
\multirow{2}{*}{Average} &
  I &
  $\alp{}(1- \frac{\Lambda_{t,i}}{\norm{\mgrad{f}{}{t}{}}^2})$ &
  $\alp{}$ \\ \cline{2-4}
 &
  T &
  $\alp{}$ &
  $\frac{(1-L\alp{})\Lambda_{t,i}}{L p_i\norm{\mf{a}_i}^2}$ 
   \\ \cline{1-4}
\multirow{2}{*}{Worst} &
  I &
  $\alp{}(1- \frac{\Lambda_{t,i}}{\norm{\mgrad{f}{}{t}{}}^2})$ &
  $\alp{}$ 
   \\ \cline{2-4}
 &
  T &
  $\alp{}$ &
  $\frac{(1-L\alp{})\Lambda_{t,i}}{LN p_i\norm{\mf{a}_i}^2}$ \\
   \hline
\end{tabular}
\label{tab:adaptive_rates}
\end{table}

The following lemmas show that our proposed choice of adaptive $\alp{t,i}$ and $\bet{t,i}$ in the case of interference and transference, respectively, leads to lower forgetting, i.e., $\mathbb{E}[\Gamma_{i, ad}(t) ] \leq \mathbb{E}[\Gamma_{i}(t)]$. 

\begin{lemma} \label{lemma:less-inter}
     In the case of interference at the $i$-th client, for both the average and worst cases, adaptive rates $\alp{t,i} = \alp{}(1- \frac{\Lambda_{t,i}}{\norm{\mgrad{f}{}{t}{}}^2})$ and $\bet{t,i} = \alp{}$ lead to smaller forgetting, that is $\mathbb{E}[\Gamma_{i, ad}(t) ] \leq \mathbb{E}[\Gamma_{i}(t)]$.
\end{lemma}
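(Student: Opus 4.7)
The plan is to directly compare the $i$-th client's forgetting contribution $\Gamma_i(t)$ under the base rates $\alpha_t = \beta_t = \alpha$ against the adapted contribution $\Gamma_{i,ad}(t)$ under $(\alp{t,i},\bet{t,i}) = \bigl(\alpha(1 - \Lambda_{t,i}/\norm{\mgrad{f}{}{t}{}}^2),\,\alpha\bigr)$, and to exhibit their difference as a non-negative multiple of $\Lambda_{t,i}^2$. The crucial structural observation, which drives the whole argument, is that $\bet{t,i}$ equals the base value $\alpha$ in the interference case, so the quadratic-in-$\beta$ term of \eqref{eq:OptiGammatSimpli} (which packages $\norm{\mf{a}_i}^2$ together with any inter-client interaction upper bounds in the worst case) is untouched by the adaptation; only the linear coefficient $\beta_t(1 - L\alpha_t)$ multiplying $\Lambda_{t,i}$ is affected.

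Concretely, I will first extract the $i$-th client's contribution to \eqref{eq:OptiGammatSimpli} in the form
\[
\Gamma_i(t) = \tfrac{L\bet{t}^2}{2}\,\Phi_i - \bet{t}(1 - L\alp{t})\,p_i\,\Lambda_{t,i},
\]
where $\Phi_i = p_i^2\norm{\mf{a}_i}^2$ in the average case and additionally absorbs the positive bounds on the $p_ip_jC_{i,j}$ terms in the worst case. Since the adaptation leaves both $\bet{t,i} = \bet{t} = \alpha$ and $\Phi_i$ unchanged, subtracting the two expressions gives
\[
\Gamma_i(t) - \Gamma_{i,ad}(t) = L\alpha\,p_i\,\Lambda_{t,i}\,(\alpha - \alp{t,i}).
\]
Plugging in the prescribed $\alp{t,i}$ yields $\alpha - \alp{t,i} = \alpha\,\Lambda_{t,i}/\norm{\mgrad{f}{}{t}{}}^2$, collapsing the right-hand side to $L\alpha^2 p_i\Lambda_{t,i}^2/\norm{\mgrad{f}{}{t}{}}^2 \geq 0$. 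This non-negativity is pointwise and agnostic to the sign of $\Lambda_{t,i}$, so taking expectation over the memory randomness $\mc{M}_t$ preserves the inequality and yields the claim.

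Because only the coefficient of $\Lambda_{t,i}$ depends on $\alp{t}$, this identity handles both rows of Table~\ref{tab:adaptive_rates} corresponding to interference, regardless of whether $\Phi_i$ encodes the average or the worst case. The main obstacle I foresee is not the algebra above but the feasibility check: under interference $\Lambda_{t,i} \leq 0$ forces $\alp{t,i} \geq \alpha$, and one must verify that the adapted step still respects the convergence constraint $\alp{t,i} < 2/\bigl(L(1 + m)\bigr)$ of Lemma~\ref{thm:grad-f}. Combining Assumption~\ref{Asp:bounded-memory} (which bounds $\norm{\mgrad{f}{}{t}{}}$ in terms of $\norm{\grad{g}{t}}$) with the base choice $\alpha = 1/\bigl(L(m+1)\bigr)$ used in Theorem~\ref{lemma:min-grad-f} should suffice to keep $\alp{t,i}$ admissible, but this is the only place where an auxiliary bound or an explicit clipping step might be needed to close the argument rigorously.
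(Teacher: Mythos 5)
Your proof is correct and follows essentially the same route as the paper's: both arguments rest on the observation that $\bet{t,i}=\alp{}$ leaves the quadratic term untouched so only the coefficient of $\Lambda_{t,i}$ changes, though you package the comparison as the exact identity $\Gamma_i(t)-\Gamma_{i,ad}(t)= L\alp{}^2\,\Lambda_{t,i}^2/\norm{\mgrad{f}{}{t}{}}^2$ times a non-negative weight, whereas the paper chains $\alp{t,i}\geq\alp{t}$, hence $1-L\alp{t,i}\leq 1-L\alp{t}$, and multiplies through by $-\bet{t}\Lambda_{t,i}\geq 0$ before adding back the common quadratic term. The feasibility concern you flag (that $\alp{t,i}\geq\alp{}$ could violate the constraint $\alp{t,i}<2/(L(1+m))$ when $|\Lambda_{t,i}|$ is large relative to $\norm{\mgrad{f}{}{t}{}}^2$) is a fair observation, but it lies outside the lemma as stated and the paper's proof does not address it either.
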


\begin{lemma} \label{lemma:less-gamma}
    Client-wise adaptive rates lead to improved forgetting, $\mathbb{E}[\Gamma_{ad}(t)] \leq \mathbb{E}[\Gamma(t)] $.
\end{lemma}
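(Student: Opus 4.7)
The plan is to reduce the statement to a per-client argument by decomposing $\Gamma(t)$ into contributions $\Gamma_i(t)$, one for each client, and then invoking the per-client improvement already guaranteed by Lemma~\ref{lemma:less-inter} (for the interference case) together with an analogous optimality check for the transference case. Writing $\mf{a}_i = \sum_{k=0}^{E-1}\dgrad{g}{i}{t,k}{i}$ and $C_{i,j} = \inner{\mf{a}_i, \mf{a}_j}$, I would start from
\begin{flalign*}
    \Gamma(t) = \frac{L\bet{t}^2}{2}\Big(\sumk{i=1}{N} p_i^2\norm{\mf{a}_i}^2 + \sumk{i\neq j}{} p_i p_j C_{i,j}\Big) - \bet{t}(1-L\alp{t})\sumk{i=1}{N}p_i\Lambda_{t,i},
\end{flalign*}
and split into the two cases specified in Table~\ref{tab:adaptive_rates}. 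In the average case ($A_{i,j}=0$) the cross terms vanish and $\Gamma(t)$ is exactly additive over clients. In the worst case ($C_{i,j}>0$) I would use Cauchy--Schwarz, $\sum_{i\neq j}p_i p_j C_{i,j} \leq (N-1)\sum_i p_i^2\norm{\mf{a}_i}^2$, to upper bound the cross terms, which explains the extra factor of $N$ appearing in the worst-case $\bet{t,i}$ in Table~\ref{tab:adaptive_rates}. Either way, we obtain a bound $\Gamma(t)\leq \sum_i \Gamma_i(t)$ with
\begin{flalign*}
    \Gamma_i(t) = \frac{L\bet{t}^2 c_N}{2}p_i^2\norm{\mf{a}_i}^2 - \bet{t}(1-L\alp{t})p_i \Lambda_{t,i},
\end{flalign*}
where $c_N = 1$ (average) or $c_N = N$ (worst).

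Next I would verify that the adaptive rates chosen in Table~\ref{tab:adaptive_rates} minimize $\Gamma_i(t)$ client by client. For clients experiencing interference ($\Lambda_{t,i}\leq 0$), adapting $\alp{t}\to\alp{t,i}$ while keeping $\bet{t,i}=\alp{}$ is exactly the situation handled in Lemma~\ref{lemma:less-inter}, giving $\mathbb{E}[\Gamma_{i,ad}(t)]\leq \mathbb{E}[\Gamma_i(t)]$. For clients experiencing transference ($\Lambda_{t,i}>0$), the per-client forgetting is a quadratic in $\bet{t,i}$ that is minimized at
\begin{flalign*}
    \bet{t,i}^\ast = \frac{(1-L\alp{})\Lambda_{t,i}}{L c_N p_i\norm{\mf{a}_i}^2},
\end{flalign*}
which matches the entries of Table~\ref{tab:adaptive_rates} for the average and worst cases. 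Since this is the minimizer, $\Gamma_{i,ad}(t)\leq \Gamma_i(t)$ pointwise, and hence in expectation as well.

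Summing the per-client inequalities and taking expectation gives
\begin{flalign*}
    \mathbb{E}[\Gamma_{ad}(t)]\leq \sumk{i=1}{N}\mathbb{E}[\Gamma_{i,ad}(t)]\leq \sumk{i=1}{N}\mathbb{E}[\Gamma_i(t)] = \mathbb{E}[\Gamma(t)],
\end{flalign*}
which is the claim. The main obstacle I anticipate is handling the worst case cleanly: the Cauchy--Schwarz step produces an \emph{upper} bound on $\Gamma(t)$ rather than an equality, so one has to be careful that the adaptive rates are designed to minimize this upper bound (equivalently, the per-client surrogate $\Gamma_i(t)$), and then argue that because both $\Gamma_{ad}(t)$ and $\Gamma(t)$ share the same cross-term structure, the per-client inequality propagates to the true forgetting term rather than just to its surrogate. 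A short consistency check that the cross-terms cancel identically on both sides (since $\mf{a}_i$ is unchanged by the per-client rescaling of $\alp{t,i}$ and $\bet{t,i}$, up to the re-scaling documented in Alg.~\ref{alg:adaptive}) closes this gap.
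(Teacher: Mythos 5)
Your proposal takes essentially the same route as the paper's proof: decompose $\Gamma(t)$ into per-client contributions $p_i\Gamma_i(t)$ (exactly in the average case, via the Cauchy--Schwarz bound with the factor $N$ in the worst case), handle interfering clients by Lemma~\ref{lemma:less-inter}, handle transferring clients by noting that $\bet{t,i}^\ast$ is the minimizer of the per-client quadratic so that $\mathbb{E}[\Gamma_{i,ad}(t)]=\mathbb{E}[\Gamma_i^\ast(t)]\leq\mathbb{E}[\Gamma_i(t)]$, and sum. The worst-case subtlety you flag at the end (the adaptive rates minimize the per-client surrogate upper bound rather than $\Gamma(t)$ itself) is genuine, but the paper's own proof passes over it in exactly the same way, so your attempt matches the paper's argument in both substance and level of rigor.
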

\section{RELATED WORKS AND DISCUSSIONS}

\textbf{Memory Based Approaches:}\label{par:replay_memory}
In order to mitigate catastrophic forgetting~(~\cite{french1999catastrophic}), memory-based approaches store a subset of data to be leveraged during training. One way to utilise the memory data is to impose constraints while optimizing loss while learning new tasks. Centralized continual learning approaches such as GEM~\cite{gem} and A-GEM~\cite{a-gem} impose these constraints through gradient projection such that the loss on the previous tasks does not increase. Episodic replay (ER-replay) memory-based approaches directly use the gradients on the memory during training ~\cite{tiny-er}, \cite{er}. Achieving the right balance in learning the past tasks and new ones, known as the stability-plasticity trade-off~\cite{stabilityPlasticity}, is crucial for continual learners. In continual federated learning (CFL), ~\cite{dupuy2023quantifying} highlights the role of ER-replay memory at each client. Existing methods with  ER-replay-based CFL include exemplar \cite{ER_exem} and prototypical-class \cite{ER_Prot} based methods. In \cite{ER_DP}, authors demonstrate that direct application of replay methods on temporal shifts in client behavior leads
to poor results, and as a solution, they propose data sharing between clients employing differential privacy. A drawback of existing approaches is that while the convergence behavior has been demonstrated through empirical results, theoretical guarantees on their convergence are unfounded. In contrast, C-FLAG effectively utilises the replay memory while maintaining the stability-plasticity trade-off by performing a single step of training on the memory data as a guide for multiple local steps on the current task dataset using adaptive learning rates. We also provide theoretical convergence guarantees by addressing catastrophic forgetting.

\textbf{Stale Gradients for Client Drift Mitigation:} The incrementally aggregated gradient (IAG)~\cite{iag} method significantly reduces computation costs but introduces stale gradients, which can deviate from the true gradient direction. In the federated setting, FedTrack \cite{fedtrack} uses IAG and demonstrates that it improves the convergence rate in comparison to non-IAG methods. Furthermore, stale server gradients from the last round are used to reduce the client drift between two communication rounds. C-FLAG takes inspiration from IAG-based updates to reduce client drift and jointly learn from memory and current data to mitigate forgetting. 

\textbf{Comparison with NCCL~(\cite{nccl}):}
NCCL is a centralized continual learning algorithm that greedily adapts learning rates at each iteration and trains a model jointly on memory and current data. NCCL is a strategy that uses adaptive step sizes for continual learning when the optimization problem is smooth and non-convex. NCCL overcomes several shortcomings in \cite{gem} and \cite{a-gem} in the context of adapting the learning rates. C-FLAG is a federated counterpart of NCCL as an adaptive learner, when data is available at the edge user. As FL involves local training, the global greedy adaptation of learning rates in every epoch is not feasible. In our approach, IAG helps to adapt the learning rates based on the accumulated gradients. This allows C-FLAG to optimize the learning rates at each client while reducing global catastrophic forgetting.
\section{EXPERIMENTAL RESULTS AND DISCUSSIONS}
\label{sec:ExperiResults}
In this section, we present the experimental results to demonstrate the efficacy of the proposed C-FLAG algorithm in task-incremental settings. 

\textbf{Benchmarks:} We demonstrate the experimental results on continual learning benchmarks such as Split-CIFAR10, Split-CIFAR100, and Split-TinyImageNet. Split-CIFAR10 and Split-CIFAR100 are derived from the CIFAR10 and CIFAR100~\cite{krizhevsky2009learning} datasets, where the entire dataset is split into 5 tasks. Each task in Split-CIFAR10 has two classes, while each task in Split-CIFAR100 contains 20 classes. In the TinyImageNet~\cite{Le2015TinyIV} dataset, data is divided into 10 tasks, each consisting of 20 classes. For non-IID splits, we employ the Dirichlet partitioning technique to distribute task-specific data among the participating clients. This allows us to regulate the data heterogeneity in FL using the Dirichlet parameter $\zeta$. In this work, we use $\zeta = 0.1$ and $\zeta = 10^5$ for simulating the non-IID and IID scenarios, respectively. We use a ResNet-18~\cite{he2015deepresiduallearningimage} backbone for classification on Split-CIFAR10 and Split-CIFAR100 datasets, whereas we use a ResNet-50 backbone for classification on Split-TinyImageNet dataset.  on all the datasets. Other details of the experimental configuration have been delegated to the appendix.

\textbf{Baselines:} We compare the proposed method in both task-incremental and class-incremental setups against the following baselines. The primary distinction between these two setups lies in the usage of task/class identity. In the task-incremental setup, task identity is required during both the training and testing phases, whereas in the class-incremental setup, task information is needed only during training. Another key difference is how the classifier heads are handled: in the class-incremental setup, classifier heads are appended progressively as new classes are introduced using training, while in the task-incremental setup, all classifier heads are initialized as a list at the beginning of training and selected based on the task identity for training and testing. We now proceed to discuss the baselines used for the different settings, marking each baseline with `TI' for task-incremental and `CI' for class-incremental, respectively. In all the cases, FedAvg~\cite{FedAvg} is used to adapt the centralized continual learning techniques to the federated setting. 
\begin{itemize}
\item ~\texttt{Fine-FL} (TI, CI): A naive baseline where a client model trains on the current data.
\item ~\texttt{EWC-FL} (Elastic weight consolidation in FL ~\cite{EWC})(TI, CI):~This is a Fisher information based regularization method implemented in a federated manner.
\item ~\texttt{NCCL-FL}~\cite{nccl}(TI, CI): We implement the centralized NCCL technique in a federated manner.
\item \texttt{Erg-FL}~\cite{tiny-er}(TI): Ering-FL is an experience-replay based continual learning method implemented in a federated manner. 
\item \texttt{FedTrk}~\cite{fedtrack}(TI, CI): FedTrack is an FL technique that uses IAG for local updates.
\end{itemize}

{\scriptsize
\begin{table}[h!]
\centering
\caption{Average accuracy and forgetting for non-IID setting on Split-CIFAR10, Split-CIFAR100, and Split-TinyImageNet (TinyIN) with $5$ clients.}

\begin{tabular}{@{\hspace{3pt}}l@{\hspace{3pt}}c@{\hspace{3pt}}:c|c@{\hspace{3pt}}:c|c@{\hspace{3pt}}:c@{}}
\toprule
 & \multicolumn{2}{c|}{{\small CIFAR10}} & \multicolumn{2}{c|}{\small CIFAR100} & \multicolumn{2}{c}{\small TinyIN}  \\
\cmidrule(lr){2-3} \cmidrule(lr){4-5} \cmidrule(lr){6-7}
 & Acc\tiny{($\uparrow$)} & Fgt\tiny{($\downarrow$)} & Acc\tiny{($\uparrow$)} & Fgt\tiny{($\downarrow$)} & Acc\tiny{($\uparrow$)} & Fgt\tiny{($\downarrow$)} \\
\midrule
\texttt{Fine-FL}  & $54.10$ & $6.08$ & $38.66$ & $20.87$ & $23.72$ & $23.85$ \\
\texttt{EWC-FL}       & $53.55$ & $\mf{5.22}$ & $38.83$ & $19.20$ & $26.94$ & $20.29$ \\
\texttt{NCCL-FL}     & $63.35$ & $12.37$ & $32.25$ & $29.49$ & ${28.49}$ & $\mf{8.73}$ \\
\texttt{Erg-FL}  & $\mf{79.11}$ & 8.32 & 31.25 & 32.40 & 21.90 & 19.67 \\
\texttt{C-FLAG}  & 65.02 & $5.82$ & $\mf{43.47}$ & $\mf{16.76}$ & $\mf{28.63}$ & $9.52$ \\
\bottomrule
\end{tabular}
\label{tab:performance_non_iid}
\end{table}
}

{\scriptsize
\begin{table}[h!]
\centering
\caption{Average accuracy and forgetting for IID setting on Split-CIFAR10, Split-CIFAR100, And Split-TinyImageNet (TinyIN) with $5$ clients.}

\begin{tabular}{@{\hspace{3pt}}l@{\hspace{3pt}}c@{\hspace{3pt}}:c|c@{\hspace{3pt}}:c|c@{\hspace{3pt}}:c@{}}
\toprule
 & \multicolumn{2}{c|}{{\small CIFAR10}} & \multicolumn{2}{c|}{\small CIFAR100} & \multicolumn{2}{c}{\small TinyIN}  \\
\cmidrule(lr){2-3} \cmidrule(lr){4-5} \cmidrule(lr){6-7}
 & Acc\tiny{($\uparrow$)} & Fgt\tiny{($\downarrow$)} & Acc\tiny{($\uparrow$)} & Fgt\tiny{($\downarrow$)} & Acc\tiny{($\uparrow$)} & Fgt\tiny{($\downarrow$)} \\
\midrule
\texttt{Fine-FL}  & $72.64$ & $26.51$ & $49.82$ & $30.00$ & $30.17$ & $31.91$ \\
\texttt{EWC-FL}       & $75.98$ & $22.34$ & $50.48$ & $30.16$ & $33.18$ & $28.38$ \\
\texttt{NCCL-FL}      & $83.43$ & $17.10$ & $41.65$ & $39.23$ & $28.93$ & $9.51$ \\
\texttt{FedTrk}      & $79.86$ & $17.62$ & $29.85$& $18.11$ & $33.32$ & $28.57$ \\
\texttt{Erg-FL}  & $84.01$ & $16.50$ & $38.70$ & $43.66$ & 31.25 &32.40  \\
\texttt{C-FLAG} & {$\mf{89.28}$} & $\mf{7.23}$ & $\mf{66.85}$ & $\mf{7.30}$ & $\mf{44.30}$ & $\mf{7.65}$ \\
\bottomrule
\end{tabular}
\label{tab:performance_iid}
\end{table}
}

We also benchmark the performance of C-FLAG in the class-incremental setup by employing baselines such as LwF-FL (Learning without forgetting)~\cite{LwF}(CI), iCARL-FL~\cite{icarl}(CI) and TARGET~\cite{target}(CI). However, due to lack of space, we defer the results of the class-incremental setup to the supplementary.

\textbf{Metrics:} The metrics used to evaluate the C-FLAG and the benchmarks are as follows: \\
\text{1.~Average accuracy (Acc)}: It is the average global model accuracy of all the tasks at the end of the final task in the CFL process.\\
\text{2.~Forgetting (Fgt):} It is the average forgetting evaluated on the global model. For $S$ tasks, the forgetting is defined as $\text{Forget}=\tfrac{1}{S-1}{\textstyle\sum}_{i=1}^{S-1}(a_{i,i} - a_{S,i})$,
where $a_{i,j}$ denotes the global model accuracy of task-$j$ after training on the $i$-th task.

\begin{figure}[h!]
    \centering
        \includegraphics[scale=0.19]{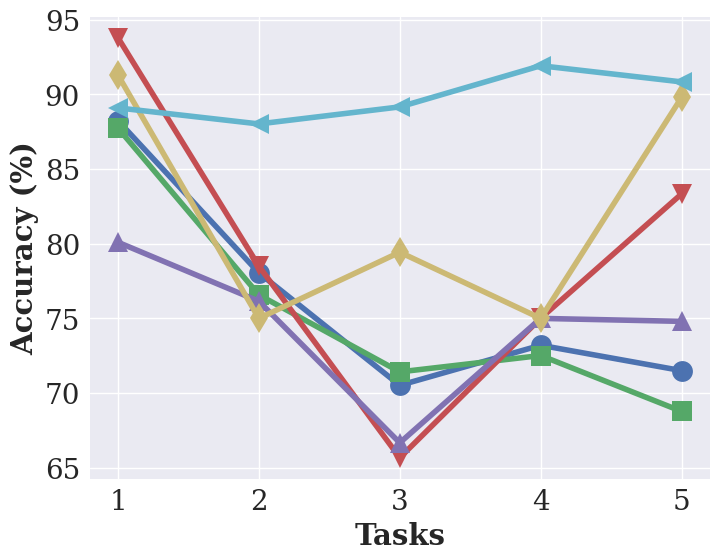}
    \includegraphics[scale=0.19]{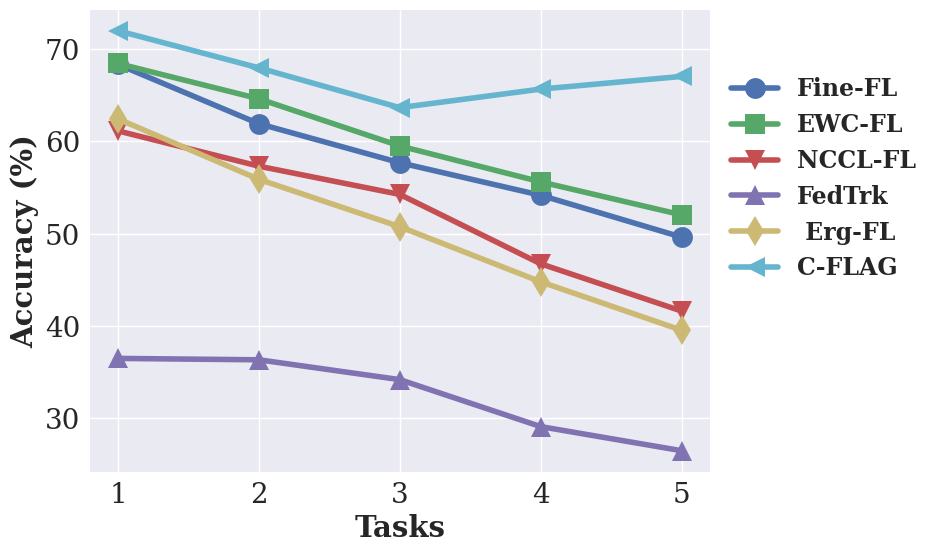}
\caption{ Average accuracy across tasks for IID splits of Split-CIFAR10 (Left) and Split-CIFAR100 (Right). }
    \label{fig:avg_task_iid}
\end{figure}

\textbf{Comparison with Benchmarks:} We perform experiments to benchmark the performance of the proposed C-FLAG approach as compared to the task-incremental baselines listed above. From Table~\ref{tab:performance_non_iid} and Table~\ref{tab:performance_iid}, we observe that the proposed C-FLAG technique outperforms all the baselines for the Split-CIFAR10, Split-CIFAR100, and Split-TinyImageNet datasets with respect to accuracy. In particular, we see that our technique leads to a very low value of forgetting for all the datasets. All entries in Table~\ref{tab:performance_non_iid} and Table~\ref{tab:performance_iid} are averaged over $3$ seeds, with standard deviation values provided in the supplementary material.

\begin{figure}[h!]
    \centering
    \includegraphics[scale=0.29]{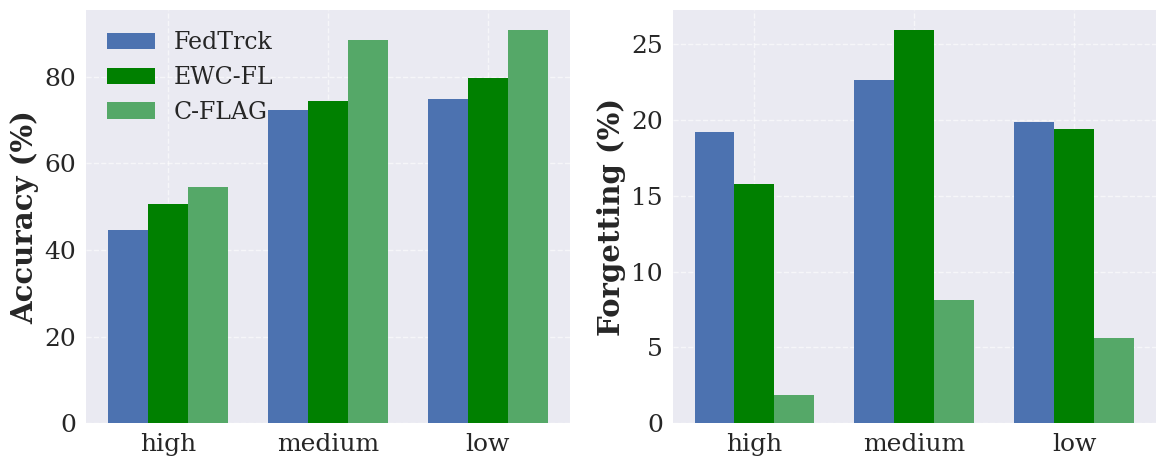}
    \includegraphics[scale=0.29]{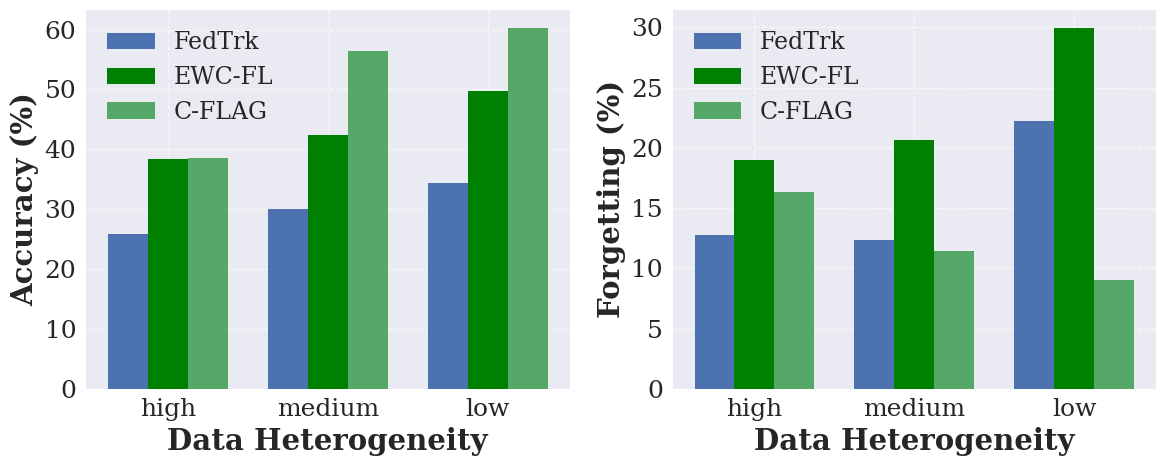}
\caption{Varying heterogeneity for C-FLAG, EWC-FL and FedTrk techniques on Split-CIFAR10 (Top) and Split-CIFAR100 (Bottom).}
    \label{fig:hetero_cifar100_cifar10}
\end{figure}
In Fig.~\ref{fig:avg_task_iid}, we depict the average accuracy plots on Split-CIFAR10 and Split-CIFAR100 datasets for the proposed technique in the task-incremental setup as compared to the baselines. We observe that C-FLAG outperforms the baselines by demonstrating the highest average accuracy and lowest forgetting after each task. 

\textbf{Ablation Study:} We perform ablations on data heterogeneity, the number of clients, and varying memory sampling sizes. 

\begin{figure}[h!]
    \centering
    \includegraphics[scale=0.26]{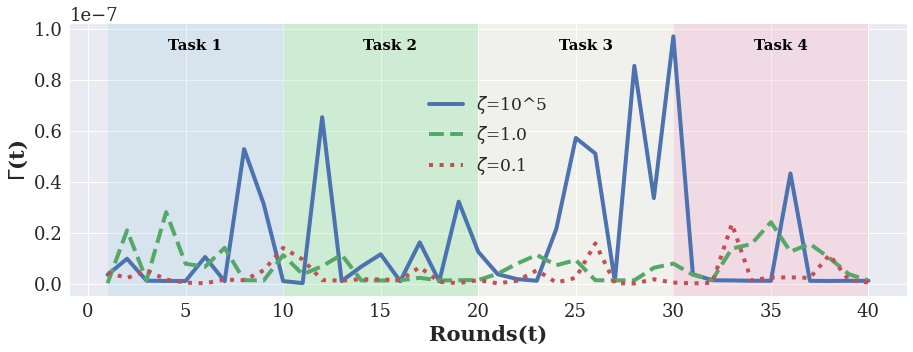}
\caption{Evolution of $\Gamma(t)$ against progressing tasks on Split-CIFAR10 dataset for varying heterogeneity.}
    \label{fig:forgetting}
\end{figure}

In Fig.~\ref{fig:hetero_cifar100_cifar10}, we illustrate the effectiveness of the proposed C-FLAG method across varying levels of data heterogeneity: low ($\zeta=10^5$), medium ($\zeta=1.0$) and high ($\zeta=0.1$), controlled by the Dirichlet parameter $\zeta$. The results clearly show that C-FLAG outperforms the EWC-FL and FedTrk baselines consistently across all levels of data heterogeneity. In Fig.~\ref{fig:forgetting}, we also examine the evolution of the forgetting term $\Gamma(t)$ as a function of communication rounds as the tasks progress. An interesting observation is that $\Gamma(t)$ in the IID case exhibits a larger variation as compared to the non-IID scenarios. This can be attributed to the fact that, in the IID case, the data distribution among clients is more uniform, which enables the global model to exhibit greater plasticity, thereby providing more opportunities for learning new patterns. Consequently, at the end of each round, the global model tends to forget more while assimilating new knowledge. In contrast, in non-IID scenarios, clients experience data drift, which limits the model's adaptability. From Fig.~\ref{fig:forgetting}, we also observe that whenever $\Gamma(t)$ shoots up, adaptive learning rates effectively help to mitigate this increase. 

\begin{figure}[htp]
    \centering
    \includegraphics[scale=0.21]{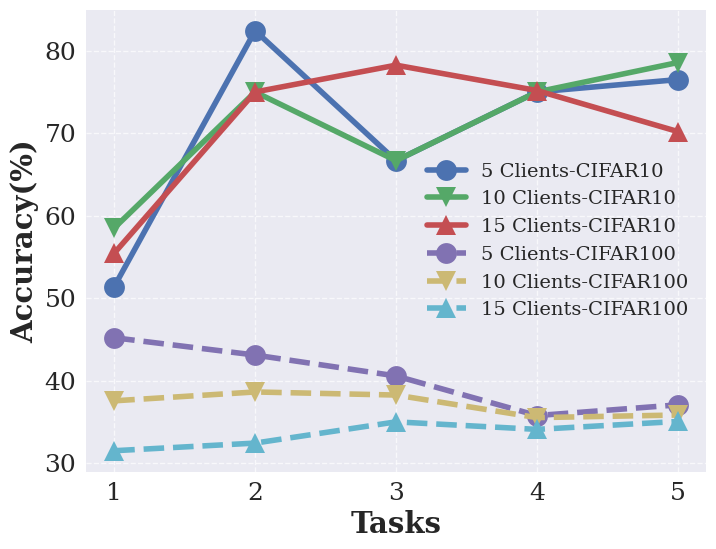}
    \includegraphics[scale=0.21]{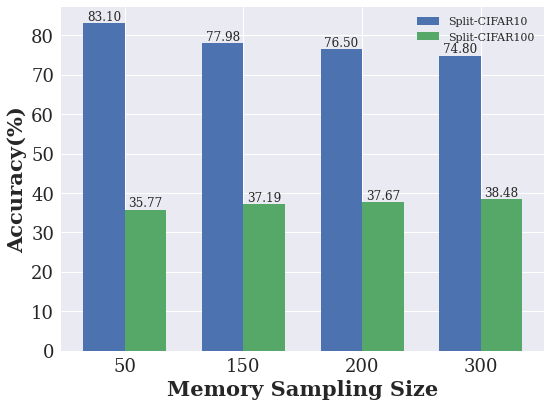}
\caption{Varying clients (left) and varying memory sample size (right) for C-FLAG on non-IID partitions of Split-CIFAR10 and Split-CIFAR100 dataset.}
    \label{fig:memory_hetero}
\end{figure}

The scalability of C-FLAG with varying clients can be observed in Fig.~\ref{fig:memory_hetero} (Left) for both, Split-CIFAR10 and Split-CIFAR100 datasets since the accuracy performance of C-FLAG remains consistent. We analyze the effect of varying sizes of the memory sample in C-FLAG, on non-IID partitions of Split-CIFAR10 and Split-CIFAR100 datasets using sampling sizes of $50$, $150$, $200$, and $300$ per task. Note that the initial buffer size is fixed at $400$ for this experiment. We observe from Fig.~\ref{fig:memory_hetero} (Right), that for Split-CIFAR10, accuracy decreases as the sampling size increases, whereas in Split-CIFAR100, accuracy improves with larger sampling sizes. Split-CIFAR10 and Split-CIFAR100 have $2$ and $20$ classes per task respectively. As memory size increases, the model has to accommodate constraints from a larger memory data. Unlike CIFAR10, in CIFAR100, the model has enough latent representational space to accommodate the constraints due to the higher number of classes per task.

\section{CONCLUSIONS AND FUTURE WORK}
\label{sec:Conclusions}
We proposed the novel C-FLAG algorithm, which is a replay-memory based federated strategy that combines the edge-based gradient updates on memory and aggregated gradients on the current data. We also presented the convergence analysis of C-FLAG. In C-FLAG, the server initiates the transition between tasks at clients, while the previous task data is selectively sampled into a memory buffer for each client. The global parameter update of C-FLAG is an aggregation of a single step on memory and a delayed gradient-based local step on the current data at each client. We demonstrate that the convergence on the previous task is largely dependent on the suppression of the catastrophic forgetting term. We extend the analysis further by optimizing the catastrophic forgetting term and derive adaptive learning rates that ensure seamless continual learning. Empirically, we show that C-FLAG outperforms the baselines w.r.t. metrics such as accuracy and forgetting in a task-incremental setup. The limitations of C-FLAG include the need to store data in the memory. However, since each client maintains its memory buffer at the edge, privacy constraints are not violated. C-FLAG requires an additional communication for obtaining the average memory gradients $\mgrad{f}{}{}{}$ and the average gradients of the clients on current data $\grad{g}{}$ at the beginning of each server round. 

\acknowledgments

We acknowledge the financial support provided by the iHub-Anubhuti-IIITD Foundation, set up under the NM-ICPS scheme of the Department of Science and Technology, Government of India. We also acknowledge the Prime Minister’s Fellowship for Doctoral Research (ANRF-FICCI) from ANRF and LightMetrics, Bengaluru, India, and the DRDO CARS project for their generous funding and support.

\bibliographystyle{plainnat}
\bibliography{citations}

\newpage

\section*{Checklist}

 \begin{enumerate}

 \item For all models and algorithms presented, check if you include:
 \begin{enumerate}
   \item A clear description of the mathematical setting, assumptions, algorithm, and/or model. [Yes]
   \item An analysis of the properties and complexity (time, space, sample size) of any algorithm. [Yes]
   \item (Optional) Anonymized source code, with specification of all dependencies, including external libraries. [Yes]
 \end{enumerate}

 \item For any theoretical claim, check if you include:
 \begin{enumerate}
   \item Statements of the full set of assumptions of all theoretical results. [Yes]
   \item Complete proofs of all theoretical results. [Yes, in Supplementary]
   \item Clear explanations of any assumptions. [Yes]     
 \end{enumerate}

 \item For all figures and tables that present empirical results, check if you include:
 \begin{enumerate}
   \item The code, data, and instructions needed to reproduce the main experimental results (either in the supplemental material or as a URL). [Yes]
   \item All the training details (e.g., data splits, hyperparameters, how they were chosen). [Yes]
         \item A clear definition of the specific measure or statistics and error bars (e.g., with respect to the random seed after running experiments multiple times). [Yes]
         \item A description of the computing infrastructure used. (e.g., type of GPUs, internal cluster, or cloud provider). [Yes]
 \end{enumerate}

 \item If you are using existing assets (e.g., code, data, models) or curating/releasing new assets, check if you include:
 \begin{enumerate}
   \item Citations of the creator If your work uses existing assets. [Yes]
   \item The license information of the assets, if applicable. [Not Applicable]
   \item New assets either in the supplemental material or as a URL, if applicable. [Not Applicable]
   \item Information about consent from data providers/curators. [Not Applicable]
   \item Discussion of sensible content if applicable, e.g., personally identifiable information or offensive content. [Not Applicable]
 \end{enumerate}

 \item If you used crowdsourcing or conducted research with human subjects, check if you include:
 \begin{enumerate}
   \item The full text of instructions given to participants and screenshots. [Not Applicable]
   \item Descriptions of potential participant risks, with links to Institutional Review Board (IRB) approvals if applicable. [Not Applicable]
   \item The estimated hourly wage paid to participants and the total amount spent on participant compensation. [Not Applicable]
 \end{enumerate}

 \end{enumerate}

\setcounter{section}{0}

\onecolumn
\aistatstitle{Supplementary Material for "On the Convergence of Continual Federated Learning Using Incrementally Aggregated Gradients"}

\section{PROOFS AND EXTENDED DISCUSSIONS}

\newstuff{

\begin{table}[!ht]
\centering
\caption{This table provides a summary of the key notations.}
\begin{tabularx}{\textwidth}{|c|>{\centering\arraybackslash}X|}
\hline
\textbf{\centering Notation} & \textbf{\centering Description} \\ \hline
$\mathcal{P}^i$                                  & Data from all the previous tasks at the $i$-th client            \\ \hline
$\mathcal{M}^i $                                 & Memory buffer at the $i$-th client                               \\ \hline
$\mathcal{C}^i$                                  & Current dataset at the $i$-th client                             \\ \hline
$h_i(\mathbf{x})$                               & Local loss function at the $i$-th client                         \\ \hline
$f_i(\mathbf{x})$                               & Restriction of $h_i(\mf{x})$ on $\mathcal{P}^i$             \\ \hline
$g_i(\mathbf{x})$                               & Restriction of $h_i(\mf{x})$ on $\mathcal{C}^i$             \\ \hline
$ \mathbf{x}_{t,k}^{i}$ &
  Model weight at the $i$-th client after $t$-th communication round and at the end of the $k$-th local epoch \\ \hline
$\mathbf{x}_{t} $                                & Server model weight at the end of the $t$-th communication round \\ \hline
$ \nabla {g}_{i}^{\prime}(\mathbf{x}_{t,k}^{i}) $ &
  Delayed gradient at the $i$-th client after $t$-th communication round and $k$-th local epoch \\ \hline
$\tau_{k,j}^i $ &
  Index for the most recent local step at which gradient was computed for the $k$-th datapoint at the $i$-th client \\ \hline
$ {b_i}(\mathbf{x}_t) $                            & Bias at the $i$-th client                                        \\ \hline
$ \nabla{f}^{\dagger}(\mathbf{x}_t) $ & Gradient on the memory data                                      \\ \hline
$ \Gamma(t) $                                    & Global catastrophic forgetting term  \\ \hline
$ h_{|_{\mc{M}\cup \mc{C}}}(\mathbf{x}_t) $ &
  Restriction of $h(\mathbf{x}_t)$ on $\mathcal{M} \cup \mathcal{C} $ \\ \hline
\end{tabularx}
\end{table}
}

In this section, we present the theoretical convergence analysis of C-FLAG, the proposed memory-based continual learning framework in a non-convex setting, along with the detailed proofs. 

First, we provide derivations of some results which will be used in the sequel to complete the proofs of the main theorems and lemmas. Notations-wise, we use the same notations as in the main manuscript. 

\textbf{Client drift after E local epochs}: At the beginning of the $t$-th global iteration, each client $i \in [N]$, receives current global model weight $\gw{t}$. Each client updates its weights for $E$ epochs to obtain $\lw{t,E}{i}$ resulting in a client drift from the current global model given as:
\begin{flalign}
     \lw{t,E}{i} - \gw{t} &= - \bet{t} \sumk{k=0}{E-1}\bigg( \grad{g}{t} - \grad{g_i}{t} + \dgrad{g}{i}{t,k}{i} \bigg) 
     \\
      &= - \bet{t} \sumk{k=0}{E-1}\bigg( \grad{g}{t} - \grad{g_i}{t} + \dgrad{g}{i}{t,k}{i} \bigg) \\
      &= - \bet{t} \sumk{k=0}{E-1}\bigg( \grad{g}{t} - \grad{g_i}{t} \bigg) - \bet{t} \sumk{k=0}{E-1} \dgrad{g}{i}{t,k}{i}  \\
     &= - \bet{t}E \bigg( \grad{g}{t} - \grad{g_i}{t} \bigg) - \bet{t} \sumk{k=0}{E-1} \dgrad{g}{i}{t,k}{i}. \label{eq: result1}
\end{flalign}
Summing up the weighted drifts across clients, we get
\begin{flalign}
    \sumk{i=1}{N}p_i (\lw{t,E}{i} - \gw{t}) &= -\bet{t}E \ub{\sumk{i=1}{N} p_i \bigg( \grad{g}{t} - \grad{g_i}{t} \bigg)}{=\, 0} -\bet{t}\sumk{i=1}{N}\sumk{k=0}{E-1}p_i \dgrad{g}{i}{t,k}{i} \\
    & = -\bet{t} \sumk{i=1}{N}\sumk{k=0}{E-1}p_i \dgrad{g}{i}{t,k}{i}. \label{eq: result2}
\end{flalign}
Hence, we see that the average client drift is a function of the aggregated delayed gradients at each client.

\textbf{Server Update Rule}:
At the end of the $t$-th global iteration, $i$-th client sends $(\lw{t,E}{i}- \alp{t}\mgrad{f}{}{t}{i})$ to the server and server aggregates its model weights to get $\gw{t+1}$ as follows:
\begin{flalign}
\gw{t+1} &= \sumk{i=1}{N}p_i (\lw{t,E}{i} - \alp{t}\mgrad{f}{}{t}{i}) \\
&\overset{\text{(a)}}{=}\gw{t} - \alp{t}  \mgrad{f}{}{t}{} - \bet{t} \sumk{i=1}{N}\sumk{k=0}{E-1}p_i \dgrad{g}{i}{t,k}{i} \label{eq: server_update-8}\\
& = \gw{t} - \alp{t} \sumk{i=1}{N}p_i \mgrad{f}{i}{t}{} - \bet{t} \sumk{i=1}{N}\sumk{k=0}{E-1}p_i \dgrad{g}{i}{t,k}{i} \label{eq: server_update} 
\end{flalign}
where (a) follows from \eqref{eq: result2}.


\subsection{Essential Lemmas and Proofs}

For purposes of clarity, we restate the assumptions as follows:

\begin{assump} 
(L-smoothness). For all $i \in [N]$, $f_i, g_i, h_i$ are L-smooth. 
\end{assump}
\begin{assump} 
    (Bounded Bias). There exists constants $0 \leq m_i < 1 $ for all $\mf{x} \in \mathbb{R}^d$ such that $\norm{{b_i}(\mf{x})}^{2} \leq m_i\norm{\grad{f_i}{}}^2, \quad \forall i \in [N]$.
\end{assump}
\begin{assump} (Bounded memory gradient).
    There exists $r_{i} \in \mathbb{R}^{+}$, such that $\lVert{\mgrad{f}{i}{t}{}}\rVert \leq r_i \norm{\grad{g}{t}}$ for all $i \in [N]$. 
\end{assump}

Additionally, we note that $g_i(\gw{})$ is defined as a sum of it component functions as $g_i(\gw{}) = \tfrac{1}{|\mc{C}^i|} {\textstyle\sum}_{j\in \mc{C}^i}{} g_{i,j} (\gw{})$, where each of the $g_{i,j}$s are also L-smooth.

In the following lemmas, we show that the expectation of the total bias term across clients, $b(\mf{x}) = \sumk{i=1}{N}p_i b_i(\mf{x})$, is zero. We also provide an upper bound on the squared norm of $b(\mf{x})$. 
\begin{lemma}
If $\mc{M}_0^i$ is uniformly sampled from $\mc{P}^i$ at each client $i \in [N]$, then
$\mathbb{E}_{\mc{M}_t}[ b(\mf{x})] = \sumk{i=1}{N}p_i\mathbb{E}_{\mc{M}_t^i}[b_i(\mf{x})] = 0 \; \forall \mf{x} \in \mathbb{R}^d$. \label{lemma:zero-bias}
\end{lemma}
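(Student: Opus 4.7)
The claim is that the average sampling bias across clients vanishes in expectation. My plan is to unpack what $b_i(\mf{x})$ is in terms of the memory sampling, then recognize the memory gradient as a classical unbiased sample-mean estimator, and finally use linearity of expectation to conclude.

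The first step is to write out the definitions explicitly. Recall that $\mgrad{f}{i}{}{} = \tfrac{1}{|\mc{M}^i|}\sum_{j\in\mc{M}^i}\nabla f_{i,j}(\mf{x})$ is the full gradient of $f_i$ restricted to the memory buffer, while $\grad{f_i}{} = \tfrac{1}{|\mc{P}^i|}\sum_{j\in\mc{P}^i}\nabla f_{i,j}(\mf{x})$ is the gradient on all past data. By the defining decomposition \eqref{eq:biasedGrad}, the per-client bias is
\begin{equation*}
b_i(\mf{x}) \;=\; \mgrad{f}{i}{}{} - \grad{f_i}{} \;=\; \frac{1}{|\mc{M}^i|}\sum_{j\in\mc{M}^i}\nabla f_{i,j}(\mf{x}) \;-\; \frac{1}{|\mc{P}^i|}\sum_{j\in\mc{P}^i}\nabla f_{i,j}(\mf{x}).
\end{equation*}
Because $\mc{M}_t^i = \mc{M}_0^i$ is fixed across all $t\in\{0,\ldots,T-1\}$, it suffices to take expectations with respect to the initial uniform sample $\mc{M}_0^i$.

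Next I would compute the expectation of the memory gradient. Writing $\mgrad{f}{i}{}{} = \tfrac{1}{|\mc{M}_0^i|}\sum_{j\in\mc{P}^i}\mathbbm{1}\{j\in\mc{M}_0^i\}\nabla f_{i,j}(\mf{x})$ and using the fact that under uniform sampling (with or without replacement of a fixed-size subset) every data point $j\in\mc{P}^i$ satisfies $\mathbb{E}[\mathbbm{1}\{j\in\mc{M}_0^i\}] = |\mc{M}_0^i|/|\mc{P}^i|$, linearity of expectation gives
\begin{equation*}
\mathbb{E}_{\mc{M}_0^i}\bigl[\mgrad{f}{i}{}{}\bigr] \;=\; \frac{1}{|\mc{M}_0^i|}\cdot\frac{|\mc{M}_0^i|}{|\mc{P}^i|}\sum_{j\in\mc{P}^i}\nabla f_{i,j}(\mf{x}) \;=\; \grad{f_i}{}.
\end{equation*}
Subtracting $\grad{f_i}{}$ then yields $\mathbb{E}_{\mc{M}_t^i}[b_i(\mf{x})] = 0$ for every $i\in[N]$ and every $\mf{x}\in\mathbb{R}^d$.

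Finally, applying linearity of expectation across clients and noting that the individual memory buffers are sampled independently gives
\begin{equation*}
\mathbb{E}_{\mc{M}_t}\!\left[b(\mf{x})\right] \;=\; \sum_{i=1}^{N} p_i\,\mathbb{E}_{\mc{M}_t^i}[b_i(\mf{x})] \;=\; 0,
\end{equation*}
which is the claim. The only step that deserves care is the uniform-sampling expectation, since depending on whether the buffer is drawn with or without replacement the combinatorics differ slightly; but in both cases the inclusion probability is $|\mc{M}_0^i|/|\mc{P}^i|$ by symmetry, so the conclusion is the same. I do not expect any serious obstacle here—this lemma is essentially the standard unbiasedness of uniform mini-batch sampling, lifted to the federated setting via the convex combination $\sum_i p_i(\cdot)$.
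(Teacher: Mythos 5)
Your proposal follows the same route as the paper's proof: uniform sampling of the fixed buffer $\mc{M}_0^i$ makes $\mgrad{f}{i}{}{}$ an unbiased estimator of $\grad{f_i}{}$, so $\mathbb{E}_{\mc{M}_t^i}[b_i(\mf{x})]=0$ per client, and linearity of expectation over the convex combination finishes the claim. The only difference is that you spell out the unbiasedness via the inclusion-probability calculation, a detail the paper simply asserts; the argument is correct either way.
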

\begin{proof}
From the definition of the bias function in the main manuscript~\ref{eq:biasedGrad}, we have $b_i(\mf{x}) = \mgrad{f}{i}{}{} - \grad{f_i}{} $, where the biased gradient is computed on the memory data $\mc{M}_t^i$ at $t$-th time-step for the $i$-th client. In the episodic memory (ring buffer) scheme, at each client $i \in [N]$ we construct the memory data as $\mc{M}_t^{i} = \mc{M}_0^{i} \; \forall t \in \{0,1,\cdots, T-1  \}$ and the initial memory $\mc{M}_0^{i}$ is uniformly chosen from the IID data stream of the past task dataset  $\mathcal{P}_i$. Using this, we obtain
\begin{flalign}
\mathbb{E}_{\mc{M}_t^i}[\mgrad{f}{i}{}{}] = \mathbb{E}_{\mc{M}_0^i}[\mgrad{f}{i}{}{}]  = \grad{f_i}{}, 
\end{flalign}
which leads to $\mathbb{E}_{\mc{M}_t^i}[b_i(\mf{x})] = 0 \; \forall \mf{x} \in \mathbb{R}^d$. Hence, $\mathbb{E}_{\mc{M}_t}[b(\mf{x})] = \sumk{i=1}{N}p_i  \mathbb{E}_{\mc{M}_t^i}[b_i(\mf{x})] = 0$.
\end{proof}

\begin{lemma} Given assumption~\ref{asp: bounded-bias} holds, there exists a $m \in \mathbb{R}^+ $, $\forall \mf{x} \in \mathbb{R}^d $  such that 
\begin{align}
 \norm{b(\mf{x})}^2 = \norm{\sumk{i=1}{N}p_i b_i(\mf{x})}^2 \leq  m \norm{\grad{f}{}}^2.
\end{align}
\label{lemma: 2}
    \begin{proof}
        From the assumption~\ref{asp: bounded-bias}, we have
        $\norm{b_i(x)}^2 \leq m_i \norm{\grad{f_i}{}}^2$. We assume there exists some $m \in \mathbb{R}^+$ such that 
        \begin{flalign}
        m_i \norm{\grad{f_i}{}}^2 \leq m \norm{\grad{f}{}}^2 \; \forall i \in [N]. \label{eq: uniform-bdd-bias}
        \end{flalign}
        Using this we obtain the following:
        \begin{flalign}
             \norm{b(\mf{x})}^2 &= \norm{\sumk{i=1}{N}p_i b_i(\mf{x})}^2 {\overset{\text{(a)}}{\leq}} \sumk{i=1}{N}p_i \norm{b_i(\mf{x})}^2 \overset{\text{(b)}}{\leq} \sumk{i=1}{N}p_i m \norm{\grad{f}{}}^2 = m \norm{\grad{f}{}}^2.
        \end{flalign}
        where (a) follows from the Jensen's inequality and (b) follows from \eqref{eq: uniform-bdd-bias}.
    \end{proof}
\end{lemma}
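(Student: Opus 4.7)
The plan is to reduce the aggregate bound to a collection of per-client bounds and then pass from local gradient norms to the global gradient norm. First I would apply Jensen's inequality to pull the $p_i$'s outside the squared norm: since $p_i \geq 0$ and $\sum_i p_i = 1$, the squared Euclidean norm is convex, so
\begin{align*}
\norm{b(\mf{x})}^2 = \norm{\sumk{i=1}{N} p_i b_i(\mf{x})}^2 \leq \sumk{i=1}{N} p_i \norm{b_i(\mf{x})}^2.
\end{align*}
Next, I would invoke Assumption~\ref{asp: bounded-bias} termwise to replace each $\norm{b_i(\mf{x})}^2$ by $m_i \norm{\grad{f_i}{}}^2$, obtaining
\begin{align*}
\norm{b(\mf{x})}^2 \leq \sumk{i=1}{N} p_i\, m_i\, \norm{\grad{f_i}{}}^2.
\end{align*}

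The remaining task is to replace each local gradient norm $\norm{\grad{f_i}{}}^2$ with the global gradient norm $\norm{\grad{f}{}}^2 = \norm{\sumk{j=1}{N} p_j \grad{f_j}{}}^2$. This is the main obstacle, because without further structure these quantities are not comparable pointwise: in a heterogeneous setting, clients' gradients can cancel in the average, so $\norm{\grad{f}{}}^2$ may be much smaller than any individual $\norm{\grad{f_i}{}}^2$. The cleanest way forward is to postulate existence of a uniform constant $m$ such that $m_i \norm{\grad{f_i}{}}^2 \leq m \norm{\grad{f}{}}^2$ for every $i$; this is essentially a bounded-heterogeneity condition folded into the choice of $m$. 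Granting this uniform comparison, the final bound follows immediately:
\begin{align*}
\norm{b(\mf{x})}^2 \leq \sumk{i=1}{N} p_i\, m\, \norm{\grad{f}{}}^2 = m \norm{\grad{f}{}}^2,
\end{align*}
since the weights sum to one.

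I expect the first reduction (Jensen) to be routine, and the per-client application of Assumption~\ref{asp: bounded-bias} to be immediate. The substantive step is justifying the uniform constant $m$ that dominates all $m_i \norm{\grad{f_i}{}}^2 / \norm{\grad{f}{}}^2$ ratios; in the absence of an explicit bounded-dissimilarity hypothesis, this must be taken as a standing assumption, and all subsequent convergence claims relying on Lemma~\ref{lemma: 2} inherit that dependence on $m$. An alternative route would be to strengthen Assumption~\ref{asp: bounded-bias} directly in terms of $\norm{\grad{f}{}}^2$, but then the per-client bias model loses its intuitive local interpretation, so the present formulation is the more natural one.
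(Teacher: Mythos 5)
Your proposal is correct and follows essentially the same route as the paper: Jensen's inequality to pull the weights outside the squared norm, termwise application of Assumption~\ref{asp: bounded-bias}, and then the uniform comparison $m_i \norm{\grad{f_i}{}}^2 \leq m \norm{\grad{f}{}}^2$ to pass to the global gradient. You are right that this last step is an additional bounded-heterogeneity hypothesis rather than a consequence of the stated assumption; the paper's proof introduces it the same way (``we assume there exists some $m$''), so your explicit flagging of it is the only difference, and an accurate one.
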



\begin{lemma}\label{lemma:avegage-memory-bound}
    Suppose that the bounded memory gradient assumption~\ref{Asp:bounded-memory} holds. Then there exists a constant $r \in \mathbb{R}^{+}$, such that $\norm{\mgrad{f}{}{t}{}} \leq r \norm{\grad{g}{t}}$ holds.
\end{lemma}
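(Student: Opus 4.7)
The plan is to reduce the aggregate bound to the per-client bound supplied by Assumption~\ref{Asp:bounded-memory} via a straightforward triangle-inequality argument. Since the global memory gradient is, by definition, the $p_i$-weighted average of the local memory gradients, $\mgrad{f}{}{t}{} = \sumk{i=1}{N} p_i \mgrad{f}{i}{t}{}$, the norm on the left-hand side is immediately amenable to the triangle inequality.

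First, I would write
\begin{flalign*}
\norm{\mgrad{f}{}{t}{}} = \norm{\sumk{i=1}{N} p_i \mgrad{f}{i}{t}{}} \leq \sumk{i=1}{N} p_i \norm{\mgrad{f}{i}{t}{}},
\end{flalign*}
using the triangle inequality together with $p_i \geq 0$. Next, I would invoke Assumption~\ref{Asp:bounded-memory}, which states that for each client $i \in [N]$ there exists $r_i \in \mathbb{R}^+$ with $\norm{\mgrad{f}{i}{t}{}} \leq r_i \norm{\grad{g}{t}}$. Substituting this into the previous display yields
\begin{flalign*}
\norm{\mgrad{f}{}{t}{}} \leq \sumk{i=1}{N} p_i r_i \norm{\grad{g}{t}} = \Bigl(\sumk{i=1}{N} p_i r_i\Bigr) \norm{\grad{g}{t}}.
\end{flalign*}
Setting $r \coloneqq \sumk{i=1}{N} p_i r_i$, which is a finite positive constant since each $r_i > 0$, $p_i \geq 0$, and $\sumk{i=1}{N} p_i = 1$, gives the claimed bound $\norm{\mgrad{f}{}{t}{}} \leq r \norm{\grad{g}{t}}$.

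There is no real obstacle here; the lemma is essentially a definitional consequence of linearity of the weighted sum combined with the triangle inequality. The only minor subtlety worth flagging in the write-up is to justify that $r$ is independent of $t$, which follows because each $r_i$ in Assumption~\ref{Asp:bounded-memory} is a fixed constant not depending on the global round. I would therefore keep the proof to these two displays and a one-sentence remark defining $r$.
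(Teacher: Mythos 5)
Your proof is correct and follows essentially the same argument as the paper: triangle inequality on the weighted sum followed by the per-client bound from Assumption~\ref{Asp:bounded-memory}. The only (immaterial) difference is the choice of constant — you take $r = \sum_i p_i r_i$ while the paper takes $r = \max_i r_i$; both work, and yours is in fact the slightly tighter of the two.
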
 

\begin{proof}
Using the triangle inequality, the bounded memory gradient assumption, and choosing $r = max\{r_i | i \in [N]\}$, we obtain
    \begin{flalign}
    \norm{\mgrad{f}{}{t}{}} &= \norm{\sumk{i=1}{N}p_i \mgrad{f}{i}{t}{}}  \leq \sumk{i=1}{N}p_i \norm{\mgrad{f}{i}{t}{}} \leq \sumk{i=1}{N}p_i r_i \norm{\grad{g}{t}} \leq r \norm{\grad{g}{t}}
\end{flalign}
\end{proof}

\subsection{Proofs of the Lemmas and Theorems}

Since we use delayed gradients which is an estimate of the local gradient on the current task's dataset there is an associated gradient estimation error, $\error{t,k}{i}$, at each local epoch $k$ given by 
\begin{flalign}
    \error{t,k}{i} = \dgrad{g}{i}{t,k}{i} - \lgrad{g}{t,k}{i}. \label{eq: grad-error}
\end{flalign}
Using the above, we rephrase \eqref{eq: LUR} with the additional gradient error term as 
\begin{align}
    \lw{t,k+1}{i} = \lw{t,k}{i} - \bet{t} \left( \grad{g}{t} - \grad{g_i}{t} + \lgrad{g}{t,k}{i} + \error{t,k}{i} \right).\label{eq: LUR-error}
\end{align}

In this section, we provide proof of the lemmas and theorems stated in the main manuscript. First, we start with the Lem.~\ref{thm:grad-f} followed by Theorem~\ref{lemma:min-grad-f}.

\textbf{Lemma 1}: Suppose that the assumptions \ref{asp: L-smoothness}, \ref{asp: bounded-bias} hold and $\alp{t} < \frac{2}{L(1+m)}$. For the sequence $\{\gw{t}\}_{t=1}^{T}$ generated by the algorithm~\ref{alg: A} , we have
    \begin{flalign}
        \norm{\grad{f}{t}}^2  \leq \frac{1}{\alp{t}[1-\frac{L}{2}\alp{t}(1+m)]} \big( f(\gw{t}) - f(\gw{t+1}) + \bias{} +\Gamma(t) \big),
    \end{flalign} 
where $B(t)$ is the overfitting term defined as 
\begin{flalign}
    B(t) = {(L\alp{t}^2 - \alp{t})\langle{\grad{f}{t}, b(\gw{t})}\rangle + \bet{t}\langle{b(\gw{t}), {\textstyle\sum}_{i=1}^{N}{\textstyle\sum}_{k=1}^{E-1}p_i\dgrad{g}{i}{t,k}{i}}\rangle.}
\end{flalign}
Further, $\Gamma(t)$ is the forgetting term defined as
\begin{flalign}
   \Gamma(t) &= \frac{L}{2}\bet{t}^2\lVert{\textstyle\sum}_{i=1}^{N}{\textstyle\sum}_{k=1}^{E-1}p_i\dgrad{g}{i}{t,k}{i}\rVert^2 - \bet{t}(1-L\alp{t}) \langle{\mgrad{f}{}{t}{}, {\textstyle\sum}_{i=1}^{N}{\textstyle\sum}_{k=1}^{E-1}p_i\dgrad{g}{i}{t,k}{i}}\rangle. 
\end{flalign}

\textit{Proof.} We start our analysis using $L$- smoothness of $f(\cdot)$ and the server update rule \eqref{eq: server_update}. By $L$-smoothness of $f(\cdot)$, we have
\begin{flalign}
    f(\gw{t+1}) - f(\gw{t}) \leq  \ub{\inner*{\grad{f}{t}, \gw{t+1} - \gw{t}} }{A} + \ub{\frac{L}{2} \norm{\gw{t+1} - \gw{t}}^2}{B}. \label{eq:f-eqn}
\end{flalign}
We separately upper bound the terms $A$ and $B$. Using the update rule given in \eqref{eq: server_update-8} and separating the terms involving gradient over the memory and current data, we obtain
\begin{flalign}
A &= \inner*{\grad{f}{t}, \gw{t+1} - \gw{t}} \\
& = \inner{\grad{f}{t},  - \alp{t}  \sumk{i=1}{N}p_i\mgrad{f}{i}{t}{} - \bet{t} \sumk{i=1}{N}\sumk{k=0}{E-1}p_i \dgrad{g}{i}{t,k}{i} } \\
& = \ub{- \alp{t} \inner*{\grad{f}{t}, \sumk{i=1}{N}p_i\mgrad{f}{i}{t}{}}}{AI}  \ub{-\bet{t} \inner*{\grad{f}{t}, \sumk{i=1}{N}\sumk{k=0}{E-1}p_i \dgrad{g}{i}{t,k}{i}}}{AII}. \label{eq:A}
\end{flalign}
In the above, the first term ($AI$) measures the effect of the previous task's memory data gradients, and the second term ($AII$) measures the effect of the current task's delayed gradients. Considering $AI$ and using the biased gradient $\mgrad{f}{i}{}{} = \grad{f_i}{} + b_i(\mf{x})$ we have the following:
\begin{flalign}
    AI &=  - \alp{t} \inner*{\grad{f}{t}, \sumk{i=1}{N}p_i \mgrad{f}{i}{t}{}} \\
        &= -\alp{t}\inner*{\grad{f}{t}, \sumk{i=1}{N}p_i(\grad{f_i}{t}+b_i({\gw{t}}))} \\
        &= -\alp{t} \inner*{\grad{f}{t}, \grad{f}{t}} -\alp{t}\inner*{\grad{f}{t}, b(\gw{t})} \\
        &= -\alp{t}\norm{\grad{f}{t}}^2 - \alp{t}\inner*{\grad{f}{t}, b(\gw{t})}.
        \label{eq:AI}
\end{flalign}

Next, we simplify the term $AII$ as follows:
\begin{flalign}
    AII &= - \bet{t} \inner*{\grad{f}{t}, \sumk{i=1}{N}\sumk{k=0}{E-1}p_i \dgrad{g}{i}{t,k}{i}} \\
        &= - \bet{t} \inner*{\mgrad{f}{}{t}{} - b(\gw{t}), \sumk{i=1}{N}\sumk{k=0}{E-1}p_i \dgrad{g}{i}{t,k}{i}} \\
        & = - \bet{t} \inner*{\mgrad{f}{}{t}{} , \sumk{i=1}{N}\sumk{k=0}{E-1}p_i \dgrad{g}{i}{t,k}{i}} + \bet{t} \inner*{b(\gw{t}), \sumk{i=1}{N}\sumk{k=0}{E-1}p_i \dgrad{g}{i}{t,k}{i}}  \label{eq:AII}
        %
        %
\end{flalign}
\noindent Next we consider the term $B$, which represents the difference between two consecutive global model weights generated through our proposed algorithm~\ref{alg: A}. Using \eqref{eq: server_update}, we have the following:
\begin{flalign}
    B &=  \frac{L}{2} \norm{\gw{t+1} - \gw{t}}^2 \\
    &= \frac{L}{2} \norm{\alp{t} \sumk{i=1}{N}p_i \mgrad{f}{i}{t}{} + \bet{t} \sumk{i=1}{N}\sumk{k=0}{E-1}p_i \dgrad{g}{i}{t,k}{i}}  \\
    &= \ub{\frac{L}{2}\alp{t}^2 \norm{\sumk{i=1}{N}p_i \mgrad{f}{i}{t}{}}^2}{BI} + \ub{\frac{L}{2}\bet{t}^2\norm{\sumk{i=1}{N}\sumk{k=0}{E-1}p_i \dgrad{g}{i}{t,k}{i}}  ^2}{BII}\nonumber\\
    &+ \ub{L\alp{t}\bet{t}\inner*{\sumk{i=1}{N}p_i \mgrad{f}{i}{t}{}, \sumk{i=1}{N}\sumk{k=0}{E-1}p_i \dgrad{g}{i}{t,k}{i}  }}{BIII} \label{eq:B}
\end{flalign}
Similar to our previous approach, next we will bound each of the terms in $B$ separately. Using the definitions of biased gradients, $\grad{f}{t}$ and $b(\gw{t})$, we obtain
\begin{flalign}
    BI &= \frac{L}{2}\alp{t}^2 \norm{\sumk{i=1}{N}p_i \mgrad{f}{i}{t}{}}^2 \\
    &= \frac{L}{2}\alp{t}^2 \norm{\sumk{i=1}{N}p_i (\grad{f_i}{t} + b_i(\gw{t})) }^2 \\
    &= \frac{L}{2} \alp{t}^2 \norm{\grad{f}{t}+ b(\gw{t})}^2 \\
    &= \frac{L}{2} \alp{t}^2 \norm{\grad{f}{t}}^2 + \frac{L}{2} \alp{t}^2 \norm{b(\gw{t})}^2 + L\alp{t}^2\inner*{\grad{f}{t}, b(\gw{t})}. \label{eq:BI}
\end{flalign}
We keep the term $BII$ as it is. Finally, 
we simplify $BIII$ as
\begin{flalign}
    BIII &= L\alp{t}\bet{t}\inner*{\sumk{i=1}{N}p_i \mgrad{f}{i}{t}{}, \sumk{i=1}{N}\sumk{k=0}{E-1}p_i\dgrad{g}{i}{t,k}{i}} = \sumk{k=0}{E-1} L\alp{t}\bet{t}\inner*{\mgrad{f}{}{t}{}, \sumk{i=1}{N}p_i\dgrad{g}{i}{t,k}{i}} 
\end{flalign}

Finally putting all the resulting terms in ~\eqref{eq:f-eqn}, we obtain the following:
\begin{flalign}
    f(\gw{t+1}) - f(\gw{t}) &\leq  -\alp{t}\norm{\grad{f}{t}}^2 - \alp{t}\langle{\grad{f}{t}, b(\gw{t})}\rangle - \bet{t} \inner*{\mgrad{f}{}{t}{} , \sumk{i=1}{N}\sumk{k=0}{E-1}p_i \dgrad{g}{i}{t,k}{i}} \nonumber \\
    &+ \bet{t} \inner*{b(\gw{t}), \sumk{i=1}{N}\sumk{k=0}{E-1}p_i \dgrad{g}{i}{t,k}{i}} + \frac{L}{2} \alp{t}^2 \norm{\grad{f}{t}}^2 + \frac{L}{2} \alp{t}^2 \norm{b(\gw{t})}^2 \nonumber \\
    &+ L\alp{t}^2\inner*{\grad{f}{t}, b(\gw{t})} + \frac{L}{2}\bet{t}^2\norm{\sumk{i=1}{N}\sumk{k=0}{E-1}p_i \dgrad{g}{i}{t,k}{i}}  ^2 \nonumber \\
    &+ \sumk{k=0}{E-1} L\alp{t}\bet{t}\inner*{\mgrad{f}{}{t}{}, \sumk{i=1}{N}p_i(\dgrad{g}{i}{t,k}{i}} \nonumber \\
    &\leq -\alp{t}(1-\frac{L}{2}\alp{t})\norm{\grad{f}{t}}^2 + \bias{} + \Gamma(t) + \frac{L}{2}\alp{t}^2 m\norm{\grad{f}{t})}^2 \text{ (using Lem. ~\ref{lemma: 2}) }\\
    &= -\alp{t}\big[1-\frac{L}{2}\alp{t}(1+m)\big] \norm{\grad{f}{t}}^2 + \bias{} + \Gamma(t),
\end{flalign}
where the overfitting term $B(t)$ is given as
\begin{flalign}
    B(t)= {(L\alp{t}^2 - \alp{t})\langle{\grad{f}{t}, b(\gw{t})}\rangle + \bet{t}\langle{b(\gw{t}), {\textstyle\sum}_{i=1}^{N}{\textstyle\sum}_{k=1}^{E-1}p_i\dgrad{g}{i}{t,k}{i}}\rangle}, 
\end{flalign}
and the forgetting term $\Gamma(t)$ is given as
\begin{flalign}
    \Gamma(t) = & \frac{L}{2}\bet{t}^2\lVert{\textstyle\sum}_{i=1}^{N}{\textstyle\sum}_{k=1}^{E-1}p_i\dgrad{g}{i}{t,k}{i}\rVert^2 - \bet{t}(1-L\alp{t}) \langle{\mgrad{f}{}{t}{}, {\textstyle\sum}_{i=1}^{N}{\textstyle\sum}_{k=1}^{E-1}p_i\dgrad{g}{i}{t,k}{i}}\rangle .
\end{flalign}
Using $\alp{t} < \frac{2}{L(1+m)}$ and re-arranging the terms, we obtain
\begin{flalign}
    \norm{\grad{f}{t}}^2  \leq \frac{1}{\alp{t}[1-\frac{L}{2}\alp{t}(1+m)]} \left( f(\gw{t}) - f(\gw{t+1}) + \bias{} +\Gamma(t) \right). \label{eq:theorem-grad-f}
\end{flalign}

\textbf{Theorem 2}: Suppose that the assumptions \ref{asp: L-smoothness}, \ref{asp: bounded-bias} hold. Given $F = f(\gw{0}) - f(\gw{T})$, the sequence $\{\gw{t}\}_{t=1}^{T}$ generated by algorithm \ref{alg: A} with $\alp{t} = \alp{} = \frac{1}{L(m+1)} \; \forall t \in \{0,1, \cdots, T-1 \} $, and $m\in\mathbb{R}^{+}$, satisfies 
\begin{flalign*}
    \min\limits_{t}  \mathbb{E} \left[\norm{\grad{f}{t}}^2 \right] \leq \frac{2L(1+m)}{T} \bigg(F + \sumk{t=0}{T-1}\mathbb{E}[\Gamma(t)] \bigg).
\end{flalign*} 

\textit{Proof.} Taking expectation with respect to the choice of the memory $\mc{M}_t$ on both sides of \eqref{eq:theorem-grad-f}, we obtain
\begin{flalign}
    \mathbb{E}_{\mc{M}_t} \bigg[\norm{\grad{f}{t}}^2 \bigg]  \leq \mathbb{E}_{\mc{M}_t} \bigg[ \frac{1}{\alp{t}[1-\frac{L}{2}\alp{t}(1+m)]} \left( f(\gw{t}) - f(\gw{t+1}) + \bias{} +\Gamma(t) \right) \bigg].
\end{flalign}
Given the learning rate $\alp{t} = \alp{}$ for all $t \in \{0,1,\cdots, T-1  \}$ and taking average over $T$ iterations ($t=0$ to $T-1$), of the above, we obtain
\begin{flalign}
    \min\limits_{t}  \mathbb{E} \left[\norm{\grad{f}{t}}^2 \right] & \leq \frac{1}{T} \sumk{t=0}{T-1}\mathbb{E}\left[\norm{\grad{f}{t}}^2\right] \\
    & \leq \frac{1}{T} \sumk{t=0}{T-1} \frac{1}{\alp{}[1-\frac{L}{2}\alp{}(1+m)]} \bigg(f(\gw{t}) - f(\gw{t+1}) + \mathbb{E} [\bias{} +\Gamma(t)]\bigg) \\     
     & \leq \frac{1}{T\alp{}[1-\frac{L}{2}\alp{}(1+m)]} \bigg( f(\gw{0}) - f(\gw{T}) + \sumk{t=0}{T-1} \mathbb{E} [\bias{} +\Gamma(t)] \bigg) \\
     & \overset{\text{(a)}}{=} \frac{1}{T\alp{}[1-\frac{L}{2}\alp{}(1+m)]} \bigg( F + \sumk{t=0}{T-1} \mathbb{E} [\Gamma(t)] \bigg)  \\
     & \overset{\text{(b)}}{=} \frac{2L(1+m)}{T} \big(F + \sumk{t=0}{T-1}\mathbb{E}[\Gamma(t)] \big), \label{eq: 11}
\end{flalign}
where (a) follows from Lem.~\ref{lemma:zero-bias} and (b) follows using the step size $\alp{} = \frac{1}{L(m+1)}$.

Next, we provide the proof for Lem.~\ref{lemma:h_rate} as present in the main manuscript.

\textbf{Lemma 3}: Given assumption \ref{asp: L-smoothness}, the sequence $\{\gw{t}\}_{t=1}^{T}$ generated with learning rates $ \alp{t} = \bet{t} = \alp{} = \frac{1}{30LE}$, for the restriction of $h(\cdot)$ on $\mc{M}\cup \mc{C}$ given as $\mh(\gw{t}) = h_{|_{\mc{M}\cup \mc{C}}}(\gw{t})$ we have
\begin{flalign}
    \min\limits_{t} \lVert{\grad{\mh}{t}}\rVert^2 \leq \tfrac{60L}{T}\left(\mh(\gw{0}) - \mh(\gw{T})\right).
\end{flalign}

\textit{Proof.} We use a common learning rate $\bet{t} = \alp{t} \forall t \in \{0,1,\cdots, T-1  \}$. Using the update rules \eqref{eq: result2} and \eqref{eq: server_update} to update the global objective function on the restriction $\mc{M} \cup \mc{C}$, denoted as $\mh(\cdot)$, the update rule reduces into the following local and server update rules:
\begin{flalign}
    \text{Local:}\; \lw{t,k+1}{i} &= \lw{t,k}{i} - \bet{t} \bigg(\grad{\mh}{t} - \grad{\lmh}{t} + \lmhgrad{t,k}{i}  \bigg),\\
    \text{Server:}\; \gw{t+1} &= \gw{t} - \bet{t} \sumk{i=1}{N}p_i\grad{\mh}{t} - \bet{t} \sumk{i=1}{N}\sumk{k=0}{E-1}p_i \lmhgrad{t,k}{i},
\end{flalign}
where $\lmhgrad{t,k}{i}$ is the delayed gradient on $\mc{M}\cup \mc{C}$. Then we note that the problem is similar to Lem.~\ref{thm:grad-f} and can be proved using the L-smoothness of $h(.)$. To prove convergence we follow the same steps as in Theorem~2 of \cite{fedtrack} and obtain
%
%
\begin{flalign}
    \mh(\gw{t+1}) - \mh(\gw{t}) &\leq  (-\bet{t} + 10\bet{t}^2LE^2 + 80\bet{t}^4L^3E^4 -\bet{t} E)\norm{\mh{(\gw{t})}}^2  \\
    & < (-\bet{t} E + 10\bet{t}^2LE^2 + 80\bet{t}^4L^3E^4 )\norm{\mh{(\gw{t})}}^2  \label{eq:h_intermediate}\\
    &  \overset{\text{(a)}}{\leq} -\bet{t} E(1- 15\bet{t} LE)\norm{\mh{(\gw{t})}}^2,
\end{flalign}
where (a) follows from using $\bet{t} \leq \tfrac{1}{4LE}$. Rearranging the above equation and using $\bet{t} = \tfrac{1}{30LE}$, we obtain
\begin{flalign}
    \norm{\mh{(\gw{t})}}^2 &\leq \frac{1}{\bet{t} E(1- 15\bet{t} LE)}\big(\mh(\gw{t}) - \mh(\gw{t+1})\big)\\
    &= 60L \big(\mh(\gw{t}) - \mh(\gw{t+1})\big). \label{eq:norm-h}
\end{flalign}

Finally, using telescopic summing, we obtain the desired result as
\begin{flalign}
    \min\limits_{t} \norm{\grad{\mh}{t}}^2 \leq \frac{60L}{T} \sumk{t=0}{T-1} \big(\mh(\gw{0}) - \mh(\gw{T})\big).
\end{flalign}

Hence, our proposed algorithm provides convergence guarantees if it jointly learns from the current task and the replay-memory at each client. Next, we provide an upper bound on the squared norm of $\grad{g}{t}$.

We define $\mc{D}= \{\mc{D}^1, \mc{D}^2, \dots, \mc{D}^N\}$ as the collection of subsets of both the current task and memory data $\text{where each } \mc{D}^i \text{ satisfies } \mc{C}^i \subset \mc{D}^i \subset {\mc{M}^i \cup \mc{C}^i}$. Additionally, $h_{i_{|_{\mc{D}^i}}}$ is defined as the restriction of $h_i$ on $\mc{D}^i$.
\begin{lemma}\label{lemma:g-bound}
Given the sequence $\{\gw{t}\}_{t=1}^{T}$ as generated by algorithm~\ref{alg: A}, the upper bound for the global loss gradient for the current data $\mc{C} = \{\mc{C}^1, \mc{C}^2, \ldots, \mc{C}^N  \}$ satisfies
\begin{flalign}
    \norm{\grad{g}{t}}^2 = \norm{\sumk{i=1}{N}p_i \grad{g_i}{t}}^2 \leq 2 \norm{\grad{\mh}{t}}^2 + 2 \omega^2, 
\end{flalign}
where $\omega^2 := \sumk{i=1}{N}p_i \sup\limits_{\mc{C}^i \subset \mc{D}^i \subset \mc{M}^i \cup \mc{C}^i} \omega^2(h_i;\mc{D}^i) $ and $\omega^2 (h_i; \mc{D}^i) :=  \sup\limits_{\mf{x}} \norm{\grad{h_{i_{|_{\mc{D}^i}}}}{t} - \grad{\lmh}{t}}^2 $.

\begin{proof} 
Using the definitions of $\omega^2(h_i;\mc{D}^i)$ and Jensen's inequality, we obtain
\begin{flalign}
    &\sup\limits_{\mf{x}} \norm{\grad{g}{t} - \grad{\mh}{t} }^2 = \sup\limits_{\mf{x}}\norm{\grad{h_{|_{\mc{C}}}}{t} - \grad{\mh}{t}}^2 \\
    & = \sup\limits_{\mf{x}} \norm{\sumk{i=1}{N}p_i \big( \grad{{h_i}_{|_{\mc{C}^i}}}{t} - \grad{\lmh}{t} \big)}^2 \\
    & \leq \sup\limits_{\mf{x}} \sumk{i=1}{N}p_i \norm{\grad{{h_i}_{|_{\mc{C}^i}}}{t} - \grad{\lmh}{t}}^2. \label{eq:sup_intermediate}
\end{flalign}
Using $\sup\limits_{\mf{x}}\bigg(\sumk{i=1}{N} \psi_i(\mf{x}) \bigg) \leq \sumk{i=1}{N}\sup\limits_{\mf{x}}\psi_i(\mf{x})$
for $\psi_i: \mathbb{R}^d \rightarrow \mathbb{R}\; \forall i \in [N] $ in \eqref{eq:sup_intermediate}, we obtain
\begin{flalign}
    \sup\limits_{\mf{x}} \norm{\grad{g}{t} - \grad{\mh}{t} }^2 & \leq \sumk{i=1}{N}p_i \sup\limits_{\mf{x}} \norm{\grad{{h_i}_{|_{\mc{C}^i}}}{t} - \grad{\lmh}{t}}^2 \\
    & \leq \sumk{i=1}{N}p_i \sup\limits_{\mc{C}^i \subset \mc{D}^i \subset \mc{M}^i \cup C^i}  \sup\limits_{\mf{x}} \norm{\grad{{h_i}_{|_{\mc{D}^i}}}{t} - \grad{\lmh}{t}}^2 \\
    & \leq \sumk{i=1}{N}p_i \sup\limits_{\mc{C}^i \subset \mc{D}^i \subset \mc{M}^i \cup \mc{C}^i} \omega^2(h_i;\mc{D}^i) \\
    & = \omega^2
\end{flalign}
Finally, using the above result, we obtain
\begin{flalign}
    \norm{\grad{g}{t}}^2 &= \norm{\grad{g}{t} - \grad{\mh}{t} + \grad{\mh}{t}}^2 \\
    & \leq  2\norm{\grad{\mh}{t}}^2 + 2 \norm{\grad{g}{t} - \grad{\mh}{t}}^2 \\
    &\leq 2\norm{\grad{\mh}{t}}^2 + 2 \omega^2.
\end{flalign}
\end{proof}
\end{lemma}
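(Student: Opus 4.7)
The plan is to decompose $\grad{g}{t}$ by adding and subtracting $\grad{\mh}{t}$, apply the elementary inequality $\|a+b\|^2 \leq 2\|a\|^2 + 2\|b\|^2$ to split the squared norm, and then control the residual term $\norm{\grad{g}{t} - \grad{\mh}{t}}^2$ by the constant $\omega^2$ defined in the statement. The structural observation that makes the argument work is that $g$ and $\mh$ are both weighted averages (with the same weights $p_i$) of restrictions of the same local losses $h_i$ to subsets of $\mc{M}^i \cup \mc{C}^i$, so the difference $\grad{g}{t} - \grad{\mh}{t}$ splits cleanly across clients.

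First, I would apply the decomposition and the inequality $\|a+b\|^2 \leq 2\|a\|^2 + 2\|b\|^2$ directly to obtain
\begin{equation*}
\norm{\grad{g}{t}}^2 \leq 2\norm{\grad{\mh}{t}}^2 + 2\norm{\grad{g}{t} - \grad{\mh}{t}}^2.
\end{equation*}
This already matches the target bound in the first term, so the only remaining task is to show that the residual is at most $\omega^2$.

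Next, using $g = \sumk{i=1}{N}p_i h_{i|_{\mc{C}^i}}$ and $\mh = \sumk{i=1}{N}p_i \lmh$, I would expand the residual as $\grad{g}{t} - \grad{\mh}{t} = \sumk{i=1}{N}p_i\bigl(\grad{h_{i|_{\mc{C}^i}}}{t} - \grad{\lmh}{t}\bigr)$. Applying Jensen's inequality for the convex map $v \mapsto \|v\|^2$ with weights $p_i$ summing to one pulls the squared norm inside the sum, yielding a per-client bound. Each summand is then bounded by its supremum over $\mf{x} \in \mathbb{R}^d$. Since $\mc{C}^i$ is itself a valid choice of $\mc{D}^i$ in the family $\mc{C}^i \subset \mc{D}^i \subset \mc{M}^i \cup \mc{C}^i$, enlarging to a supremum over all such $\mc{D}^i$ only loosens the bound. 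By the definitions of $\omega^2(h_i;\mc{D}^i)$ and $\omega^2$, this gives $\norm{\grad{g}{t} - \grad{\mh}{t}}^2 \leq \omega^2$, closing the argument.

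The main obstacle here is notational rather than analytic: one must track the restriction notation $h_{i|_{\mc{D}^i}}$ carefully, justify the interchange of sup over $\mf{x}$ with sup over admissible $\mc{D}^i$, and check that Jensen's inequality applies with the weights $p_i$. No smoothness, non-convexity, or stochastic arguments are invoked, so the lemma should fall out cleanly once the definitions are unwound.
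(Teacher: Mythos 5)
Your proposal is correct and follows essentially the same route as the paper's proof: the decomposition $\grad{g}{t} = (\grad{g}{t} - \grad{\mh}{t}) + \grad{\mh}{t}$ with the inequality $\norm{a+b}^2 \leq 2\norm{a}^2 + 2\norm{b}^2$, Jensen's inequality with the weights $p_i$ to split the residual across clients, and the observation that $\mc{C}^i$ is an admissible choice of $\mc{D}^i$ so the supremum over $\mc{D}^i$ only loosens the bound. The only difference is presentational (you apply the two-term inequality first and bound the residual second, whereas the paper does the reverse), which does not affect the argument.
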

\subsection{Convergence Analysis of $\Gamma(t)$}
To show the convergence of $\Gamma(t)$, we expand it using \eqref{eq: grad-error} to obtain
\begin{flalign}
    \Gamma(t) = & \frac{L}{2}\bet{t}^2\lVert\sumk{i=1}{N}\sumk{k=0}{E-1}p_i( \error{t,k}{i} + \lgrad{g}{t,k}{i})\rVert ^2 - \bet{t}(1-L\alp{t}) \langle{\mgrad{f}{}{t}{}, \sumk{i=1}{N}\sumk{k=0}{E-1}p_i( \error{t,k}{i} + \lgrad{g}{t,k}{i})}\rangle \label{eq:gamma-1}
\end{flalign}

Further for brevity, we denote ${\bm{\rho}}_{t,k}^i =\lgrad{g}{t,k}{i}, \bar{\bm{\rho}}_{t,k} = {\textstyle\sum}_{i=1}^{N}p_i {\bm{\rho}}_{t,k}^i, \text{ and } \avgerror{t,k} = {\textstyle\sum}_{i=1}^{N}p_i \error{t,k}{i} $. Using the inequality $\norm{\sumk{i=1}{M}\mf{x}_i}^2 \leq M\sumk{i=1}{N}\norm{\mf{x}_i}^2$ for $M=2$, the first term of $\Gamma(t)$ can be expanded as
\begin{flalign}
    &\frac{L}{2}\bet{t}^2\norm{\sumk{i=1}{N}\sumk{k=0}{E-1}p_i (\lgrad{g}{t,k}{i} + \error{t,k}{i}) }^2 \nonumber \\
    & \leq L\bet{t}^2 \norm{\sumk{i=1}{N}\sumk{k=0}{E-1}p_i \lgrad{g}{t,k}{i}}^2 + L\bet{t}^2\norm{\sumk{i=1}{N}\sumk{k=0}{E-1}p_i \error{t,k}{i} }^2  \\
    &= L\bet{t}^2\lVert\sumk{k=0}{E-1}\bar{\bm{\rho}}_{t,k}\rVert^2  + L\bet{t}^2\lVert\sumk{k=0}{E-1}\avgerror{t,k}\rVert^2. \label{eq:gamma-exp1}
\end{flalign}
The second term in  $\Gamma(t)$ can be expanded as
\begin{flalign}
    &- \bet{t}(1-L\alp{t}) \langle{\mgrad{f}{}{t}{}, \sumk{i=1}{N}\sumk{k=0}{E-1}p_i( \error{t,k}{i} + \lgrad{g}{t,k}{i})}\rangle \nonumber\\
    & = - \bet{t}(1-L\alp{t}) \langle{\mgrad{f}{}{t}{}, {\textstyle\sum}_{k=1}^{E-1}\bar{\bm{\rho}}_{t,k}}\rangle - \bet{t}(1-L\alp{t}) \langle{\mgrad{f}{}{t}{}, \sumk{k=0}{E-1} \avgerror{t,k}}\rangle. \label{eq:gamma-exp2}
\end{flalign}

Using \eqref{eq:gamma-exp1} and \eqref{eq:gamma-exp2} in ~\eqref{eq:gamma-1}, we obtain
\begin{flalign}
    \Gamma(t) &= L\bet{t}^2\lVert\sumk{k=0}{E-1}\bar{\bm{\rho}}_{t,k}\rVert^2 - \bet{t}(1-L\alp{t}) \langle{\mgrad{f}{}{t}{}, \sumk{k=0}{E-1}\bar{\bm{\rho}}_{t,k}}\rangle \nonumber\\
   & - \bet{t}(1-L\alp{t}) \langle{\mgrad{f}{}{t}{}, \sumk{k=0}{E-1} \avgerror{t,k}}\rangle + L \bet{t}^2 \norm{\sumk{k=0}{E-1}\avgerror{t,k}}^2. \label{eq:gamma-expand}
\end{flalign}

\begin{lemma} 
\label{lemma:gradient-error bound}
Let assumption \ref{asp: L-smoothness} hold. For all $k \in \{0,\hdots,E-1\}$ and given any $t \in \{0,1,\cdots T-1 \}$, the gradient error~\eqref{eq: grad-error} due to the incrementally aggregated gradients in the proposed algorithm is bounded as follows:
\begin{flalign}
    \norm{\error{t,k}{i}} \leq \bet{t}LE\norm{\grad{g}{t}} + 3\bet{t}L^2E \max\limits_{0 \leq b \leq k-1} \norm{\lw{t,b}{i} - \gw{t}}.
\end{flalign}
\begin{proof}
Expanding $\error{t,k}{i}$ as given in \eqref{eq: grad-error}, we have

\begin{flalign}
    \norm{\error{t,k}{i}} &= \norm{{\dgrad{g}{i}{t,k}{i}} - \lgrad{g}{t,k}{i} } \\
    & = \norm{\frac{1}{| \mc{C}^i |} \sumk{j \in \mc{C}^i}{} \big\{ \nabla g_{i,j}\left(\mf{x}_{t,\tau_{k,j}^i}^{i}\right) - \nabla g_{i,j}\left(\mf{x}_{t,k}^{i} \right) \big\}}  \\
    & \overset{(a)}{\leq} \frac{1}{| \mc{C}^i |} \sumk{j \in \mc{C}^i}{} L \norm{\lw{t, \tau_{k,j}^i}{i} - \lw{t,k}{i}} 
    \label{eq:error_1}
\end{flalign}
where ($a$) follows from triangle inequality and L-smoothness of $g_i$.

Further simplifying, we note that
\begin{flalign}
    \frac{1}{| \mc{C}^i |} \sumk{j \in \mc{C}^i}{} L \norm{\lw{t, \tau_{k,j}^i}{i} - \lw{k,j}{i}} & \overset{(b)}{=} \frac{1}{| \mc{C}^i |} \sumk{j \in \mc{C}^i}{} L \norm{\sumk{l=\tau_{k,j}^i}{k-1}\lw{t, l}{i} - \lw{t,l+1}{i}} \\
    & \overset{(c)}{\leq}\frac{1}{| \mc{C}^i |} \sumk{j \in \mc{C}^i}{}  L \sumk{l=\tau_{k,j}^i}{k-1} \norm{\lw{t, l}{i} - \lw{t,l+1}{i}} \\
    & \overset{(d)}{\leq} \frac{1}{| \mc{C}^i |} \sumk{j \in \mc{C}^i}{} L\sumk{l=0}{k-1} \norm{\lw{t, l}{i} - \lw{t,l+1}{i}} \\
    & = L\sumk{l=0}{k-1} \norm{\lw{t, l}{i} - \lw{t,l+1}{i}},
\end{flalign}
where ($b$) follows from the properties of IAG with $ 0 \leq \tau_{k,j}^i \leq k$, ($c$) follows using triangle inequality, ($d$) follows since each entry in the summation is non-negative and $0 \leq \tau_{k,j}^i$. Using \eqref{eq: LUR}, we have 
\begin{flalign}
    & L\sumk{l=0}{k-1} \norm{\lw{t, l}{i} - \lw{t,l+1}{i}} {=} L \sumk{l=0}{k-1} \norm{\bet{t}\left(\grad{g}{t} -\grad{g_i}{t} + \dgrad{g}{i}{t,l}{i}  \right)} \\
    & = L \bet{t} \sumk{l=0}{k-1}\left[ \norm{\grad{g}{t} -\grad{g_i}{t} + \lgrad{g}{t,l}{i} + \error{t,l}{i}} \right] \\ 
    & \overset{(e)}{\leq} L \bet{t} \sumk{l=0}{k-1}\left[ \norm{\grad{g}{t}} + L \norm{\lw{t,l}{i} - \gw{t}} + \norm{\error{t,l}{i}}\right] \label{eq:error_2},
\end{flalign}
where ($e$) follows from triangle inequality and L-smoothness of $g_i$. Further, using \eqref{eq:error_1}, we obtain
\begin{flalign}
    \norm{\error{t,l}{i}}  &\leq \frac{1}{| \mc{C}^i |} \sumk{j \in \mc{C}^i}{} L \norm{\lw{t, \tau_{l,j}^i}{i} - \lw{k,j}{i}} \\
    &\overset{(a)}{\leq} \frac{1}{| \mc{C}^i |} \sumk{j \in \mc{C}^i}{} L \left[\norm{\lw{l,\tau_{l,j}^i}{i} - \gw{t}} + \norm{\lw{t,\tau_{l,j}^i}{i}  - \gw{t}}\right] \\
    & \overset{(b)}{\leq} L \max\limits_{0\leq b \leq l} \left[\norm{\lw{t,b}{i} - \gw{t}} + \norm{\lw{t,b}{i} - \gw{t}}  \right] \\
    %
    & = 2L \max\limits_{0\leq b \leq l} \{ \norm{\lw{t,b}{i} - \gw{t}} \} \label{eq:error_3},
\end{flalign}
where $(a)$ follows using Triangle inequality and $(b)$ follows since $ 0\leq \tau_{l,j}^i\leq l$ leads to $\norm{\lw{t,\tau_{l,j}^i}{i} - \gw{t}} \leq \max\limits_{0\leq b \leq \tau_{l,j}^i} \{ \norm{\lw{t,b}{i} - \gw{t}} \} \leq \max\limits_{0\leq b \leq l} \{ \norm{\lw{t,b}{i} - \gw{t}} \}$. Substituting  \eqref{eq:error_3} in \eqref{eq:error_2}, we obtain 
\begin{flalign}
    \norm{\error{t,k}{i}} &\leq L \bet{t} \sumk{l=0}{k-1}\left[\norm{\grad{g}{t}} + L \norm{\lw{t,l}{i} - \gw{t}} + 2L \max\limits_{0\leq b \leq l} \{ \norm{\lw{t,b}{i} - \gw{t}} \} \right] \\
    & \leq L \bet{t} \sumk{l=0}{k-1} \left[ \grad{g}{t} +  L \max\limits_{0\leq b \leq l} \{ \norm{\lw{t,b}{i} - \gw{t}} + 2L \max\limits_{0\leq b \leq l} \{ \norm{\lw{t,b}{i} - \gw{t}} \} \right] \\
    & = L \bet{t} \sumk{l=0}{k-1} \left[ \grad{g}{t} +  3L \max\limits_{0\leq b \leq l} \{ \norm{\lw{t,b}{i} - \gw{t}} \right] \\
    & {\leq} \bet{t}LE \norm{\grad{g}{t}} + 3 \bet{t}L^2E \max\limits_{0\leq b \leq k-1} \{ \norm{\lw{t,b}{i} - \gw{t}}\},
\end{flalign}
where the last inequality in the above expression holds since $k \leq E$.
\end{proof}
\end{lemma}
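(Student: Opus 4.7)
\textbf{Proof proposal for Lemma~\ref{lemma:gradient-error bound}.}

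The plan is to reduce the IAG error to a controlled sum of per-step iterate drifts. By definition, $\error{t,k}{i}=\dgrad{g}{i}{t,k}{i}-\lgrad{g}{t,k}{i}=\tfrac{1}{|\mc{C}^i|}\sum_{j\in\mc{C}^i}\bigl(\nabla g_{i,j}(\lw{t,\tau_{k,j}^i}{i})-\nabla g_{i,j}(\lw{t,k}{i})\bigr)$. First, I would apply the triangle inequality and the $L$-smoothness of each component $g_{i,j}$ (Assumption~\ref{asp: L-smoothness}) to get $\norm{\error{t,k}{i}}\le \tfrac{L}{|\mc{C}^i|}\sum_{j\in\mc{C}^i}\norm{\lw{t,\tau_{k,j}^i}{i}-\lw{t,k}{i}}$. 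Since the IAG indices satisfy $0\le \tau_{k,j}^i\le k$, each term can be written as a telescoping sum $\sum_{l=\tau_{k,j}^i}^{k-1}(\lw{t,l}{i}-\lw{t,l+1}{i})$ and majorised by the full sum $\sum_{l=0}^{k-1}\norm{\lw{t,l}{i}-\lw{t,l+1}{i}}$, thereby decoupling the bound from the particular choice of stale index.

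Next, I would substitute the local update rule~\eqref{eq: LUR-error}, which gives $\lw{t,l}{i}-\lw{t,l+1}{i}=\bet{t}(\grad{g}{t}-\grad{g_i}{t}+\lgrad{g}{t,l}{i}+\error{t,l}{i})$. Splitting this via the triangle inequality and using $L$-smoothness of $g_i$ on the pair $(\gw{t},\lw{t,l}{i})$ to bound $\norm{\lgrad{g}{t,l}{i}-\grad{g_i}{t}}\le L\norm{\lw{t,l}{i}-\gw{t}}$ yields
\begin{flalign*}
\norm{\error{t,k}{i}}\;\le\; L\bet{t}\sum_{l=0}^{k-1}\Bigl(\norm{\grad{g}{t}}+L\norm{\lw{t,l}{i}-\gw{t}}+\norm{\error{t,l}{i}}\Bigr).
\end{flalign*}

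The key obstacle is the self-referential appearance of $\error{t,l}{i}$ inside the bound, which threatens a circular estimate. I would resolve this by reapplying the argument from the first paragraph at the earlier index $l$: by $L$-smoothness we have $\norm{\error{t,l}{i}}\le L\norm{\lw{t,\tau_{l,\cdot}^i}{i}-\lw{t,l}{i}}$, and a triangle-inequality split through the anchor $\gw{t}$, combined with $0\le \tau_{l,\cdot}^i\le l$, gives the non-recursive bound $\norm{\error{t,l}{i}}\le 2L\max_{0\le b\le l}\norm{\lw{t,b}{i}-\gw{t}}$.

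Finally, plugging this bound back in, majorising $\norm{\lw{t,l}{i}-\gw{t}}$ by the same maximum, combining the coefficients $L+2L=3L$, and using $k\le E$ to replace $\sum_{l=0}^{k-1}$ by $E$, delivers
\begin{flalign*}
\norm{\error{t,k}{i}}\;\le\;\bet{t}LE\,\norm{\grad{g}{t}}+3\bet{t}L^2E\,\max_{0\le b\le k-1}\norm{\lw{t,b}{i}-\gw{t}},
\end{flalign*}
which is the claimed inequality. The only non-mechanical step is the recursive-to-explicit conversion described above; everything else is triangle inequality, telescoping, and $L$-smoothness.
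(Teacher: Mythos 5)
Your proposal is correct and follows essentially the same route as the paper's proof: component-wise $L$-smoothness plus the triangle inequality, telescoping over the stale indices $\tau_{k,j}^i$, substitution of the local update rule, and the same non-recursive resolution of the self-referential error term via the anchor $\gw{t}$ yielding the $2L\max$ bound and the combined $3L$ coefficient. No gaps.
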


\begin{lemma} \label{lemma:client-drift bound}
Let assumption \ref{asp: L-smoothness} hold and let $\bet{t} \leq \frac{1}{4LE}$. Then for $ k \in \{0,\hdots,E-1\}$ and given any $t \in \{0,1, \cdots T-1 \}$, the bound on the client drift is given by
\begin{flalign}
    \norm{\lw{t,k}{i} - \gw{t}} \leq 4\bet{t}k \norm{\grad{g}{t}} \leq 4\bet{t}E \norm{\grad{g}{t}}.
\end{flalign}

\begin{proof}
For any $i \in [N]$, from ~\eqref{eq: LUR} and expanding $\dgrad{g}{i}{t,k}{i}$ using \eqref{eq: grad-error}, we obtain
    \begin{flalign}
        \norm{\lw{t,k+1}{i} - \gw{t}} &= \norm{\lw{t,k}{i} - \gw{t} -\bet{t}(\grad{g}{t} - \grad{g_i}{t} + \dgrad{g}{i}{t,k}{i})} \\
        & = \norm{\lw{t,k}{i} - \gw{t} -\bet{t}(\grad{g}{t} - \grad{g_i}{t} + \lgrad{g}{t,k}{i} + \error{t,k}{i}) } \\
        & {\leq} \norm{\lw{t,k}{i} - \gw{t}} + \bet{t}\norm{\lgrad{g}{t,k}{i} - \grad{g_i}{t}} + \bet{t}\norm{\grad{g}{t}} +\bet{t}\norm{\error{t,k}{i}}.
        \label{eq:upperlm7-1}
    \end{flalign}
where the last inequality in the above follows from triangle inequality. Using L-smoothness of $g_i$ as given in assumption \ref{asp: L-smoothness} in \eqref{eq:upperlm7-1}, we have 
    \begin{flalign}
        \norm{\lw{t,k+1}{i} - \gw{t}} &{\leq} \norm{\lw{t,k}{i} - \gw{t}} + \bet{t}L\norm{\lw{t,k}{i} - \gw{t}} + \bet{t}\norm{\grad{g}{t}} +\bet{t}\norm{\error{t,k}{i}} \\
        &= (1+\bet{t}L)\norm{\lw{t,k}{i} - \gw{t}} + \bet{t}\norm{\grad{g}{t}} +\bet{t}\norm{\error{t,k}{i}} \\
        & \overset{\text{(a)}}{\leq} (1+\bet{t}L)\norm{\lw{t,k}{i} - \gw{t}} + \bet{t}\norm{\grad{g}{t}} + \bet{t}^2 LE \norm{\grad{g}{t}} + 3 \bet{t}^2L^2E \max\limits_{0\leq b \leq k-1} \{ \norm{\lw{t,b}{i} - \gw{t}} \\
        & \overset{\text{(b)}}{{\leq}} (1+\bet{t}L)\norm{\lw{t,k}{i} - \gw{t}} + 2\bet{t}\norm{\grad{g}{t}} +  \bet{t}L \max\limits_{0\leq b \leq k-1} \{ \norm{\lw{t,b}{i} - \gw{t}}\}, \label{eq:local-drift}
    \end{flalign}
where (a) follows from Lem.~\ref{lemma:gradient-error bound}, and (b) follows using $\bet{t} \leq \frac{1}{4LE} < \frac{1}{3LE}$.

For $k=0$, it is straightforward since $\lw{t,0}{i} = \gw{t}$.
Going further, we demonstrate that $\norm{\lw{t,k}{i} - \gw{t}} \leq 4\bet{t}k \norm{\grad{g}{t}}$ for $k \in \{1,2,\cdots, E-1 \}$ using mathematical induction. First, we verify for $k=1$. For $k = 1$, we have
\begin{flalign}
   \norm{\lw{t,1}{i} - \gw{t}} &= \norm{\bet{t} (\grad{g}{t} - \grad{g_i}{t} + \dgrad{g}{i}{t,0}{i})} \\
   & \overset{\text{(a)}}{\leq} \bet{t}\norm{\grad{g}{t}} + \bet{t}\norm{\grad{g_i}{t} - \dgrad{g}{i}{t,0}{i}} \\
   & \overset{\text{(b)}}{=} \bet{t}\norm{\grad{g}{t}} \label{eq:base-step-1} \leq 4\bet{t}\norm{\grad{g}{t}}.
\end{flalign}
where (a) follows using triangle inequality and (b) follows from $\dgrad{g}{i}{t,0}{i} = \tfrac{1}{|\mc{C}^i|} {\textstyle\sum}_{j\in \mc{C}^i}{} \nabla g_{i,j}(\mf{x}_{t,\tau_{0,j}^i}^{i}) = \tfrac{1}{|\mc{C}^i|} {\textstyle\sum}_{j\in \mc{C}^i}{} \nabla g_{i,j}(\gw{t}) = \grad{g_i}{t}$ because $\tau_{0,j}^i = 0$ and $\lw{t,0}{i} = \gw{t}$. Hence, the induction base step holds. 

We state the induction argument by first assuming that the condition is true for $k=l$, i.e.,
\begin{flalign}
    \norm{\lw{t,l}{i} - \gw{t}} \leq 4\bet{t}l \norm{\grad{g}{t}}.
\end{flalign}

Now, we need to demonstrate that the condition holds for $k=l+1$. From ~\eqref{eq:local-drift}, we obtain
\begin{flalign}
    \norm{\lw{t,l+1}{i} - \gw{t}} &\leq (1+\bet{t}L)\norm{\lw{t,l}{i} - \gw{t}} + 2\bet{t}\norm{\grad{g}{t}} +  \bet{t}L \max\limits_{0\leq b \leq l-1} \{ \norm{\lw{t,b}{i} - \gw{t}}\} \\
    & = \norm{\lw{t,l}{i} - \gw{t}} + 2\bet{t}\norm{\grad{g}{t}} +  \bet{t}L (\norm{\lw{t,l}{i} - \gw{t}} + \max\limits_{0\leq b \leq l-1} \{ \norm{\lw{t,b}{i} - \gw{t}}\}) \\
    & \overset{\text{(a)}}{\leq} 4\bet{t}l\norm{\grad{g}{t}} + 2\bet{t}\norm{\grad{g}{t}} +  \bet{t}L (4\bet{t}l + 4\bet{t}l )\norm{\grad{g}{t}} \\
    & \overset{\text{(b)}}{\leq} \big[ 4\bet{t}l +2\bet{t} +2\bet{t}(4\bet{t}LE)  \big]\norm{\grad{g}{t}} \\
    & \overset{\text{(c)}}{\leq} (4\bet{t}l +2\bet{t} + 2\bet{t} )\norm{\grad{g}{t}} = 4\bet{t}(l+1)\norm{\grad{g}{t}},
\end{flalign}
where (a) follows using the induction hypothesis, (b) follows since $l \leq E$, (c) follows from $\bet{t} \leq \frac{1}{4LE}$.

Since the statement is true for $k=1$ and true for $k=l+1$ when it is true for $k=l$. Then, using the principle of mathematical induction, the statement is true for all $k \in \{0, 1, \cdots, E-1 \}$. Finally, $\norm{\lw{t,k}{i} - \gw{t}} \leq 4\bet{t}k \norm{\grad{g}{t}} \leq 4\bet{t}E \norm{\grad{g}{t}}$ holds since $k < E$.
\end{proof}

\end{lemma}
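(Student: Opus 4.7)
The plan is to prove the client drift bound by induction on the local step index $k$, building on the gradient error bound established in the preceding lemma (Lemma~\ref{lemma:gradient-error bound}). The natural strategy is to start from the local update rule~\eqref{eq: LUR}, rewrite $\lw{t,k+1}{i} - \gw{t}$ as $\lw{t,k}{i} - \gw{t}$ minus one gradient step, expand $\dgrad{g}{i}{t,k}{i}$ using the gradient error decomposition $\dgrad{g}{i}{t,k}{i} = \lgrad{g}{t,k}{i} + \error{t,k}{i}$, apply the triangle inequality, and invoke $L$-smoothness of $g_i$ to replace $\lgrad{g}{t,k}{i} - \grad{g_i}{t}$ by $L\,\norm{\lw{t,k}{i} - \gw{t}}$. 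After substituting the bound from Lemma~\ref{lemma:gradient-error bound} for $\norm{\error{t,k}{i}}$ and using $\bet{t} \leq \tfrac{1}{4LE}$ to control coefficients, this yields the recursive inequality
\begin{flalign*}
\norm{\lw{t,k+1}{i} - \gw{t}} \leq (1+\bet{t}L)\norm{\lw{t,k}{i} - \gw{t}} + 2\bet{t}\norm{\grad{g}{t}} + \bet{t}L \max_{0 \leq b \leq k-1}\norm{\lw{t,b}{i} - \gw{t}}.
\end{flalign*}

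Next I would verify the base cases. For $k=0$ the bound is trivial since $\lw{t,0}{i} = \gw{t}$. For $k=1$, the key observation is that $\tau_{0,j}^i = 0$ for all $j \in \mc{C}^i$, which makes the initial delayed gradient coincide with the full local gradient, namely $\dgrad{g}{i}{t,0}{i} = \grad{g_i}{t}$. The local update then telescopes the dual-term $(\grad{g}{t} - \grad{g_i}{t}) + \dgrad{g}{i}{t,0}{i}$ down to $\grad{g}{t}$, immediately giving $\norm{\lw{t,1}{i} - \gw{t}} = \bet{t}\norm{\grad{g}{t}} \leq 4\bet{t}\norm{\grad{g}{t}}$.

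For the inductive step I would assume the bound $\norm{\lw{t,l}{i} - \gw{t}} \leq 4\bet{t}l\norm{\grad{g}{t}}$ holds for every index up to $l$, substitute into the recursion above, and upper bound both the current drift term and the $\max$ term by $4\bet{t}l\norm{\grad{g}{t}}$. This produces
\begin{flalign*}
\norm{\lw{t,l+1}{i} - \gw{t}} \leq \bigl[4\bet{t}l + 2\bet{t} + 8\bet{t}^2 L l\bigr]\norm{\grad{g}{t}},
\end{flalign*}
and the step-size constraint $\bet{t}L E \leq \tfrac{1}{4}$ allows the nonlinear term $8\bet{t}^2 L l$ to be absorbed into $2\bet{t}$, giving the desired $4\bet{t}(l+1)\norm{\grad{g}{t}}$. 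The second inequality in the lemma statement then follows simply from $k \leq E$.

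The main obstacle I anticipate is the careful bookkeeping around the $\max$ over previous iterates introduced by the delayed-gradient error bound: the induction hypothesis must be strong enough to control not just the current drift but every earlier drift simultaneously, and the prefactor $4$ is tuned precisely so that the slack produced by $\bet{t} \leq 1/(4LE)$ exactly absorbs both the growth factor $(1+\bet{t}L)$ and the extra $\max$ term. A slightly looser choice of step-size or prefactor would break the induction, so verifying that $\bet{t} \leq 1/(4LE)$ is the tight threshold for the constant $4$ to propagate is the delicate part of the argument.
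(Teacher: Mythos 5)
Your proposal is correct and follows essentially the same route as the paper's proof: the same recursion obtained from the update rule, the error decomposition, $L$-smoothness and Lemma~\ref{lemma:gradient-error bound}, the same base cases ($k=0$ trivial, $k=1$ via $\tau_{0,j}^i=0$), and the same strong-induction step in which $8\bet{t}^2 L l \leq 2\bet{t}$ is absorbed using $\bet{t}\leq \tfrac{1}{4LE}$. Your observation that the induction hypothesis must cover all earlier iterates because of the $\max$ term is exactly the (implicit) strong induction the paper uses, so there is nothing further to add.
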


\begin{corollary}\label{cor:error-bound}
    Suppose that $\bet{t} \leq \tfrac{1}{12LE}$. Then from the Lem.~\ref{lemma:gradient-error bound} and  Lem.~\ref{lemma:client-drift bound}, it follows that
    \begin{flalign}
        \norm{\error{t,k}{i}} \leq 2\bet{t}LE \norm{\grad{g}{t}}        
    \end{flalign} 

\begin{proof}
    
Using the result of Lem.~\ref{lemma:gradient-error bound} and Lem.~\ref{lemma:client-drift bound}, we obtain
\begin{flalign}
    \norm{\error{t,k}{i}} &\leq \bet{t}LE\norm{\grad{g}{t}} + 3\bet{t}L^2E \max\limits_{0 \leq b \leq k-1} \norm{\lw{t,b}{i} - \gw{t}} \\
    & \overset{\text{(a)}}{\leq} \bet{t}LE\norm{\grad{g}{t}} + 3\bet{t}L^2E (4 \bet{t}E \norm{\grad{g}{t}} \leq (\bet{t}LE\norm{\grad{g}{t}} + 12\bet{t}^2L^2E^2)\norm{\grad{g}{t}} \\
    & \overset{\text{(b)}}{\leq} 2\bet{t}LE \norm{\grad{g}{t}},
\end{flalign}
where (a) follows from Lem.~\ref{lemma:h_rate} and (b) follows from using $\bet{t} \leq \tfrac{1}{12LE}$.
    
\end{proof}
\end{corollary}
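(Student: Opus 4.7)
The plan is to directly chain Lemmas~\ref{lemma:gradient-error bound} and \ref{lemma:client-drift bound}, using the tightened step-size restriction to absorb the quadratic-in-$\bet{t}$ remainder into the linear term. First, I would start from the conclusion of Lemma~\ref{lemma:gradient-error bound}, namely $\norm{\error{t,k}{i}} \leq \bet{t} L E \norm{\grad{g}{t}} + 3 \bet{t} L^2 E \max_{0 \leq b \leq k-1} \norm{\lw{t,b}{i} - \gw{t}}$, and plug in the drift bound $\norm{\lw{t,b}{i} - \gw{t}} \leq 4 \bet{t} E \norm{\grad{g}{t}}$ from Lemma~\ref{lemma:client-drift bound}. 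Since the drift bound is a single constant that does not depend on $b$, the maximum over $b \in \{0,\dots,k-1\}$ is simply that constant, yielding
\begin{equation*}
\norm{\error{t,k}{i}} \leq \bet{t} L E \norm{\grad{g}{t}} + 12 \bet{t}^{2} L^{2} E^{2} \norm{\grad{g}{t}}.
\end{equation*}

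Next, I would apply the strengthened hypothesis $\bet{t} \leq \tfrac{1}{12 L E}$ to the second term. Writing $12 \bet{t}^{2} L^{2} E^{2} = (12 \bet{t} L E) \cdot (\bet{t} L E)$ and observing that $12 \bet{t} L E \leq 1$ under the hypothesis, the quadratic term is dominated by $\bet{t} L E$. Adding this to the first term produces the desired bound $\norm{\error{t,k}{i}} \leq 2 \bet{t} L E \norm{\grad{g}{t}}$.

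I do not anticipate a real obstacle, since the corollary is an arithmetic consequence of the two preceding lemmas. The only subtlety worth flagging is a consistency check on the step-size: Lemma~\ref{lemma:client-drift bound} requires $\bet{t} \leq \tfrac{1}{4 L E}$, so one must verify that the tighter assumption $\bet{t} \leq \tfrac{1}{12 L E}$ still permits invoking that lemma; this is immediate from $\tfrac{1}{12 L E} < \tfrac{1}{4 L E}$. Thus the whole argument is a two-line substitution followed by a one-line absorption of the higher-order term.
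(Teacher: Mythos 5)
Your proposal is correct and follows exactly the same route as the paper: substitute the drift bound from Lemma~\ref{lemma:client-drift bound} into the error bound of Lemma~\ref{lemma:gradient-error bound}, then use $12\bet{t}LE \leq 1$ to absorb the quadratic term into a second copy of $\bet{t}LE\norm{\grad{g}{t}}$. Your explicit consistency check that $\tfrac{1}{12LE} < \tfrac{1}{4LE}$ (so the drift lemma remains applicable) is a small but worthwhile addition that the paper leaves implicit.
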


\begin{lemma} \label{lemma:rho-bound} 
Suppose assumption \ref{asp: L-smoothness} holds. Then at the $t$-th iteration at each client $i \in [N]$, the sequence of local updates, $\{\lw{t,k}{i}\}_{k=0}^{E-1}$, generated by our proposed algorithm satisfies the following:
\begin{flalign}
    &\text{(I) } \lVert{\sumk{k=0}{E-1}\Bar{\bm{\rho}}_{t,k}}\rVert \leq  4L\bet{t}E^2 \norm{\grad{g}{t}} + E\norm{\grad{g}{t}},\\
    &\text{(II) } \lVert{\sumk{k=0}{E-1}\bar{\bm{\rho}}_{t,k}}\rVert^2 \leq 32 \bet{t}^2L^2E^4\norm{\grad{g}{t}}^2 + 2E^2 \norm{\grad{g}{t}}^2.
\end{flalign}
\begin{proof}
To prove (I), we start from the definition of $\Bar{\bm{\rho}}_{t,k}$ as given after \eqref{eq:gamma-1}
, to obtain the following: 
    \begin{flalign}
        \norm{\sumk{k=0}{E-1}\Bar{\bm{\rho}}_{t,k}} = \norm{\sumk{k=0}{E-1}\sumk{i=1}{N} p_i \lgrad{g}{t,k}{i}} {\leq} \sumk{k=0}{E-1}\norm{\sumk{i=1}{N}p_i \lgrad{g}{t,k}{i}},
    \end{flalign}
where the inequality follows from the triangle inequality. Adding and subtracting $\grad{g_i}{t}$, we have 
    \begin{flalign}
        \norm{\sumk{k=0}{E-1}\Bar{\bm{\rho}}_{t,k}} &\leq  \sumk{k=0}{E-1} \norm{\sumk{i=1}{N}p_i \big( \lgrad{g}{t,k}{i} - \grad{g_i}{t} \big)} +  \sumk{k=0}{E-1} \norm{\sumk{i=1}{N}p_i \big( \grad{g_i}{t} \big)} \\
        & \overset{\text{(b)}}{\leq}  \sumk{k=0}{E-1}\sumk{i=1}{N}p_i \norm{\lgrad{g}{t,k}{i} - \grad{g_i}{t}} + \sumk{k=0}{E-1} \norm{\grad{g}{t}} \\
        & \overset{\text{(c)}}{\leq} \sumk{k=0}{E-1}\sumk{i=1}{N}p_i L\norm{\lw{t,k}{i} - \gw{t}} + E\norm{\grad{g}{t}} \\
        & \overset{\text{(d)}}{\leq} \sumk{k=0}{E-1}\sumk{i=1}{N}p_i L (4\bet{t}E)\norm{\grad{g}{t}} + E\norm{\grad{g}{t}} \\
        & = 4L\bet{t}E^2 \norm{\grad{g}{t}} + E\norm{\grad{g}{t}},
    \end{flalign}
where (b) follows from triangle inequality and the definition of $\grad{g_i}{t}$, (c) follows from the $L$-smoothness assumption, and (d) follows from Lem.~\ref{lemma:client-drift bound}.

To prove (II), we start from the definition of $\Bar{\bm{\rho}}_{t,k}$ and use $\grad{g}{t} = \sumk{i=1}{N}p_i \grad{g_i}{t}$ to obtain the  following: 
    \begin{flalign}
    \norm{\sumk{k=0}{E-1}\bar{\bm{\rho}}_{t,k}}^2 
    & =  \norm{\sumk{k=0}{E-1}\sumk{i=1}{N} p_i {\bm{\rho}}_{t,k}^i}^2 \overset{\text{(a)}}{\leq} E\sumk{k=0}{E-1}\norm{\sumk{i=1}{N}p_i {\bm{\rho}}_{t,k}^i}^2\\
    & \leq E\sumk{k=0}{E-1}\norm{\sumk{i=1}{N}p_i \big( \lgrad{g}{t,k}{i} - \grad{g_i}{t} + \grad{g_i}{t} \big)}^2 \\
    & \leq 2E \sumk{k=0}{E-1}\norm{\sumk{i=1}{N}p_i \big( \lgrad{g}{t,k}{i} - \grad{g_i}{t} \big)}^2 + 2E \sumk{k=0}{E-1}\norm{\grad{g}{t}}^2 \\
    & \overset{\text{(b)}}{\leq} 2E \sumk{k=0}{E-1}\sumk{i=1}{N}p_i \norm{\lgrad{g}{t,k}{i} - \grad{g_i}{t}}^2 + 2E^2  \norm{\grad{g}{t}}^2\\
    & \overset{\text{(c)}}{\leq} 2E \sumk{k=0}{E-1}\sumk{i=1}{N}p_i\norm{L\big( \lw{t,k}{i} - \gw{t}\big)}^2 + 2E^2 \norm{\grad{g}{t}}^2 \\
    & \overset{\text{(d)}}{\leq} 2E \sumk{k=0}{E-1}\sumk{i=1}{N}p_i \big(4\bet{t}LE \norm{\grad{g}{t}} \big)^2 + 2E^2 \norm{\grad{g}{t}}^2 \\
    & = 32 \bet{t}^2L^2E^4\norm{\grad{g}{t}}^2 + 2E^2 \norm{\grad{g}{t}}^2,
    \end{flalign}
where (a),(b) follows from using Jensen's inequality, (c) follows from $L$-smoothness assumption, and (d) follows from Lem.~\ref{lemma:client-drift bound}.
\end{proof}
\end{lemma}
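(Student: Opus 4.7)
The plan is to prove both bounds by decomposing $\bar{\bm{\rho}}_{t,k}=\sum_{i=1}^N p_i \nabla g_i(\mathbf{x}_{t,k}^i)$ into a ``reference'' term evaluated at the global iterate $\mathbf{x}_t$ and a ``deviation'' term that captures the client drift, then controlling the deviation via $L$-smoothness and the client-drift bound from Lemma~\ref{lemma:client-drift bound}. Concretely, I will write
\[
\bar{\bm{\rho}}_{t,k} = \sum_{i=1}^N p_i\bigl(\nabla g_i(\mathbf{x}_{t,k}^i) - \nabla g_i(\mathbf{x}_t)\bigr) + \nabla g(\mathbf{x}_t),
\]
using $\sum_i p_i \nabla g_i(\mathbf{x}_t) = \nabla g(\mathbf{x}_t)$.

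For part (I), I would pull the outer sum over $k$ through a triangle inequality, then split each $\bar{\bm{\rho}}_{t,k}$ via the decomposition above. Applying the triangle inequality once more and convexity of the norm (Jensen's inequality) to the deviation piece yields $\sum_i p_i\|\nabla g_i(\mathbf{x}_{t,k}^i) - \nabla g_i(\mathbf{x}_t)\|$, which by $L$-smoothness (Assumption~\ref{asp: L-smoothness}) is at most $L\sum_i p_i\|\mathbf{x}_{t,k}^i - \mathbf{x}_t\|$. Plugging in the client-drift estimate $\|\mathbf{x}_{t,k}^i - \mathbf{x}_t\| \leq 4\beta_t E\|\nabla g(\mathbf{x}_t)\|$ from Lemma~\ref{lemma:client-drift bound} and summing $k$ from $0$ to $E-1$ produces the deviation contribution $4L\beta_t E^2\|\nabla g(\mathbf{x}_t)\|$, while the reference contribution accumulates to $E\|\nabla g(\mathbf{x}_t)\|$.

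For part (II), I would start with Jensen's inequality in the form $\|\sum_{k=0}^{E-1} \bar{\bm{\rho}}_{t,k}\|^2 \leq E\sum_{k=0}^{E-1}\|\bar{\bm{\rho}}_{t,k}\|^2$, then apply $\|a+b\|^2\leq 2\|a\|^2+2\|b\|^2$ to the same decomposition, and Jensen again across clients to bring the norm inside the convex combination. The reference piece yields $2E^2\|\nabla g(\mathbf{x}_t)\|^2$ after summing, and the deviation piece, after invoking $L$-smoothness and squaring the client-drift bound, gives $2E\sum_{k=0}^{E-1}\sum_i p_i (4\beta_t LE)^2\|\nabla g(\mathbf{x}_t)\|^2 = 32\beta_t^2 L^2 E^4\|\nabla g(\mathbf{x}_t)\|^2$, matching the claim.

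I do not anticipate a genuine obstacle here; the argument is a routine chain of triangle/Jensen inequalities combined with $L$-smoothness and the already-established client-drift bound. The only mildly delicate point is being consistent about when to use $\|\sum \cdot\|\leq\sum\|\cdot\|$ (for part I) versus $\|\sum \cdot\|^2 \leq n\sum\|\cdot\|^2$ (for part II), and tracking the factor of $E$ that enters from the outer sum over local epochs in each case. Note that we implicitly rely on $\beta_t \leq \tfrac{1}{4LE}$ only to invoke Lemma~\ref{lemma:client-drift bound}; beyond that, the constants fall out directly.
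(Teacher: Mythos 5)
Your proposal is correct and follows essentially the same route as the paper's proof: the same add-and-subtract decomposition around $\grad{g_i}{t}$, the same chain of triangle/Jensen inequalities, $L$-smoothness, and the client-drift bound of Lemma~\ref{lemma:client-drift bound}, yielding identical constants. Your remark that the step-size condition $\bet{t}\leq\tfrac{1}{4LE}$ is implicitly needed to invoke Lemma~\ref{lemma:client-drift bound} is accurate and is a hypothesis the paper's lemma statement omits.
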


In the following lemma, we analyze the effect of the delayed gradient at the $t$-th iteration as observed on the average error across $E$ local steps.

\begin{lemma}
\label{lemma:e-bound}
The average error accumulated across $E$ local  steps due to the delayed gradient at the $t$-th iteration of the proposed algorithm satisfies
\begin{flalign}
    \norm{\sumk{k=0}{E-1}{\avgerror{t,k}}} \leq 2\bet{t}LE^2 \norm{\grad{g}{t}}.
\end{flalign}
\begin{proof}
We start from the definition of $\avgerror{t,k}$ as given after \eqref{eq:gamma-1}, and then using triangle inequality and corollary~\ref{cor:error-bound}, we obtain
    \begin{flalign}
        \norm{\sumk{k=0}{E-1}{\avgerror{t,k}}} = \norm{\sumk{k=0}{E-1}\sumk{i=1}{N}p_i  \error{t,k}{i}} \leq \sumk{k=0}{E-1}\sumk{i=1}{N}p_i \norm{\error{t,k}{i}}  \leq \sumk{k=0}{E-1}\sumk{i=1}{N}p_i \big(2\bet{t}LE \norm{\grad{g}{t}} \big) = 2\bet{t}LE^2 \norm{\grad{g}{t}}.
    \end{flalign}
\end{proof}
\end{lemma}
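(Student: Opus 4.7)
The statement is a direct aggregation bound, so the plan is to chain together a triangle inequality with the per-client per-step gradient error bound already established in Corollary~\ref{cor:error-bound}. I will start by unfolding the definition $\avgerror{t,k} = \sum_{i=1}^{N} p_i \error{t,k}{i}$ introduced just after \eqref{eq:gamma-1}, and then pull the outer norm inside both summations. Because the weights $p_i$ form a convex combination, the triangle inequality gives
\[
\Bigl\lVert \sumk{k=0}{E-1} \avgerror{t,k} \Bigr\rVert
\;\le\; \sumk{k=0}{E-1} \sumk{i=1}{N} p_i \,\bigl\lVert \error{t,k}{i} \bigr\rVert.
\]
This is the only analytic step; the remainder is counting.

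Next I will invoke Corollary~\ref{cor:error-bound}, which yields $\lVert \error{t,k}{i} \rVert \le 2\bet{t} L E \, \lVert \grad{g}{t} \rVert$ uniformly in $k$ and $i$ (under the step-size requirement $\bet{t} \le 1/(12LE)$ already implicit in the analysis). Substituting this upper bound makes the right-hand side independent of $i$ and $k$, so the double sum collapses: the $i$-sum consumes the weights via $\sum_i p_i = 1$, and the $k$-sum contributes a factor of $E$. The product $E \cdot 2\bet{t} L E = 2\bet{t} L E^{2}$ is precisely the claimed constant, and the right-hand side carries the single factor $\lVert \grad{g}{t} \rVert$ as required.

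The hard part is nothing more than recognizing that the hypothesis of Corollary~\ref{cor:error-bound} is the binding constraint; once that corollary is in hand, the proof is mechanical. The real analytical content for this bound actually lives upstream, in the induction of Lemma~\ref{lemma:client-drift bound} and the chaining with Lemma~\ref{lemma:gradient-error bound} that produced the per-step bound $2\bet{t}LE\,\lVert \grad{g}{t}\rVert$. I therefore expect the proof of Lemma~\ref{lemma:e-bound} itself to be a one-line triangle inequality followed by substitution, with no further technical obstacle.
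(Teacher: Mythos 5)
Your proposal matches the paper's proof exactly: both unfold $\avgerror{t,k} = \sum_{i=1}^{N} p_i \error{t,k}{i}$, apply the triangle inequality to move the norm inside the double sum, and substitute the uniform per-client bound $\lVert \error{t,k}{i}\rVert \leq 2\bet{t}LE\norm{\grad{g}{t}}$ from Corollary~\ref{cor:error-bound}, collapsing the sums via $\sum_i p_i = 1$ and the $E$ terms in $k$. Your observation that the real work lives upstream in Lemmas~\ref{lemma:gradient-error bound} and~\ref{lemma:client-drift bound} is also accurate.
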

Next we provide the proof of the Lem.~\ref{lemma:gammabound} present in the main manuscript.

\textbf{Lemma 4}~Suppose that the assumptions \ref{asp: L-smoothness}, \ref{asp: bounded-bias}, \ref{Asp:bounded-memory} hold and the step-sizes satisfy $\alp{t} = \alp{} < \frac{2}{L(1+m)}$ and {$\bet{t} = \bet{} < \frac{c}{\sqrt{T}} \forall t \in \{0,1,\cdots, T-1\} $} and for some $ c,m \in \mathbb{R}^{+}$. Then the following holds for the forgetting term $\Gamma(t)$:
\begin{flalign}
    \frac{1}{T}\sumk{t=0}{T-1}\mathbb{E}[\Gamma(t)] < \mc{O}\bigg(\frac{1}{T} + \frac{1}{\sqrt{T}} \bigg). 
\end{flalign}

\textit{Proof.} From \eqref{eq:gamma-expand} we obtain $\Gamma(t)$ as
    \begin{flalign}
        \Gamma(t) &= \ub{L\bet{t}^2\norm{\sumk{i=1}{N}\sumk{k=0}{E-1}\bar{\bm{\rho}}_{t,k}}^2}{T1} \ub{- \bet{t}(1-L\alp{t}) \inner*{\mgrad{f}{}{t}{}, \sumk{k=0}{E-1}\bar{\bm{\rho}}_{t,k}}}{T2} \ub{- \bet{t}(1-L\alp{t}) \inner*{\mgrad{f}{}{t}{}, \sumk{k=0}{E-1} \avgerror{t,k}}}{T3}\nonumber \\
    & \; \ub{+ L \bet{t}^2 \norm{\sumk{k=0}{E-1}\avgerror{t,k}}^2}{T4}. \label{eq:gamma-lemma}
    \end{flalign}
To complete the proof, we will bound each of the terms ($T1$, $T2$, $T3$, and $T4$) separately. Using Lem.~\ref{lemma:rho-bound} and \ref{lemma:e-bound}, we obtain
\begin{flalign}
    T1 &\leq 32 \bet{t}^4L^3E^4\norm{\grad{g}{t}}^2 + 2\bet{t}^2LE^2 \norm{\grad{g}{t}}^2 \label{eq:T1}\\
    T4 &\leq 4 \bet{t}^4L^3 E^4 \norm{\grad{g}{t}}^2 \label{eq:T4}.
\end{flalign}
To bound the other two terms ($T2$ and $T3$), we use the bounded bias assumption, Lem.~\ref{lemma:avegage-memory-bound} along with Lem.~\ref{lemma:rho-bound}, \ref{lemma:e-bound} and Cauchy-Schwartz inequality, to obtain
\begin{flalign}
    T2 &= - \bet{t}(1-L\alp{t}) \inner*{\mgrad{f}{}{t}{}, \sumk{k=0}{E-1}\bar{\bm{\rho}}_{t,k}} \leq \bet{t}(1 - L\alp{t}) \norm{\mgrad{f}{}{t}{}}\norm{\sumk{k=0}{E-1}{\Bar{\bm{\rho}}_{t,k}}} \\
    & \leq \bet{t}(1 - L\alp{t})r \norm{\grad{g}{t}} \big( 4\bet{t}LE^2 \norm{\grad{g}{t}} + E\norm{\grad{g}{t}} \big)\\
    &= \big[4\bet{t}^2(1 - L\alp{t})rLE^2 + \bet{t}(1 - L\alp{t})rE  \big] \norm{\grad{g}{t}}^2.\label{eq:T2} 
\end{flalign}

For the term $T3$, we obtain
\begin{flalign}
    T3 &= - \bet{t}(1-L\alp{t}) \inner*{\mgrad{f}{}{t}{}, \sumk{k=0}{E-1} \avgerror{t,k}} \leq \bet{t}(1-L\alp{t}) \norm{\mgrad{f}{}{t}{}}\norm{\sumk{k=0}{E-1}{\avgerror{t,k}}} \\
    & \leq \bet{t}(1-L\alp{t})r \norm{\grad{g}{t}} \big( 2\bet{t}LE^2 \norm{\grad{g}{t}} \big)= 2r \bet{t}^2(1-L\alp{t})LE^2 \norm{\grad{g}{t}}^2. \label{eq:T3} 
\end{flalign}

Using \eqref{eq:T1}, \eqref{eq:T4}, \eqref{eq:T2}, and \eqref{eq:T3}  in \eqref{eq:gamma-lemma}, we obtain

\begin{flalign}
    \Gamma(t) & \leq \ub{32 \bet{t}^4L^3E^4\norm{\grad{g}{t}}^2 + 2\bet{t}^2LE^2 \norm{\grad{g}{t}}^2}{T1}\nonumber\\
    &+ \ub{4\bet{t}^2(1 - L\alp{t})rLE^2\norm{\grad{g}{t}}^2 + \bet{t}(1 - L\alp{t})rE\norm{\grad{g}{t}}^2 }{T2} \nonumber \\
    & + \ub{2r \bet{t}^2(1-L\alp{t})LE^2 \norm{\grad{g}{t}}^2}{T3} +\ub{4 \bet{t}^4L^3 E^4 \norm{\grad{g}{t}}^2}{T4} \\
    & = \bigg[ 36\bet{t}^4L^3E^4 + 2\bet{t}^2LE^2 + r\bet{t}E(1 - L\alp{t}) \big(4\bet{t}LE + 2\bet{t}LE +1 \big) \bigg] \norm{\grad{g}{t}}^2  \\
    & \overset{\text{(a)}}{\leq} \bigg[ 36\bet{t}^4L^3E^4 + 2\bet{t}^2LE^2 + r\bet{t}E(1 - L\alp{t})\big(6\bet{t}LE +1 \big) \bigg] \bigg( 2 \norm{\grad{\mh}{t}}^2 + 2 \omega^2 \bigg)     \\
    & = \bigg[ 72\bet{t}^4L^3E^4 + 4\bet{t}^2LE^2 + 2r\bet{t}E(1 - L\alp{t}) \big(6\bet{t}LE +1 \big) \bigg] \bigg(\norm{\grad{\mh}{t}}^2 + \omega^2 \bigg) \label{eq:gamma-bound0}\\
    & \overset{\text{(b)}}{\leq} \frac{Q_t}{Z_t} [\mh(\gw{t}) - \mh(\gw{t+1}) ] + Q_t \omega^2, \label{eq: gamma-bound1}
\end{flalign}
where (a) follows using Lem.~\ref{lemma:g-bound}, (b) follows from \eqref{eq:h_intermediate}, $Q_t \triangleq 72\bet{t}^4L^3E^4 + 4\bet{t}^2LE^2 + 2r\bet{t}E(1 - L\alp{t}) \big(6\bet{t}LE +1 \big)$ and $Z_t \triangleq \bet{t}E(1 - 80\bet{t}^3L^3E^3 - 10\bet{t}LE)$.

Considering $\alp{t} = \alp{}$, $\bet{t} = \bet{}\; \forall t \in \{0,1,\cdots, T-1  \}$, we obtain $Q_t = Q = 72\bet{}^4L^3E^4 + 4\bet{}^2LE^2 + 2r\bet{}E(1 - L\alp{}) \big(6\bet{}LE +1 \big)$ and $Z_t = Z = \bet{}E(1 - 80\bet{}^3L^3E^3 - 10\bet{}LE)$. Telescoping the above from $t=0$ to $T-1$ we obtain
\begin{flalign}
    \sumk{t=0}{T-1} \Gamma(t) &\leq \sumk{t=0}{T-1} \bigg[ \frac{Q}{Z} \{\mh(\gw{t}) - \mh(\gw{t+1}) \} + Q \omega^2 \bigg] \\
    & = \frac{Q}{Z} \big[\mh(\gw{0}) - \mh(\gw{T}) \big] + QT \omega^2. 
\end{flalign}

Putting back the values of $Q, Z$ and using $\Delta_{\mh} = \mh(\gw{0}) - \mh(\gw{T})$, we obtain
\begin{flalign}
    \sumk{t}{}\Gamma(t) &\leq \frac{72\bet{}^4L^3E^4 + 4\bet{}^2LE^2 + 2r\bet{}E(1 - L\alp{}) \big(4\bet{}L^2E + 2\bet{}LE +1 \big)}{\bet{}E(1 - 80\bet{}^3L^3E^3 - 10\bet{}LE)}\Delta_{\mh} \nonumber \\
    &+ T \big[72\bet{}^4L^3E^4 + 4\bet{}^2LE^2 + 2r\bet{}E(1 - L\alp{}) \big(6\bet{}LE +1 \big)\big] \omega^2 \\
    &\leq \mc{O}\Bigg(\frac{\bet{}^3L^4 + \bet{}L + \bet{}L^2+ \bet{}L+1}{ 1-\bet{}^3L^3 -10\bet{}L} \Delta_{\mh} + T \big(\bet{}^4L^3 + \bet{}^2L + \bet{}\big(\bet{}L +1\big))\omega^2  \Bigg).
    \label{eq:GammatboundStep1}
    \end{flalign}
The $\omega$ term depends on the memory choice $\mc{D} = \{\mc{D}^1, \mc{D}^2, \ldots, \mc{D}^N\}$. To handle this randomness, we choose
\begin{flalign}
    \mc{D}^* = \argmax\limits_{C \subset D \subset P \cup C} \frac{\bet{}^3L^4 + \bet{}L^2+ 2\bet{}L+1}{1-\bet{}^3L^3 -10\bet{}L} \Delta_{\mh}
\end{flalign}
Taking expectation on both sides of \eqref{eq:GammatboundStep1}, averaging over $T$ and using $\bet{} < \frac{c}{\sqrt{T}}$ for some $c >0$, we obtain
\begin{flalign}
    \frac{1}{T}\sumk{t}{}\mathbb{E}[\Gamma(t)] &< \frac{1}{T} \mc{O}\Bigg(\frac{\bet{}^3 + \bet{} + 1 }{1-\bet{}^3 -\bet{}} \Delta_{h_{|_{\mc{D}^{*}}}} + T \big(\bet{}^4 + \bet{}^2 + \bet{}\big)\omega^2  \Bigg) < \mc{O}\Bigg( \frac{1}{T} + \frac{1}{\sqrt{T}} \Bigg).
\end{flalign}

Next, we provide the proof for the Theorem~\ref{theorem:f-rate} present in the main manuscript.

\textbf{Theorem 5}.~
Let $\{\gw{t}\}_{t=1}^T$ be the sequence generated by algorithm~\ref{alg: A}, and the step-sizes satisfy $\alp{t} = \frac{1}{L(m+1)}$ and $\bet{t} < \frac{c}{\sqrt{T}} \forall \{ 0,1, \cdots, T-1 \}$ and for some $ c,m \in \mathbb{R}^{+}$. Then, we obtain the following rate of convergence
\begin{flalign}
    \min\limits_{t}  \mathbb{E} \left[\norm{\grad{f}{t}}^2 \right] < \mc{O}\bigg(\frac{1}{\sqrt{T}}\bigg).
\end{flalign}

\textit{Proof.} From Theorem~\ref{lemma:min-grad-f} and Lem.~\ref{lemma:gammabound}, we have the following two results
\begin{flalign}
    \min\limits_{t}  \mathbb{E} \left[\norm{\grad{f}{t}}^2 \right] &\leq \frac{2L(1+m)}{T} \bigg(f(\gw{0}) - f(\gw{T}) + \sumk{t=0}{T-1}\mathbb{E}[\Gamma(t)] \bigg), \nonumber \\
    \frac{1}{T}\sumk{t}{}\mathbb{E}[\Gamma(t)] &< \mc{O}\Bigg( \frac{1}{T} + \frac{1}{\sqrt{T}} \Bigg). \nonumber
\end{flalign}
Using these together, we obtain
\begin{flalign}
    \min\limits_{t}  \mathbb{E} \left[\norm{\grad{f}{t}}^2 \right] &< \mc{O}\bigg( \frac{1}{\sqrt{T}} \bigg). \nonumber
\end{flalign}

Theorem~\ref{theorem:f-rate} gives the rate of convergence for the previous task. Next, we will analyze the IFO complexity of the C-FLAG algorithm~\ref{alg: A}.
\begin{corollary}\label{cor:ifo}
Let the step-sizes satisfy $\alp{t} = \frac{1}{L(m+1)}$ and $\bet{t} < \frac{c}{\sqrt{T}} \forall t \in \{ 0,1, \cdots, T-1 \}$, and for some $c \in \mathbb{R}^{+}$. Then the IFO complexity of the algorithm~\ref{alg: A} to obtain an $\epsilon$-accurate solution is:
\begin{flalign}
    IFO\;  calls = \mc{O}\bigg(\frac{1}{\epsilon^2}\bigg).
\end{flalign}
\begin{proof}
    Recall that the IFO complexity of an algorithm for an $\epsilon$-accurate solution is given as
    \begin{flalign}
        T(\epsilon) = \min \{T :  \min_{t}\; \mathbb{E}[\norm{\grad{f}{t}}^2] \leq \epsilon  \}
    \end{flalign}
For each step, an IFO call is utilised in calculating gradients for that step. From Theorem 2, we get

\begin{flalign}
    \min\limits_{t}  \mathbb{E} \left[\norm{\grad{f}{t}}^2 \right] \leq \frac{2L(1+m)}{T} \bigg(F + \sumk{t=0}{T-1}\mathbb{E}[\Gamma(t)] \bigg)
\end{flalign}
Hence, $ \mathbb{E} \left[\norm{\grad{f}{t}}^2 \right] \rightarrow \epsilon$ leads to $\mc{O}\bigg(\frac{1}{\epsilon^2} \bigg) $.
\end{proof}

\end{corollary}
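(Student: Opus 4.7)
The plan is to read off the IFO complexity directly from the convergence rate that has already been established in Theorem~\ref{theorem:f-rate}, together with a simple accounting of how many first-order oracle calls each round of Algorithm~\ref{alg: A} consumes. The statement to establish is that with the prescribed step sizes, driving $\min_t \mathbb{E}[\|\nabla f(x_t)\|^2]$ below $\epsilon$ requires $\mathcal{O}(1/\epsilon^2)$ IFO calls.

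First, I would recall the definition $T(\epsilon) = \min\{T : \min_t \mathbb{E}[\|\nabla f(x_t)\|^2] \leq \epsilon\}$, so that IFO complexity equals $T(\epsilon)$ multiplied by the per-iteration oracle cost. Next, I would quote Theorem~\ref{theorem:f-rate}: under $\alpha_t = 1/(L(m+1))$ and $\bet{t} < c/\sqrt{T}$, we have $\min_t \mathbb{E}[\|\nabla f(x_t)\|^2] < \mathcal{O}(1/\sqrt{T})$. Imposing $\mathcal{O}(1/\sqrt{T}) \leq \epsilon$ and inverting gives $T(\epsilon) = \mathcal{O}(1/\epsilon^2)$ rounds.

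Second, I would argue that the per-round IFO cost is a constant independent of $T$ and $\epsilon$. By construction of the IAG update \eqref{eq: def-memory-g-grad}, at each local epoch the $i$-th client refreshes exactly one component gradient of $g_i$ while reusing stale gradients for the other components, so each of the $E$ local steps costs $\mathcal{O}(1)$ oracle calls; the initial full computation of $\grad{g_i}{t}$ and the single memory gradient $\mgrad{f}{i}{t}{}$ at the start of each round add a further bounded number of calls. Multiplying this per-round constant by $T(\epsilon) = \mathcal{O}(1/\epsilon^2)$ yields the claimed bound.

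The only potentially delicate point is the bookkeeping in the second step: one must be sure that the warm-start gradient computed at the beginning of each communication round and the $E$ incremental updates do not hide a factor depending on $T$ or $\epsilon$, and that they dominate (rather than are dominated by) the memory-gradient and aggregation costs. Since the algorithm is designed precisely so that the IAG amortization keeps this cost uniform in $T$, this reduces to a one-line verification and does not impede the argument.
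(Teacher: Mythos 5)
Your proposal is correct and follows essentially the same route as the paper: both invert the $\mathcal{O}(1/\sqrt{T})$ rate of Theorem~\ref{theorem:f-rate} against the definition of $T(\epsilon)$ to obtain $\mathcal{O}(1/\epsilon^2)$. Your explicit accounting that the per-round oracle cost is a constant independent of $T$ and $\epsilon$ (one refreshed component gradient per local step under IAG, plus the warm-start and memory gradients) is a detail the paper leaves implicit, and you also cite the correct theorem for the $\mathcal{O}(1/\sqrt{T})$ bound where the paper's proof mistakenly points back to Theorem~\ref{lemma:min-grad-f}.
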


Thus far, we presented the analysis with a constant step-size for $\alp{t}$ and diminishing step-size for $\bet{t}$. For completeness, in the sequel, we discuss the effect of constant learning rates on the overall convergence rate.

\begin{lemma}
\label{lemma:gamma-bound_avg}
Suppose that the assumptions \ref{asp: L-smoothness}, \ref{asp: bounded-bias}, \ref{Asp:bounded-memory} hold and the learning rates satisfy $\alp{t} < \frac{2}{L(1+m)}$ and $\bet{t}=\tfrac{1}{60LE}$ $\forall t \in \{ 0,1, \cdots, T-1 \}$. Then we obtain the following bound on $\Gamma(t)$:
\begin{flalign}
    \sumk{t=0}{T-1}\frac{\mathbb{E}[\Gamma(t)]}{T} < O\bigg(\frac{1}{T} + 1 \bigg). 
\end{flalign}

\begin{proof}
From \eqref{eq:gamma-expand} we obtain $\Gamma(t)$ as
    \begin{flalign}
        \Gamma(t) &= \ub{L\bet{t}^2\norm{\sumk{i=1}{N}\sumk{k=0}{E-1}\bar{\bm{\rho}}_{t,k}}^2}{T1} \ub{- \bet{t}(1-L\alp{t}) \inner*{\mgrad{f}{}{t}{}, \sumk{k=0}{E-1}\bar{\bm{\rho}}_{t,k}}}{T2} \ub{- \bet{t}(1-L\alp{t}) \inner*{\mgrad{f}{}{t}{}, \sumk{k=0}{E-1} \avgerror{t,k}}}{T3} \nonumber\\
    & \; \ub{+ L \bet{t}^2 \norm{\sumk{k=0}{E-1}\avgerror{t,k}}^2}{T4}.
    \end{flalign}

Next we proceed similar to the proof of Theorem~\ref{theorem:f-rate} provided above, and from \eqref{eq: gamma-bound1} we obtain,

\begin{flalign}
     \Gamma(t) < \frac{Q_t}{Z_t} [\mh(\gw{t}) - \mh(\gw{t+1}) ] + Q_t \omega^2,
\end{flalign}
where $Q_t = 72\bet{t}^4L^3E^4 + 4\bet{t}^2LE^2 + 2r\bet{t}E(1 - L\alp{t}) \big(6\bet{t}LE +1 \big)$ and $Z_t = \bet{t}E(1 - 80\bet{t}^3L^3E^3 - 10\bet{t}LE)$.

Considering $\alp{t} = \alp{}$, $\bet{t} = \bet{}\; \forall t \in \{0,1,\cdots, T-1  \}$, we obtain $Q_t = Q = 72\bet{}^4L^3E^4 + 4\bet{}^2LE^2 + 2r\bet{}E(1 - L\alp{}) \big( 6\bet{}LE +1 \big)$ and $Z_t = Z = \bet{}E(1 - 80\bet{}^3L^3E^3 - 10\bet{}LE)$. By summing up the above equation from $t=0$ to $T-1$ we obtain
\begin{flalign}
    \sumk{t=0}{T-1} \Gamma(t) &\leq \sumk{t=0}{T-1} \bigg[ \frac{Q}{Z} \{\mh(\gw{t}) - \mh(\gw{t+1}) \} + Q \sup\limits_{C \subset D \subset M \cup C} \omega_{h|_{D}}^2 \bigg] \\
    & = \frac{Q}{Z} \big[\mh(\gw{0}) - \mh(\gw{T}) \big] + QT \omega^2.\label{eq:gamma-bound-sum}
\end{flalign}

Assuming that $\bet{} \leq \frac{1}{2LE} $ and finally using $\bet{} = \frac{1}{60LE}$, we obtain the following inequalities
\begin{flalign}
    \frac{Q}{Z} \leq 16r(1-L\alp{}) + 22, \label{eq:Q-Z} \\
    Q \leq \frac{11}{60L} + \frac{2r}{15L}(1- L\alp{}). \label{eq:Q}
\end{flalign}

Using \eqref{eq:Q-Z} and \eqref{eq:Q} in \eqref{eq:gamma-bound-sum}, we obtain
\begin{flalign}
    \sumk{t=0}{T-1} \Gamma(t) &\leq \big[16r (1-L\alp{}) + 22)\big] \big[\mh(\gw{0}) - \mh(\gw{T}) \big]  + \bigg[\frac{11}{60L} + \frac{2r}{15L}(1- L\alp{})\bigg]T \omega^2.
\end{flalign}

Considering average over $T$ iterations and denoting $\Delta_{\mh} = \mh(\gw{0}) - \mh(\gw{T})$, we obtain
\begin{flalign}
    \frac{1}{T}\sumk{t=0}{T-1} \Gamma(t) &\leq \frac{16r (1-L\alp{}) + 22}{T} \Delta_{\mh[|_D]} + \bigg[\frac{11}{60L} + \frac{2r}{15L}(1- L\alp{})\bigg] \omega^2.
\end{flalign}
To handle the randomness of the memory choice, we choose
\begin{flalign}
    D^* = \argmax\limits_{C \subset D \subset P \cup C} \frac{16r (1-L\alp{}) + 22}{T} \Delta_{\mh}.
\end{flalign}
Taking expectations on both sides, we obtain
\begin{flalign}
    \frac{1}{T}\sumk{t=0}{T-1}\mathbb{E}[ \Gamma(t)] & < \mc{O}\Bigg(\frac{16r (1-L\alp{}) + 22}{T} \Delta_{h_{|_{D^{*}}}} + \bigg[\frac{11}{60L} + \frac{2r}{15L}(1- L\alp{})\bigg] \omega^2 \Bigg) \\
    & < \mc{O} \bigg(\frac{1}{T} +1\bigg).
\end{flalign}
So, if we take constant step-sizes for both $\alp{t}$ and $\bet{t}$, then in our proposed method, the average of the cumulative forgetting terms still converges with constant time complexity $\mc{O}(1)$. 
\end{proof}
\end{lemma}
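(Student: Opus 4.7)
The plan is to follow the same telescoping chain that produced Lemma~\ref{lemma:gammabound}, but track carefully how the constant choice $\bet{t}=\bet{}=\tfrac{1}{60LE}$ changes the dependence on $T$. First, I would start from the four-term decomposition
\begin{flalign*}
\Gamma(t) = \underbrace{L\bet{}^2\lVert\tsumk{k=0}{E-1}\bar{\bm{\rho}}_{t,k}\rVert^2}_{T1}\underbrace{-\bet{}(1-L\alp{t})\inner{\mgrad{f}{}{t}{}, \tsumk{k=0}{E-1}\bar{\bm{\rho}}_{t,k}}}_{T2}\underbrace{-\bet{}(1-L\alp{t})\inner{\mgrad{f}{}{t}{},\tsumk{k=0}{E-1}\avgerror{t,k}}}_{T3}\underbrace{+L\bet{}^2\lVert\tsumk{k=0}{E-1}\avgerror{t,k}\rVert^2}_{T4},
\end{flalign*}
obtained from \eqref{eq:gamma-expand}, and reuse the bounds already established: Lemma~\ref{lemma:rho-bound} for $T1$, Lemma~\ref{lemma:e-bound} together with the bounded-memory-gradient Lemma~\ref{lemma:avegage-memory-bound} for $T2$ and $T3$, and the same $\|e\|$-bound for $T4$. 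This reproduces the inequality
$\Gamma(t)\leq\big[72\bet{}^4L^3E^4+4\bet{}^2LE^2+2r\bet{}E(1-L\alp{})(6\bet{}LE+1)\big]\big(\|\grad{\mh}{t}\|^2+\omega^2\big),$
exactly as in \eqref{eq:gamma-bound0}.

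Next, I would invoke the descent inequality on $\mh$ derived inside Lemma~\ref{lemma:h_rate} (specifically \eqref{eq:h_intermediate}) to replace $\|\grad{\mh}{t}\|^2$ by a multiple of $\mh(\gw{t})-\mh(\gw{t+1})$. With the common step-size shorthand this gives
$\Gamma(t)\leq\tfrac{Q}{Z}\big[\mh(\gw{t})-\mh(\gw{t+1})\big]+Q\,\omega^2,$
where $Q:=72\bet{}^4L^3E^4+4\bet{}^2LE^2+2r\bet{}E(1-L\alp{})(6\bet{}LE+1)$ and $Z:=\bet{}E(1-80\bet{}^3L^3E^3-10\bet{}LE)$. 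Telescoping from $t=0$ to $T-1$ produces
$\tsumk{t=0}{T-1}\Gamma(t)\leq\tfrac{Q}{Z}\Delta_{\mh}+QT\,\omega^2.$

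Then I would plug in the explicit value $\bet{}=\tfrac{1}{60LE}$. A short calculation shows $\tfrac{Q}{Z}\leq 16r(1-L\alp{})+22$ and $Q\leq\tfrac{11}{60L}+\tfrac{2r}{15L}(1-L\alp{})$, so both are constants independent of $T$. Dividing by $T$ and taking expectation over the memory (choosing the worst-case subset $\mc{D}^{*}=\arg\max_{\mc{C}\subset\mc{D}\subset\mc{P}\cup\mc{C}}\Delta_{\mh}$ to absorb the randomness exactly as in the proof of Lemma~\ref{lemma:gammabound}) yields
$\tfrac{1}{T}\tsumk{t=0}{T-1}\mathbb{E}[\Gamma(t)]<\mc{O}\big(\tfrac{1}{T}+1\big),$
which is the claim.

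The main obstacle, and the whole point of the lemma, is the second summand: because $\bet{}$ is held constant rather than shrunk like $c/\sqrt{T}$, the coefficient $Q$ no longer has a factor that vanishes with $T$, so $QT\omega^2/T=Q\omega^2=\Theta(1)$ survives. Everything else is a faithful replay of the proof of Lemma~\ref{lemma:gammabound}; the only care needed is to verify that $\bet{}=1/(60LE)$ lies inside all the admissibility windows ($\bet{}\leq 1/(4LE)$ for Lemma~\ref{lemma:client-drift bound}, $\bet{}\leq 1/(12LE)$ for Corollary~\ref{cor:error-bound}, and $1-80\bet{}^3L^3E^3-10\bet{}LE>0$ so that $Z>0$), which it does comfortably. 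The message of the lemma is therefore qualitative: constant step-sizes on the current task cannot kill cumulative forgetting, which is precisely why diminishing $\bet{t}$ was needed in Lemma~\ref{lemma:gammabound} and Theorem~\ref{theorem:f-rate}.
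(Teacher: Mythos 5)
Your proposal is correct and follows essentially the same route as the paper's own proof: the four-term decomposition from \eqref{eq:gamma-expand}, the reduction to $\Gamma(t)\leq \tfrac{Q}{Z}[\mh(\gw{t})-\mh(\gw{t+1})]+Q\omega^2$ via \eqref{eq: gamma-bound1}, telescoping, the constant bounds on $Q/Z$ and $Q$ at $\bet{}=\tfrac{1}{60LE}$, and the $\argmax$ over memory subsets before taking expectation. Your added check that $\bet{}=\tfrac{1}{60LE}$ sits inside all the admissibility windows is a sensible extra verification but does not change the argument.
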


\section{PROPOSED ALGORITHM}
The proposed C-FLAG algorithm was presented in Sec.~\ref{sec:algorithm} of the main manuscript. Here, we complete the algorithm by presenting the pseudocode for \texttt{AdapLR} where $AdapFlag$ is a boolean variable with $AdapFlag = True$ implies that the algorithm employs adaptive learning rates.

\begin{minipage}{0.48\textwidth}
\begin{algorithm}[H]
	\caption{C-FLAG: Continual Federated Learning with Aggregated Gradients}
	\begin{algorithmic}[1]
		\REQUIRE Step-size $\alp{}$, $\bet{}$, initial model $\gw{0}$, $Adap Flag$
		\ENSURE $\gw{t}$ for $t = 1,\hdots, T$
		\FOR{$t = 0, \ldots, T-1$}
            \STATE For client $i = 1, \hdots, N$, compute $\grad{g_i}{t}$ and $\mgrad{f}{i}{t}{}$, and transmit to the server. 
            \STATE Server computes $\grad{g}{t}$ and $\mgrad{f}{}{t}{}$, and broadcasts to each client.
            \FOR{client $i = 1, 2, \ldots N $ in parallel}
                
                \STATE Set $\lw{t,0}{i} = \gw{t} $
                \FOR{$k = 0, \hdots, E-1$}
                    \STATE Compute $\dgrad{g}{i}{t,k}{i}$ using \eqref{eq: def-memory-g-grad} and $\lw{t,k+1}{i}$ using \eqref{eq: LUR}.
                \ENDFOR
                \STATE $\Delta \lw{t}{i} = \texttt{AdapLR}(\lw{t,E}{i},\mgrad{f}{}{t}{}, \alp{},\bet{}, AdapFlag ) $.
                \STATE Transmit $\Delta \lw{t}{i}$ to the server.
            \ENDFOR
            \STATE Server computes and broadcasts $\gw{t+1}$ using $\gw{t+1} = \gw{t}- \sumk{i=1}{N}p_i\Delta \lw{t}{i}$.
        \ENDFOR
	\end{algorithmic}
\end{algorithm}
\end{minipage}
\hfill
\begin{minipage}{0.48\textwidth}

\begin{algorithm}[H]
    \caption{AdapLR}
    \label{alg:adaptive}
    \begin{algorithmic}[1]
        \REQUIRE $ \lw{t,E}{i},\mgrad{f}{}{t}{}, \alp{},\bet{}, AdapFlag $
        \ENSURE $\Delta \lw{t}{i}$
        \IF{not $AdapFlag$}
        \RETURN $\Delta \lw{t}{i} = \alp{t}\mgrad{f}{}{t}{}+ \lw{t,E}{i}$
        \ENDIF
        \STATE Compute $\mf{a}_i = \gw{t} - E\bet{t}(\grad{g}{t} - \grad{g_i}{t}) - \lw{t,E}{i}$.
        \STATE Compute $ \Lambda_{t,i} = \langle{\mgrad{f}{}{t}{}, \mf{a}_i}\rangle $.
        \IF{$ \Lambda_{t,i} > 0$}
            \STATE $\alp{t,i} = \alp{} $
            \STATE $\bet{t,i} = \frac{(1-L\alp{})\Lambda_{t,i}}{LN p_i\norm{\mf{a}_i}^2} $ // Worst case
        \ELSE
            \STATE $\alp{t,i} =\alp{}(1- \frac{\Lambda_{t,i}}{\norm{\mgrad{f}{}{t}{}}^2})$
            \STATE $\bet{t,i} = \bet{} $
        \ENDIF
        \STATE $\mf{a}_i = \frac{\bet{t,i}}{\bet{}}*\mf{a}_i $ // re-scaling with adaptive rate
        \STATE $\Delta \lw{t}{i} = \alp{t,i}\mgrad{f}{}{t}{} + E\bet{t,i}(\grad{g}{t} - \grad{g_i}{t}) + \mf{a}_i$ 
        \RETURN $\Delta \lw{t}{i}$
    \end{algorithmic}
\end{algorithm}
\end{minipage}

\section{C-FLAG WITH ADAPTIVE LEARNING RATES}

In Sec.~\ref{sec:adapLR} of the main manuscript, we presented the analysis regarding the effect of choosing adaptive learning rates in C-FLAG. Essentially, this translated the convergence analysis into an easily implementable solution where learning rates are adapted to achieve lower forgetting at each iteration. In the following subsections, we provide detailed discussions on the average and the worst-case scenarios. The results obtained here are encapsulated in Table~\ref{tab:adaptive_rates} of the main manuscript. 

\subsection{Average Case}
In the average case, we denote the catastrophic forgetting as $\Gamma_{av}(t)$ and $\Gamma_{i, av}(t)$ for the server and the $i$-th client respectively. Using the average case condition given as $\sumk{i=1}{N}\sumk{j=1, j \neq i}{N} p_i p_j C_{i,j} =0$, the forgetting terms can be rewritten as 
\begin{align}
    \Gamma(t) &= \Gamma_{av}(t) = \frac{L\bet{t}^2}{2} \sumk{i=1}{N}p_i^2 \norm{\mf{a}_i}^2  - \bet{t}(1-L\alp{t})\sumk{i=1}{N}p_i \Lambda_{t,i}  \nonumber\\
    &= \sumk{i=1}{N}p_i \bigg[ \frac{L\bet{t}^2}{2} p_i \norm{\mf{a}_i}^2  - \bet{t}(1-L\alp{t}) \Lambda_{t,i} \bigg]  =  \sumk{i=1}{N}p_i \Gamma_{i, av}(t),\label{eq:gamma_avg}
\end{align}
where $\Gamma_{i, av}(t) = \frac{L\bet{t}^2}{2} \sumk{i=1}{N}p_i \norm{\mf{a}_i}^2  - \bet{t}(1-L\alp{t})\sumk{i=1}{N} \Lambda_{t,i}$. Subsequently, taking expectation on both sides of \eqref{eq:gamma_avg}, we obtain
\begin{align}
    \mathbb{E}[\Gamma_{av}(t)] = \sumk{i=1}{N}p_i \mathbb{E}[\Gamma_{i, av}(t)].\label{eq:gamma_avg_expected}
\end{align}
We aim to minimize $\Gamma_{av}(t)$ by minimizing individual clients' contribution, $\Gamma_{i, av}(t)$. We achieve this by obtaining client-specific adaptive learning rates $\alp{t,i}$ and $\bet{t,i}$ for $i \in [N]$. Since $\Gamma_{i,av}(t)$ is a quadratic polynomial in $\bet{t}$, we obtain its solution $\bet{t,i}^{*} = \frac{(1-L\alp{t})\Lambda_{t,i}}{L p_i\norm{\mf{a}_i}^2}$ which leads to $\mathbb{E}[\Gamma_{i,av}^*] = -\frac{[(1-L\alp{t})\Lambda_{t,i}]^2}{2L p_i \norm{\mf{a}_i}^2}$ for a fixed $\alp{t}$. For the transference case ($\Lambda_{t,i} >0$), we choose $\bet{t,i}= \bet{t,i}^*$ and $\alp{t,i}=\alp{t} = \alp{}$, which is a feasible solution to our optimization problem~\ref{eq:gamma-min} as in the main manuscript, and consequently, $i$-th client's contribution towards the global forgetting becomes negative since $\mathbb{E}[\Gamma_{i,av}^*] <0 $. However, in the case of interference ($\Lambda_{t,i}\leq 0$), $\bet{t,i}^*$ is negative, which violates the constraint of our optimization problem. Moreover, $\Gamma_{i,ad}(t)$ is monotonically increasing in $\bet{t}$ for any feasible $\bet{t}$. Hence, we propose to adapt the $\alp{t}$ to find $\alp{t,i}$. Recall that each client takes $E$ local gradient steps on the current data before server aggregation. Consequently, the overall effect of $i$-th client's accumulated gradients, $\mathbf{a}_i$, on the direction of $\mgrad{f}{}{t}{}$, is given by $\inner{\frac{\mgrad{f}{}{t}{}}{\norm{\mgrad{f}{}{t}{}}}, \mf{a}_i }\frac{\mgrad{f}{}{t}{}}{\norm{\mgrad{f}{}{t}{}}} = \frac{\Lambda_{t,i}}{\norm{\mgrad{f}{}{t}{}}^2} \mgrad{f}{}{t}{} $. To negate this effect, we propose to adapt $\alp{t}$, at the $i$-th client as $\alp{t,i} = \alp{}(1- \frac{\Lambda_{t,i}}{\norm{\mgrad{f}{}{t}{}}^2})$ and keep $\bet{t,i} = \alp{}$.

\subsection{Worst Case: $C_{i,j} >0$}

In the worst case, we denote the catastrophic forgetting as $\Gamma_{w}(t)$ and $\Gamma_{i, w}(t)$ for the server and the $i$-th client respectively. The forgetting terms can be rewritten as follows:
\begin{align}
    \Gamma(t) &= \Gamma_{w}(t) \\
    & = \frac{L\bet{t}^2}{2} \bigg[\sumk{i=1}{N}p_i^2 \norm{\mf{a}_i}^2 + \sumk{i=1}{N}\sumk{{\substack{j=1 \\ j \neq i}} }{N} p_i p_j C_{i,j} \bigg]  - \bet{t}(1-L\alp{t})\sumk{i=1}{N}p_i \Lambda_{t,i} \\
    &\overset{\text{(a)}}{\leq} \frac{L\bet{t}^2}{2} \sumk{i=1}{N}p_i^2 N \norm{\mf{a}_i}^2  - \bet{t}(1-L\alp{t})\sumk{i=1}{N}p_i \Lambda_{t,i} \\
    &= \sumk{i=1}{N} p_i \bigg[\frac{L\bet{t}^2}{2} p_i N \norm{\mf{a}_i}^2  - \bet{t}(1-L\alp{t}) \Lambda_{t,i}   \bigg] \\
    & = \sumk{i=1}{N}p_i \Gamma_{i, w}(t)
 \label{eq:gamma_worst},
\end{align}
where (a) follows from the inequality
$2 p_i p_j C_{i,j} = 2\inner{p_i a_i, p_j a_j} \leq p_i^2\norm{a_i}^2 + p_j^2 \norm{a_j}^2$ and we denote $\Gamma_{i, w}(t) = \frac{L\bet{t}^2}{2} \sumk{i=1}{N}p_i N \norm{\mf{a}_i}^2  - \bet{t}(1-L\alp{t})\sumk{i=1}{N} \Lambda_{t,i}$. Subsequently, taking expectation on both sides of \eqref{eq:gamma_worst}, 
we obtain
\begin{align}
    \mathbb{E}[\Gamma_{w}(t) &] \leq \sumk{i=1}{N}p_i \mathbb{E}[\Gamma_{i, w}(t)].\label{eq:gamma_worst_expected}
\end{align}
Similar to the previous section, we analyze and minimize the contribution of each client individually. 
In the worst case, for the $i$-th client, $\mathbb{E}[\Gamma_{i, w}(t)] = \mathbb{E}[ \frac{L\bet{t}^2}{2} p_i N \norm{\mf{a}_i}^2  - \bet{t}(1-L\alp{t}) \Lambda^i ]$, and the optimal learning rate $\bet{t,i}^{*} = \frac{(1-L\alp{t})\Lambda_{t,i}}{LN p_i\norm{\mf{a}_i}^2}$ and $\mathbb{E}[\Gamma_{i,w}^*] = -\frac{[(1-L\alp{t})\Lambda_{t,i}]^2}{2LN p_i \norm{\mf{a}_i}^2}$. Observe that the optimal choice of $\bet{t,i}^{*}$  leads to $\mathbb{E}[\Gamma_{i,w}^*] <0 $. Similar to the average case, in the case of interference ($\Lambda_{t,i}\leq 0$), we propose $\alp{t,i} = \alp{}(1- \frac{\Lambda_{t,i}}{\norm{\mgrad{f}{}{t}{}}^2})$. Then clients can scale their learning rates by either $\alp{t,i} = \alp{}$ or $\bet{t,i} = \alp{}$ at the end of every $E$ local epochs. 

\subsection{Analysis of Adaptive Rates}

From the previous sections, we see that clients having transference effect by adapting $\bet{t}$ to $\bet{t,i}^{*}$ leads to minimized catastrophic forgetting in both, the average and the worst case. In this section, first, we show that in the interference case our proposed choice of adaptive $\alp{t,i}$ leads to lesser forgetting, that is $\mathbb{E}[\Gamma_{i, ad}(t) ] \leq \mathbb{E}[\Gamma_{i}(t)]$. This holds for both the average and worst-case scenarios.

Next, we provide the proofs for Lem.~\ref{lemma:less-inter} and Lem.~\ref{lemma:less-gamma}.

\textbf{Lemma 6}. In the case of interference at the $i$-th client, for both the average and worst cases, adaptive rates $\alp{t,i} = \alp{}(1- \frac{\Lambda_{t,i}}{\norm{\mgrad{f}{}{t}{}}^2})$ and $\bet{t,i} = \alp{}$ lead to smaller forgetting, that is $\mathbb{E}[\Gamma_{i, ad}(t) ] \leq \mathbb{E}[\Gamma_{i}(t)]$.

\textit{Proof.} In the case of interference we have $\Lambda_{t,i} \leq 0$ and $\alp{t,i} = \alp{}(1- \frac{\Lambda_{t,i}}{\norm{\mgrad{f}{}{t}{}}^2})$. Using these we obtain 
         \begin{align}
            \alp{t,i} = \alp{t}(1- \frac{\Lambda_{t,i}}{\norm{\mgrad{f}{}{t}{}}^2}) \geq \alp{t}
        \end{align}
        From this, we get
        \begin{align}
            1 - L\alp{t,i } \leq 1 - L\alp{t}
        \end{align}
        Next, multiplying both sides by $-\bet{t}\Lambda_{t,i}$, we get 
        \begin{align}
            -\bet{t}(1 - L\alp{t,i})\Lambda_{t,i} \leq -\bet{t}(1 - L\alp{t})\Lambda_{t,i},
         \end{align}
         Finally, adding $\frac{L\bet{t}^2}{2} \sumk{i=1}{N}p_i^2 \norm{\mf{a}_i}^2$ or  $\frac{L\bet{t}^2}{2} \sumk{i=1}{N}p_i^2 N \norm{\mf{a}_i}^2$ on both sides based on the average or worst case it leads to $\mathbb{E}[\Gamma_{i, ad}(t) ] \leq \mathbb{E}[\Gamma_{i}(t)]$. 

\textbf{Lemma 7}. Client-wise adaptive rates lead to improved forgetting, $\mathbb{E}[\Gamma_{ad}(t)] \leq \mathbb{E}[\Gamma(t)] $.

\textit{Proof.} Suppose that out of $N$ clients, for $r$ clients we observe interference($\Lambda_{t,i} \leq 0$), and for the rest $(N-r)$ clients, we observe transference ($\Lambda_{t,i} >0 $). Without loss of generality, consider the first $r$ clients are interfering, that is $\Lambda_{t,i} \leq 0 $ for $i \in \{1, 2, \cdots, r \}$ and $\Lambda_{t,i} > 0 $ for $i \in \{r+1, r+2, \cdots, N \}$. Hence, we obtain
        \begin{align}
           \mathbb{E}[\Gamma_{ad}(t)] &=  \sumk{i=1}{N}p_i \mathbb{E}[\Gamma_{i,ad}(t)] \\
           &= \sumk{i=1}{r} p_i \mathbb{E}[\Gamma_{i,ad}(t)] + \sumk{i=r+1}{N}p_i \mathbb{E}[\Gamma_{i,ad}(t)] \\
            & \overset{(a)}{\leq} \sumk{i=1}{r}p_i \mathbb{E}[\Gamma_{i}(t)] + \sumk{i=r+1}{N}p_i \mathbb{E}[\Gamma_{i}^{*}(t)] \\
            & \overset{(b)}{\leq} \sumk{i=1}{r}p_i \mathbb{E}[\Gamma_{i}(t)] + \sumk{i=r+1}{N}p_i \mathbb{E}[\Gamma_{i}(t)] \\
            &=  \sumk{i=1}{N}p_i \mathbb{E}[\Gamma_i(t) ]\\
            & = \mathbb{E}[\Gamma(t)],
        \end{align}
        where (a) follows from Lem.~\ref{lemma:less-inter} and noting that for both the average and worst cases, $\mathbb{E}[\Gamma_{i,ad}(t)] = \mathbb{E}[\Gamma_i^{*}(t)]$, and (b) follows from $\mathbb{E}[\Gamma_{i}^{*}(t)] \leq \mathbb{E}[\Gamma_{i}(t)]$ for $i \in \{r+1, r+2, \cdots, N \}$. This proves that for both the average and the worst cases, using adaptive learning rates is better than constant learning rates.


\section{ADDITIONAL EXPERIMENTS}

In this section, we present additional experimental results. Unless specified otherwise, we run all tasks for $20$ communication rounds each, allowing each client to perform $2$ epochs of local updates. We consistently use a federated setup with $5$ clients, maintaining a $100\%$ client participation rate. For Split-CIFAR10 and Split-CIFAR100, we use a mini-batch size of $128$, whereas for Split-TinyImageNet, the mini-batch size is set to $32$. In this work, the theory of C-FLAG applies to training using gradient descent (GD). However, employing mini-batch GD is a common practice in machine learning to reduce the computational load on edge devices. For this, we adopt the ADAM optimizer for every local client. We use a class-balanced ring buffer memory with an initial size of $400$ samples per task for each client while sampling $200$ data points from the buffer at each step. Please note that the values in all the plots in this section and the main manuscript correspond to results from a single seed, while the values in the table represent averages over three seeds. The details of the hyperparameters used are provided in Table~\ref{tab:hyperparams}. We utilised the Avalanche library~\cite{avalanche} to generate different task sequences from the Split-CIFAR10, Split-CIFAR100, and Split-TinyImageNet datasets, which consist of 5, 5, and 10 tasks, respectively. The value of $L$, which is used for calculating adaptive learning rates, is chosen to be 5. We trained our models on NVIDIA A100-PCIE-40GB GPU. In the following subsections, we compare the performance of the proposed C-FLAG with the baselines presented in the main manuscript for the task-incremental and class-incremental settings.

\begin{table*}[h!]
\centering
\caption{Training hyperparameters for different datasets.}\label{tab:hyperparams}
\begin{tabular}{|c|c|c|c|}
\hline
Hyperparameter                    & Split-CIFAR10 & Split-CIFAR100 & Split-TinyImageNet  \\ \hline
Learning rate                 & 0.0001    & 0.0001    & 0.0001    \\ \hline
Seed                              &  1234, 1235, 1236 &  1234, 1235, 1236 & 1234, 1235, 1236 \\ \hline
\end{tabular}
\end{table*}


\begin{table}[!h]
\centering
\caption{\textbf{Task incremental}: Performance metrics on different datasets for non-IID and IID settings with $5$ clients and $5$, $5$, $10$ tasks, respectively. \emph{Acc} and \emph{Forget} denote average classification accuracy and average forgetting.}
\begin{tabularx}{\textwidth}{l*{8}{>{\centering\arraybackslash}X}}
\toprule
 & \multicolumn{2}{c}{Split-CIFAR10} & \multicolumn{2}{c}{Split-CIFAR100} & \multicolumn{2}{c}{TinyImageNet}  \\
\cmidrule(lr){2-3} \cmidrule(lr){4-5} \cmidrule(lr){6-7}
 & Acc\tiny{($\uparrow$)} & Forget\tiny{($\downarrow$)} & Acc\tiny{($\uparrow$)} & Forget\tiny{($\downarrow$)} & Acc\tiny{($\uparrow$)} & Forget\tiny{($\downarrow$)} \\
\midrule
\multicolumn{7}{c}{\textbf{Non-IID Setting}} \\
\midrule
Finetune-FL  & $54.10\pm5.78$ & $6.08\pm8.60$ & $38.66\pm1.60$ & $20.87\pm0.82$ & $23.72\pm1.82$ & $23.85\pm0.92$ \\
EWC-FL       & $53.55\pm4.56$ & $5.22\pm7.46$ & $38.83\pm1.33$ & $19.20\pm1.33$ & $26.94\pm1.91$ & $20.29\pm0.26$ \\
NCCL-FL       & $63.35\pm7.62$             & $12.37\pm0.83$            & $32.25\pm0.95$             & $29.49\pm1.46$             & $28.49\pm1.19$             & $\mf{8.73\pm0.28}$             \\
Erg-FL  & $\mf{79.11\pm 7.9}$ & $8.32\pm 7.42$ & $31.25\pm1.05$ & $32.40\pm 1.69$ & $21.90\pm 0.96$ & $19.67\pm 0.98$ \\
C-FLAG       & $65.02\pm6.22$            & $\mf{5.82\pm0.9}$            & $\mf{43.47\pm1.64}$             & $\mf{16.76\pm1.85} $            & $2\mf{8.63\pm0.91}$            & $9.52\pm0.89$             \\
\midrule
\multicolumn{7}{c}{\textbf{IID Setting}} \\
\midrule
Finetune-FL  & $72.64\pm0.98$  & $26.51\pm1.31$ & $49.82\pm0.51$ & $30.00\pm1.32$ & $30.17\pm0.69$ & $31.91\pm0.22$ \\
EWC-FL       & $75.98\pm5.10$  & $22.34\pm4.19$ & $50.48\pm1.10$ & $30.16\pm0.81$ & $33.18\pm0.99$ & $28.38\pm0.48$ \\
NCCL-FL       & $83.43\pm5.38$             & $17.10\pm4.64$            & $41.65\pm0.58$             & $39.23\pm0.62$             & $28.93 \pm 0.73$             & $9.51\pm0.98$             \\
FedTrack       & $79.86\pm 7.16$             & $17.62\pm4.46 $          & $29.85\pm3.32$             & $18.11\pm3.15$             & $33.32\pm1.43 $         & $28.57\pm4.04$            \\
Erg-FL  & $84.01\pm9.96$ & $16.50\pm 11.57$ & $38.70\pm1.35$ & $43.66\pm1.48$ & $31.25\pm 1.05$ & $32.40 \pm 1.69$ \\
C-FLAG       & $\mf{89.28\pm4.98}$              & $\mf{7.23\pm5.14}$             & $\mf{66.85 \pm 0.77}$             & $\mf{7.30 \pm 0.81} $            &    $\mf{44.3\pm 0.71}$        &  $\mf{7.65\pm0.78}$          \\
\bottomrule
\end{tabularx}
\label{tab:performance_task}
\end{table}


\begin{table}[H]
\centering
\caption{\textbf{Class incremental}: Performance metrics on different datasets for non-IID and IID settings with $5$ clients and $5$, $5$, $10$ tasks, respectively. \emph{Acc} and \emph{Forget} denote average classification accuracy and average forgetting.}
\begin{tabularx}{\textwidth}{l*{6}{>{\centering\arraybackslash}X}}
\toprule
 & \multicolumn{2}{c}{Split-CIFAR10} & \multicolumn{2}{c}{Split-CIFAR100}  \\
\cmidrule(lr){2-3} \cmidrule(lr){4-5}
 & Acc\tiny{($\uparrow$)} & Forget\tiny{($\downarrow$)} & Acc\tiny{($\uparrow$)} & Forget\tiny{($\downarrow$)} \\
\midrule

\multicolumn{5}{c}{\textbf{Non-IID Setting}} \\
\midrule
Finetune-FL  & $14.18\pm3.22$            & $\mf{2.31\pm4.82}$            & $16.24\pm1.03$ & $45.50\pm1.80$  \\
EWC-FL       & $12.92\pm2.53$            & $6.12\pm5.20$            & $17.25\pm1.29$ & $40.26\pm2.09$  \\
LwF-FL       & $12.45\pm3.54$            & $5.62\pm5.17$            & $22.38\pm3.66$ & $31.07\pm2.58$  \\
iCARL-FL     & $15.84\pm4.23$            & $50.75\pm12.95$          & $21.40\pm1.53$ & $32.46\pm2.23$  \\
TARGET       & $13.68\pm3.48$            & $9.73\pm7.04$            & $\mf{23.22\pm1.93}$ & $27.65\pm5.01$  \\
NCCL-FL      & $14.42\pm1.22$            & $76.93\pm2.46$           & $9.96\pm0.40$  & $47.76\pm1.05$  \\
FedTrack     & $12.21\pm1.04$            & $61.89\pm5.16$           & $3.53\pm0.18$  & $\mf{17.83\pm0.29}$  \\
C-FLAG       & $\mf{17.06\pm2.63}$                       & $43.04\pm13.27$                       & $13.94\pm1.13$             & $51.49\pm0.72$              \\
\midrule
\multicolumn{5}{c}{\textbf{IID Setting}} \\
\midrule
Finetune-FL  & $24.25\pm4.89$            & $55.54\pm13.09$          & $20.04\pm1.30$ & $65.67\pm1.61$  \\
EWC-FL       & $25.28\pm4.07$            & $57.04\pm5.44$           & $22.20\pm1.00$ & $62.69\pm1.35$  \\
LwF-FL       & $43.01\pm8.21$            & $32.07\pm16.54$          & $\mf{35.86\pm0.98}$ & $35.72\pm0.81$  \\
iCARL-FL     & $\mf{47.98\pm2.43}$            & $49.92\pm3.16$           & $24.13\pm1.72$ & $48.89\pm1.47$  \\
TARGET       & $20.08\pm3.12 $           & $21.56\pm10.38$          & $31.04\pm2.94$ & $26.75\pm5.61$  \\
NCCL-FL      & $16.91\pm0.50$            & $93.15\pm2.10$           & $12.40\pm0.39$ & $60.61\pm0.83$  \\
FedTrack     & $14.54\pm1.33$            & $81.48\pm3.64$           & $5.49\pm0.26$  & $\mf{22.02\pm1.73}$  \\
C-FLAG       & $26.37\pm3.80$   & $\mf{7.66\pm7.55}$                       & $31.68\pm1.54$             & $47.97\pm 2.64$              \\
\bottomrule
\end{tabularx}
\label{tab:performance_class}
\end{table}

Tables \ref{tab:performance_task} and \ref{tab:performance_class} present the average accuracy performance for both task-incremental and class-incremental settings, along with standard deviation values. Our findings consistently show that the proposed approach effectively mitigates catastrophic forgetting while achieving higher average accuracy than the baselines in the task-incremental setup. In the class-incremental setup, the results are comparable overall, with our approach outperforming the baselines in select cases. 
\begin{figure}[H]
    \centering
    \includegraphics[scale=0.3]{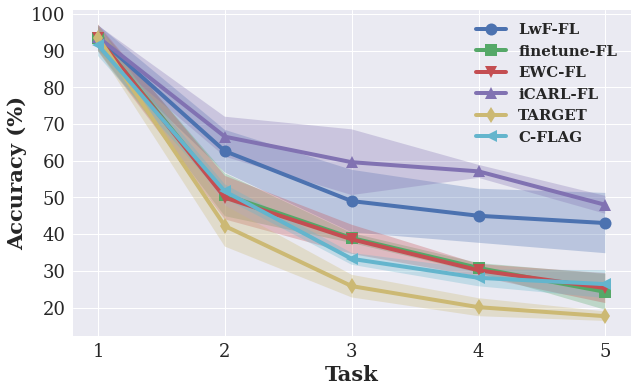}
    \includegraphics[scale=0.3]{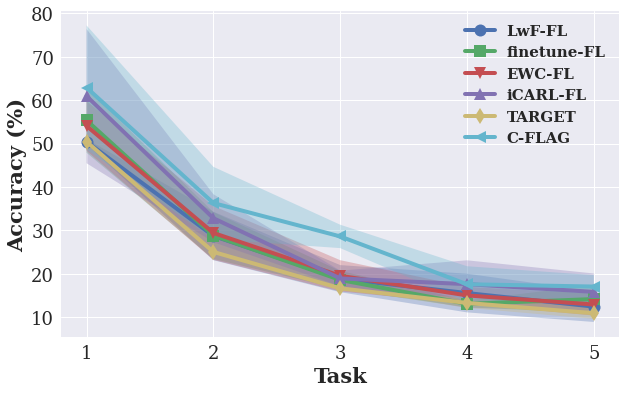}
\caption{ \centering Average accuracy plots on Split-CIFAR10 (Left: IID, Right: non-IID) in the class-incremental setup.}
    \label{fig:cifar100_avgaccuracy_class} 
\end{figure}
Fig.~\ref{fig:cifar100_avgaccuracy_class} illustrates the average accuracy performance of our method on the Split-CIFAR10 dataset in the class-incremental setup, demonstrating its superiority over the baselines in the non-IID setting. The baselines considered include class-incremental techniques such as TARGET, iCARL-FL, and LwF-FL.

\section{ADDITIONAL ABLATION STUDIES}

In this section, we provide additional ablation studies on the task-incremental setting.

\subsection{Varying Memory Sizes}

We provide the results for varying memory sizes ($m_0$) in Fig.~\ref{fig:memory_rebuttal}. We have analysed $6$ combinations of memory sample sizes on Split-CIFAR10 and Split-CIFAR100 datasets {{for both IID and non-IID settings.}} {Note that we have fixed the sampling size as half of the corresponding memory size.} It is observed that varying memory size affects the final average accuracy and forgetting. The general trend indicates an improvement in accuracy as we increase the size of the memory buffer. This trend is more evident in the non-IID partitioning scenario. The Split-CIFAR100 dataset shows greater sensitivity to memory size variations compared to Split-CIFAR10 in both IID and non-IID settings. This is likely due to the higher complexity and larger number of classes in CIFAR100, which demand more memory for mitigating catastrophic forgetting.

\begin{figure}[H]
    \centering
    \includegraphics[scale=0.28]{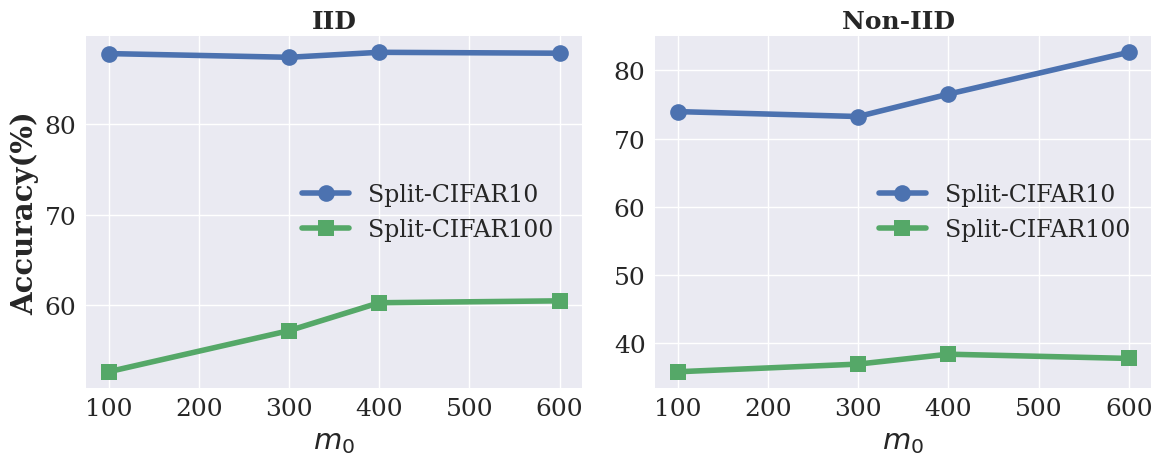}
    \includegraphics[scale=0.28]{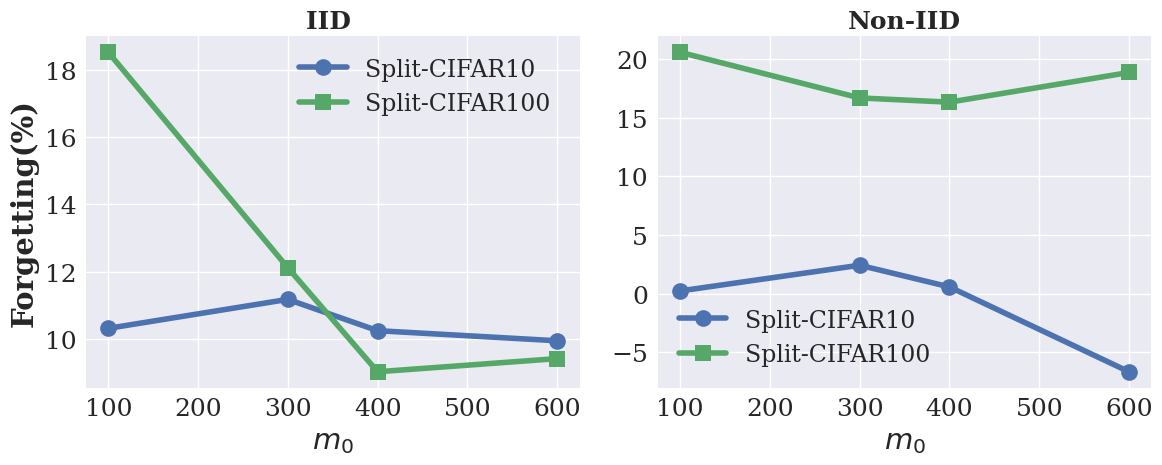}
\caption{\centering Average accuracies (left) and forgetting (right) for varying memory sizes ($m_0$).}
    \label{fig:memory_rebuttal}
\end{figure}

From Fig.~\ref{fig:memory_rebuttal}, we observe that in the non-IID Split-CIFAR10 setting, forgetting becomes slightly negative for a larger memory size ($600$), indicating that a large memory size reinforces patterns that encourage improved transference. Note that the negative value of forgetting indicates that the current model performs extremely well. We refer to this phenomenon as transference in Sec.~\ref{sec:adapLR}.

\subsection{Varying Number of Clients}

\begin{table}[H]
\centering 
\caption{\centering Varying number of clients for the Split-CIFAR10 and Split-CIFAR100 datasets}
\begin{tabular}{|l|l|l|l|l|l|}
\hline
Dataset                  & Clients-50   & Clients-25   & Clients-15   & Clients-10   & Clients-5     \\ \hline
Split-CIFAR10 (IID)      & (90.67,2.39) & (90.77,3.53) & (91.05,4.28) & (90.52,5.22) & (87.89,10.25) \\ \hline
Split-CIFAR100 (Non-IID) & (35.20,2.28) & (32.60,4.27) & (35.48,6.38) & (35.95,8.58) & (38.43,16.33) \\ \hline
\end{tabular}
\label{table:cross_device}
\end{table}

Our proposed C-FLAG framework is agnostic to the number of clients and can be applied in cross-silo and cross-device settings. In the main manuscript, we have demonstrated the effect of different numbers of clients in Fig.~\ref{fig:memory_hetero} (left) of Sec.~\ref{sec:ExperiResults}, where we observed that the proposed technique is robust to the change in the number of clients. In other words, varying the number of clients does not deteriorate the continual learning abilities of the proposed method. Here, we are extending the analysis by accommodating $25$ and $50$ clients to demonstrate that C-FLAG can be implemented in cross-device settings as well. For instance, in the case of IID partitioning of the Split-CIFAR10 dataset and non-IID partitioning of the Split-CIFAR100 dataset, we obtain the average accuracies and forgetting as mentioned in Table~\ref{table:cross_device}.

\end{document}